\newif\ifdraft
\renewcommand{\citep}[1]{\cite{#1}}
\renewcommand{\citet}[1]{\cite{#1}}
\newtheorem{theorem}{Theorem}
\newtheorem*{theorem*}{Theorem}
\newtheorem{lemma}{Lemma}
\newtheorem{proposition}{Proposition}
\newtheorem{definition}{Definition}
\newtheorem{remark}{Remark}
\newtheorem{example}{Example}
\title{Structures of Spurious Local Minima in $k$-means}
\author{Wei Qian$^\dagger$, Yuqian Zhang$^\sharp$, Yudong Chen$^\dagger$ \footnote{Authors' emails: \texttt{wq34@cornell.edu, yqz.zhang@rutgers.edu, yudong.chen@cornell.edu}}\\ ~\\
\large $^\dagger$School of Operations Research and Information Engineering, Cornell University\\
\large $^\sharp$Department of Electrical and Computer Engineering, Rutgers University
}
\date{}
\begin{document}
\newif\ifdraft
\drafttrue   
\newcommand{\blue}[1]{\textcolor{blue}{#1}}
\newcommand{\red}[1]{\textcolor{red}{#1}}
\newcommand{\green}[1]{\textcolor{green}{#1}}
\newcommand{\magenta}[1]{\textcolor{magenta}{#1}}
\newcommand{\orange}[1]{\textcolor{orange}{#1}}
\newcommand{\wq}[1]{\ifdraft {\bf{{\magenta{{Wei:  #1}}}}}\else\fi}
\newcommand{\yz}[1]{\ifdraft {\bf{{\blue{{Yuqian:  #1}}}}}\else\fi}
\newcommand{\yc}[1]{\ifdraft {\bf{{\red{{Yudong:  #1}}}}}\else\fi}

\newcommand{\ddup}{\textup{d}}%
\newcommand\real{\mathbb{R}}%
\newcommand{\mP}{\mathbb{P}}%
\newcommand{\E}{\mathbb{E}}%
\newcommand{\mB}{\mathbb{B}}%
\newcommand{\mS}{\mathbb{S}}%
\newcommand{\ba}{\bm{a}}%
\newcommand{\bb}{\bm{b}}%
\newcommand{\bc}{\bm{c}}%
\newcommand{\bs}{\bm{s}}%
\newcommand{\bu}{\bm{u}}%
\newcommand{\bv}{\bm{v}}%
\newcommand{\w}{\bm{w}}%
\newcommand{\x}{\bm{x}}%
\newcommand{\y}{\bm{y}}%
\newcommand{\z}{\bm{z}}%
\newcommand{\X}{\bm{X}}%
\newcommand{\bbeta}{\bm{\beta}}%
\newcommand{\balpha}{\bm{\alpha}}%
\newcommand{\bmu}{\bm{\mu}}%
\newcommand{\bnu}{\bm{\nu}}%
\newcommand{\bzero}{\bm{0}}%
\newcommand{\cV}{\mathcal{V}}%
\newcommand{\dist}{\textup{dist}}%
\newcommand{\revol}{\textup{ReVol}}%
\newcommand{\vol}{\textup{Vol}}%
\newcommand{\indic}{\mathds{1}}%
\newcommand{\hess}{\mathbb{H}}%
\newcommand{\linspace}{\mathcal{L}}%
\newcommand{\partition}{\bm{S}}%

\newcommand{\vor}{\cV}%
\newcommand{\ball}{\mB}%
\newcommand{\sphere}{\mS}%
\newcommand{\deltamax}{\Delta_{\max}}%
\newcommand{\deltamin}{\Delta_{\min}}%
\newcommand{\snrmax}{\eta_{\max}}%
\newcommand{\snrmin}{\eta_{\min}}%
\newcommand{\rad}{r}%
\newcommand{\std}{\sigma}%

\newcommand{\Wint}{A}
\newcommand{\Wext}{B}
\newcommand{\tail}{\varphi}

\maketitle

\begin{abstract}

$k$-means clustering is a fundamental problem in unsupervised learning. The problem concerns finding a partition of the data points into $k$ clusters such that the within-cluster variation is minimized. Despite its importance and wide applicability, a theoretical understanding of the $k$-means problem has not been completely satisfactory. Existing algorithms with theoretical performance guarantees often rely on sophisticated (sometimes artificial) algorithmic techniques and restricted assumptions on the data. The main challenge lies in the non-convex nature of the problem; in particular, there exist additional local solutions other than the global optimum. Moreover, the simplest and most popular algorithm for $k$-means, namely Lloyd's algorithm, generally converges to such spurious local solutions both in theory and in practice.

In this paper, we approach the $k$-means problem from a new perspective, by investigating the \emph{structures} of these spurious local solutions under a probabilistic generative model with $k$ ground truth clusters. As soon as $k=3$, spurious local minima provably exist, even for well-separated and balanced clusters. One such local minimum puts two centers at one true cluster, and the third center in the middle of the other two true clusters. For general $k$, one local minimum puts multiple centers at a true cluster, and one center in the middle of multiple true clusters. Perhaps surprisingly, we prove that this is essentially the \emph{only} type of spurious local minima under a separation condition. Our results pertain to the $k$-means formulation for mixtures of Gaussians or bounded distributions. Our theoretical results corroborate existing empirical observations and provide justification for several improved algorithms for $k$-means clustering.
\end{abstract}


\section{Introduction\label{sec:intro}}

$k$-means clustering is one of the most fundamental problems in unsupervised learning, with a wide range of applications in multiple fields including machine learning, image analysis, computer graphics and beyond; see \citet{jain2010data} and the references therein. The $k$-means problem can be formulated as follows: given $n$ data points $\x_{1},\ldots,\x_{n}\in\real^{d}$, find $k$ centers $\bbeta=(\bbeta_{1},\ldots,\bbeta_{k})\in\real^{d\times k}$ such that the following sum of squared distances is minimized:\footnote{Another common way of formulating the $k$-means problem involves finding a partition of the data points into $k$ clusters such that the within-cluster sum of squared distance is minimized. This partition-based formulation is equivalent to the center-based formulation~(\ref{eq:k-means-formulation}) used in this paper, as we show in Appendix~\ref{sec:Equivalence-to-standard-k-means}.}
\begin{equation}
G_n(\bbeta):=\sum_{i=1}^{n}\min_{j\in[k]}\|\x_{i}-\bbeta_{j}\|^{2}.\label{eq:k-means-formulation}
\end{equation}
The $k$-means objective function~\eqref{eq:k-means-formulation} is non-convex: it involves the minimization of quadratic functions and is symmetric with respect to permutation of the indices of components of $\bbeta$. This optimization problem is known to be NP-hard in general~\citep{dasgupta2008hardness,mahajan2009planar,awasthi2015hardness}. It has been observed that standard algorithms for $k$-means often converge to spurious local solutions of~\eqref{eq:k-means-formulation} that are not globally optimal~\citep{macqueen1967some, hartigan1979algorithm}. Moreover, these local minima of $k$-means are prevalent in practice~\citep{steinley2003local,steinley2006profiling}.

Recent theoretical work has made progress in understanding the $k$-means and related clustering problems with \emph{two} clusters. In particular, if the data is generated from a balanced mixture of two identical and spherical Gaussians, the work in \citep{chaudhuri2009learning,xu2016global,daskalakis2016ten} effectively shows that there is no spurious local minima, and that greedy algorithms such as the Lloyd's algorithm and Expectation-Maximization~(EM) are guaranteed to converge to a global minimizer from a random initialization. However, as soon as there are more than two clusters, non-trivial spurious local solutions do exist, even when the ground truth clusters are well-separated and balanced. Worst yet, these spurious local solutions may have objective values arbitrarily worse than the global optimum, and randomly-initialized greedy algorithms may provably converge to these local solutions with high probability~\citep{jin2016local}.

Despite above negative results, not all hope is lost. In this paper, we show that even with a general number of clusters, a lot can be said about the structural properties of these spurious local minima. In particular, under certain mixture models, we prove that \emph{all} spurious local minima of $k$-means are well-behaved, in the sense that they possess the same type of structure that partially recover the global minimum. We elaborate below.

\subsection{Main Contributions}

Consider the $k$-means problem under the following probabilistic generative model. Let $\bbeta_{1}^{*},\ldots,\bbeta_{k}^{*}\in\real^{d}$ be $k$ distinct unknown true cluster centers. For each $s\in[k]$, let $f_{s}$ be the density of a distribution with mean $\bbeta_{s}^{*}$. Each data point $\x\in\real^{d}$ is sampled independently from a mixture $f$ of these distributions $\{f_{s}\}_{s\in[k]}$, with the density 
\begin{equation}
f(\x)=\frac{1}{k}\sum_{s=1}^{k}f_{s}(\x).\label{eq:ground_truth_density}
\end{equation}
Note that if each $f_{s}$ is a Gaussian distribution centered at $\bbeta_{s}^{*}$, the above distribution reduces to the (balanced/equally-weighted) Gaussian Mixture Model (GMM). Under the generative model~(\ref{eq:ground_truth_density}), we consider the following population version of the $k$-means objective function:
\begin{align}
\label{eq:pop-kmean}
G(\bbeta) & =\int\min_{j\in[k]}\|\x-\bbeta_{j}\|^{2}f(\x)\ddup\x
  =\frac{1}{k}\sum_{s=1}^{k}\int\min_{j\in[k]}\|\x-\bbeta_{j}\|^{2}f_{s}(\x)\ddup\x.
\end{align}
The objective function above can be viewed as the infinite-sample ($n\to\infty$) limit of the empirical objective function in equation~\eqref{eq:k-means-formulation}. Note that this population objective is still non-convex.

\paragraph{Existence of spurious local minima.}

Under general conditions, the ground truth centers $\bbeta^{*}=(\bbeta_{1}^{*},\ldots,\bbeta_{k}^{*})\in\real^{d\times k}$ and any permutation thereof are (close to) a global minimum of $G$; see Proposition~\ref{prop:truth_is_global_opt}. However, there exist additional spurious local minima, even in the simple one-dimensional setting with $k=3$ clusters and when the densities $\left\{ f_{s}\right\}_{s\in[k]} $ have bounded and disjoint supports. In particular, we show that one spurious local minimum $\bbeta=(\bbeta_1, \bbeta_2, \bbeta_3)$ has the following configuration:
\begin{equation}\label{eq:bad_local_min}
\bbeta_{1}\approx\bbeta_{2}\approx\bbeta_{1}^{*}\qquad\text{and}\qquad\bbeta_{3}\approx\frac{\bbeta_{2}^{*}+\bbeta_{3}^{*}}{2}.
\end{equation}
In words, this local solution uses two centers to fit one true cluster, and the third center to fit the other two true clusters. See Proposition~\ref{prop:existence} for details. A similar observation was made in \cite{jin2016local} for the log-likelihood objective function of Gaussian mixtures.

\paragraph{Structures of spurious local minima.}

The above local solution involves disjoint many-fit-one and one-fit-many associations. Perhaps surprisingly, we show that this is essentially the \emph{only} type of spurious local minima for $k$-means with a general $k$ under a separation condition:
\begin{theorem*}[Informal]
\label{thm:informal}For well-separated mixture models, all non-degenerate local minima $\bbeta = (\bbeta_{1},\ldots,\bbeta_{k})$ of $G$ must have the following form: (i) multiple centers $\{\bbeta_{j}\}$ lie near a true cluster $\bbeta_{s}^{*}$, or (ii) one center $\bbeta_{j}$ lies near the mean of multiple true clusters $\{\bbeta_{s}^{*}\}$. Moreover, the configurations (i) and (ii) involve disjoint sets of $\bbeta_{j}$'s and $\bbeta_{s}^{*}$'s.
\end{theorem*}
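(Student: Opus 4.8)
The plan is to read the structure off the first-order (stationarity) conditions for $G$, combined with the sharp concentration of each mixture component that the separation hypothesis provides, and to invoke one second-order ingredient to discard the single family of configurations that stationarity alone cannot exclude. The first-order conditions are easy to record: writing $\cV_j(\bbeta)=\{\x:\|\x-\bbeta_j\|\le\|\x-\bbeta_\ell\|\ \forall\ell\}$ for the Voronoi cell and $w_j:=\int_{\cV_j(\bbeta)}f$, one has $G(\bbeta)=\sum_{j\in[k]}\int_{\cV_j(\bbeta)}\|\x-\bbeta_j\|^2 f(\x)\,\ddup\x$ by definition of the partition, and since the integrand is continuous across cell boundaries the boundary terms in the gradient vanish, so at any local minimum with all $w_j>0$ (part of the non-degeneracy hypothesis),
\begin{equation*}
\bbeta_j \;=\; \frac{1}{w_j}\int_{\cV_j(\bbeta)}\x\,f(\x)\,\ddup\x \;=\; \E\!\left[\x\mid\x\in\cV_j(\bbeta)\right],\qquad j\in[k];
\end{equation*}
each center is the centroid of its own cell. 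Using the Gaussian or bounded-support tail bounds and $\Delta:=\min_{s\ne t}\|\bbeta_s^*-\bbeta_t^*\|$, I would then show each $f_s$ puts all but an $\epsilon$ fraction of its mass $\tfrac1k$ inside a ball $B_s=\mathbb{B}(\bbeta_s^*,r_s)$ with $r_s,\epsilon$ negligible against $\Delta$; this is where the separation assumption is spent, and it fixes how large $\Delta$ must be relative to $k$ and $d$.

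For the geometric skeleton, fix a true center $\bbeta_s^*$ and sort the distances $d_1\le d_2\le\cdots$ from it to the $k$ centers. If the gap $d_2-d_1>2r_s$, then $B_s$ lies entirely in the cell of the nearest center $\bbeta_{j_1}$; if instead $d_2-d_1\le 2r_s$, then at least two centers are nearly equidistant from $\bbeta_s^*$ and $B_s$ may be split among them. In the first case the centroid identity forces $\bbeta_{j_1}$ to be, up to the $r_s,\epsilon$ errors, the weighted mean of exactly those true centers whose balls $\cV_{j_1}$ wholly contains (any $\Omega(1/k)$ contribution from a cluster not wholly captured would be a split, handled below; any negligible sliver is absorbed into ``$\approx$''): if that set is a single $\bbeta_s^*$ then $\bbeta_{j_1}\approx\bbeta_s^*$ (a recovered cluster), and if it has size $\ge 2$ then $\bbeta_{j_1}$ sits near the mean of several true centers --- configuration (ii). This alone does not yet give the theorem: it still allows a concentrated ball $B_s$ to be split between two far-apart centers that are each serving a distinct cluster, a configuration obeying every first-order condition but violating disjointness.

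The second-order ingredient rules this out. The key lemma is that, at a local minimum, a concentration ball $B_s$ cannot carry a non-negligible fraction of its mass in each of two distinct cells unless both of those centers lie within $O(r_s)$ of $\bbeta_s^*$. To prove it I would compute the directional second derivative of $G$ along the ``tipping'' perturbation that nudges the two competing centers so as to move the Voronoi boundary across $B_s$; the decisive term is a surface integral of $f$ over the part of that boundary inside $B_s$, which is $\Omega(1/k)$-large precisely because the boundary passes through the high-density region near $\bbeta_s^*$, while the stabilizing terms are $O(r_s^2)$ --- so the second derivative is negative and the point is not a local minimum. Combining with the skeleton: every $B_s$ is either wholly in one cell or split among $\ge 2$ centers all within $O(r_s)$ of $\bbeta_s^*$ (the many-fit-one blocks, configuration (i)); a center serving two or more balls sits near their mean and hence far from each, so it cannot also be a near-$\bbeta_s^*$ center of a many-fit-one block; and a near-$\bbeta_s^*$ center cannot additionally be the sole server of another cluster, since its centroid would then be dragged a distance $\gtrsim\Delta$ from $\bbeta_s^*$. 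Hence the centers and the true clusters split into disjoint blocks of types (i) and (ii), which is the claim. (A short companion perturbation rules out ``spare'' centers whose cell carries only tail mass, which is why such degenerate points are excluded.)

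I expect the main obstacle to be the second-order lemma --- carrying out the Voronoi-boundary surface-integral estimate with enough precision to conclude that the tipping direction is strictly descending, uniformly over all the ways $B_s$ could be split among cells and over the positions of the competing centers. The remaining difficulties are more routine: bookkeeping the $r_s$ and $\epsilon$ errors through the chain of centroid and geometry estimates so that a $k$-fold accumulation does not swamp the gap $\Delta$ (this is what pins down the precise separation threshold), and handling the unbounded Gaussian tails, where the boundary terms in both the gradient and the Hessian must be controlled by truncation rather than by compact-support arguments.
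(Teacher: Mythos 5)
Your proposal follows essentially the same route as the paper: the first-order centroid condition (the paper's Lemma~\ref{lem:necessary}), truncation of each component to a concentration ball, a second-order perturbation argument showing that a Voronoi boundary cutting a concentrated cluster with non-negligible cross-section forces the two competing centers close to that cluster's center (the paper's Propositions~\ref{prop:same-direction} and~\ref{prop:opposite-direction}, yielding the inequalities $d_{i,j}\rho_{s}(\partial_{i,j})\le k/2$ and $D_{i,j,s}^{2}\rho_{s}(\partial_{i,j})/d_{i,j}\le k/2$), and the final combinatorial bookkeeping for disjointness. The only detail worth noting is that the paper needs \emph{two} perturbation families --- a translation of the boundary (same directions) to bound the inter-center distance and a rotation (opposite directions) to bound the boundary's distance to the cluster center --- so your single ``tipping'' perturbation should be understood as encompassing both.
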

See Theorems~\ref{thm:main_ball} and \ref{thm:main_gaussian} for the precise statement of this result. In words, viewing a solution $\bbeta$ as an assignment of the centers $\{\bbeta_{j}\}$ for fitting the ground truth clusters, we show that a local minimum $\bbeta$ can only involve many-fit-one associations (case (i) above) and one-fit-many associations (case (ii) above), and any fitted center $\bbeta_{s}$ or true cluster $\bbeta_{s}^{*}$ only participates in one of these associations. Any other solution $\bbeta$ with many-fit-many associations \emph{cannot} be a local minimum.


\begin{figure}
\begin{centering}
\begin{tabular}{|c|c|c|c|}
\hline 
 {\scriptsize 1a} \includegraphics[viewport=110bp 200bp 540bp 650bp,clip,scale=0.2]{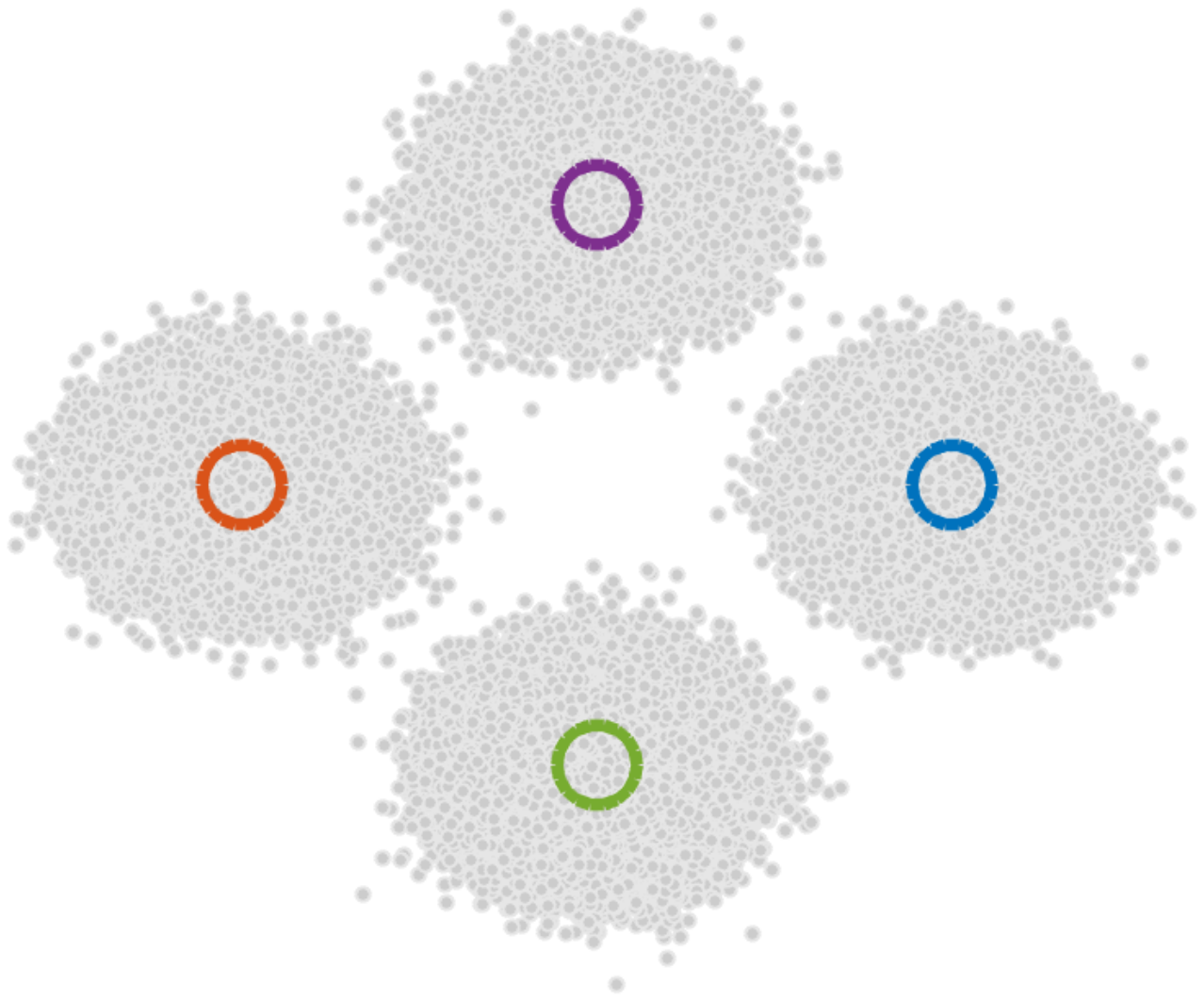} & {\scriptsize 1b} \includegraphics[viewport=110bp 200bp 540bp 650bp,clip,scale=0.2]{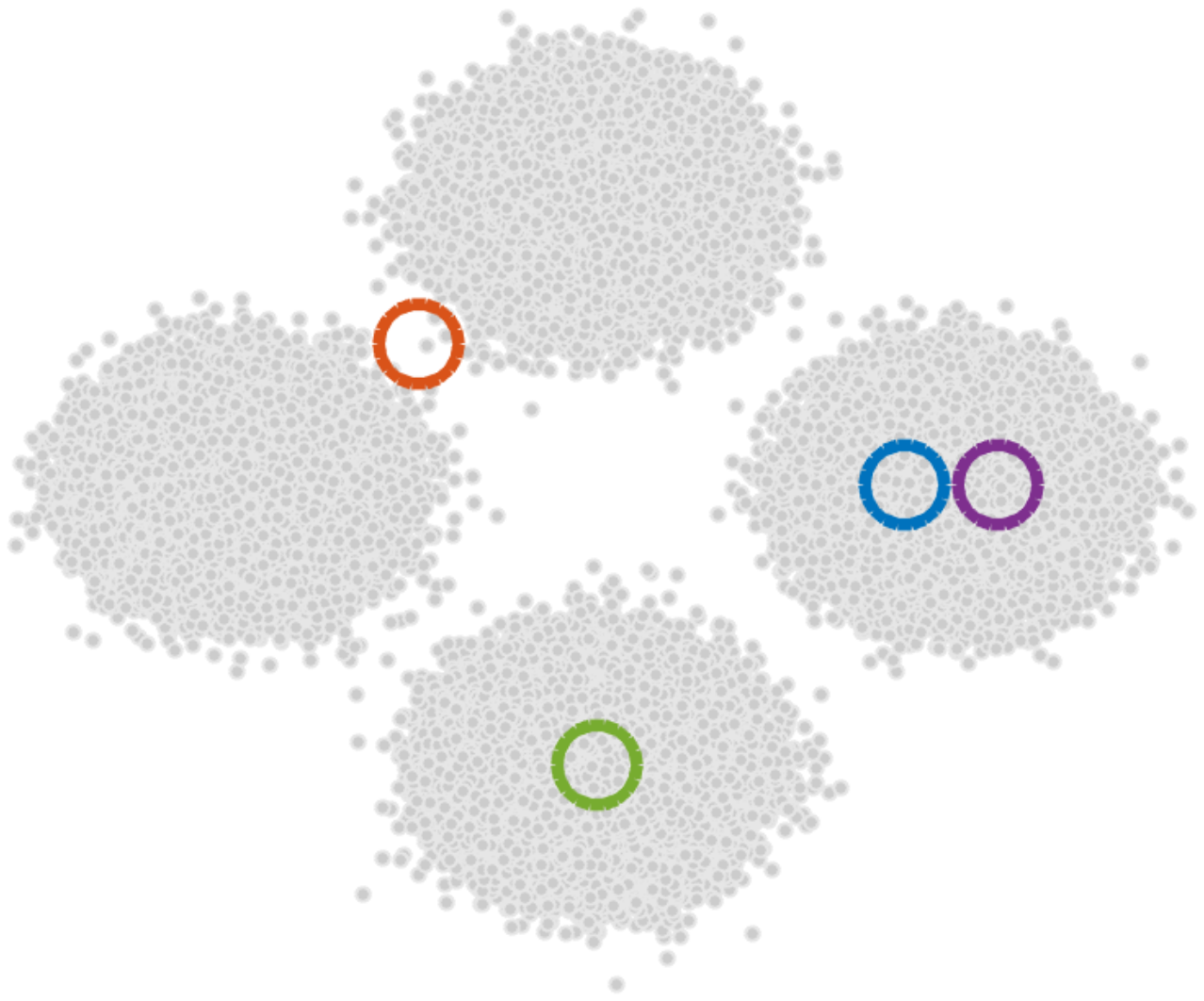} & {\scriptsize 1c} \includegraphics[viewport=110bp 200bp 540bp 650bp,clip,scale=0.2]{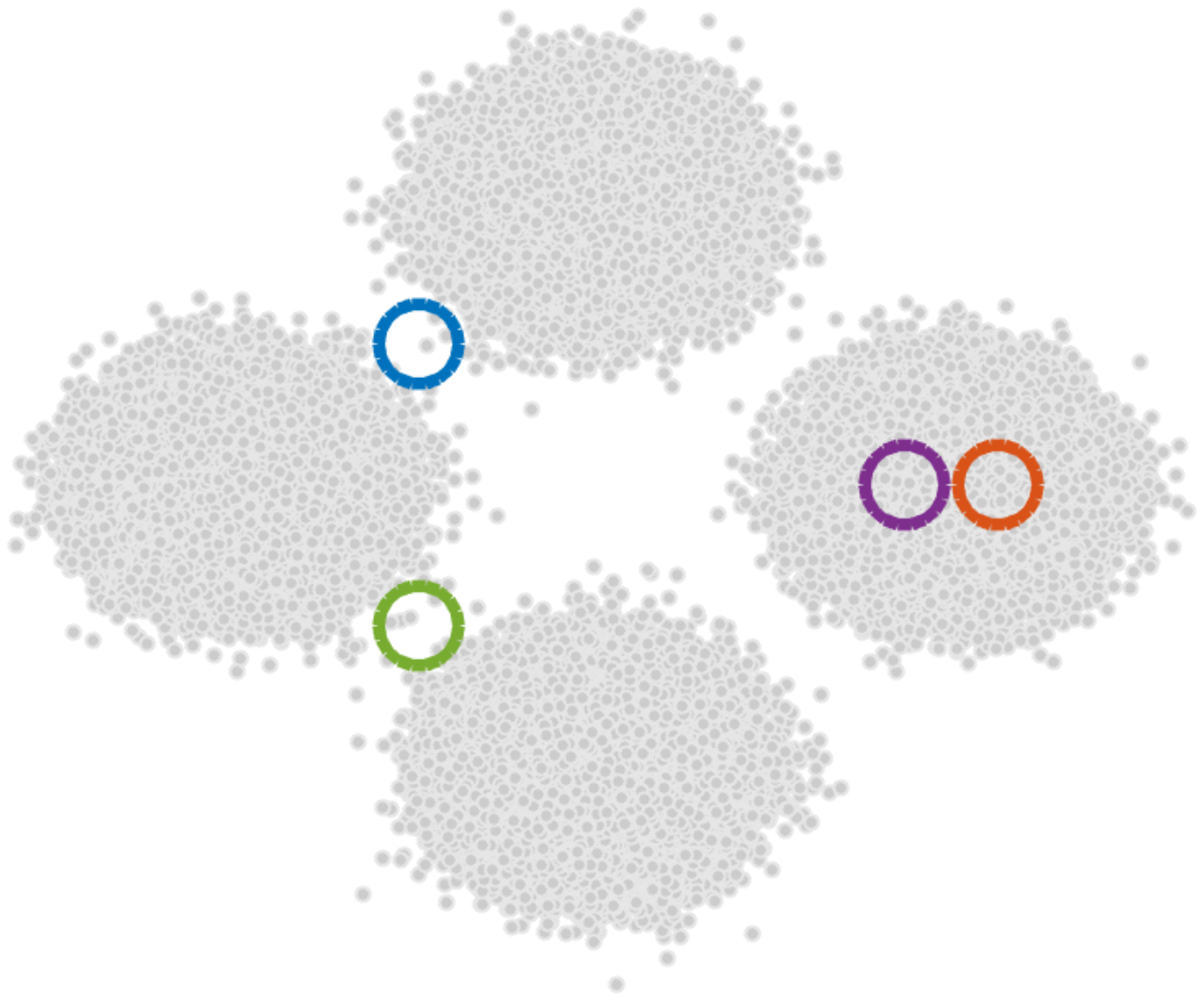} & {\scriptsize 1d} \includegraphics[viewport=110bp 200bp 540bp 650bp,clip,scale=0.2]{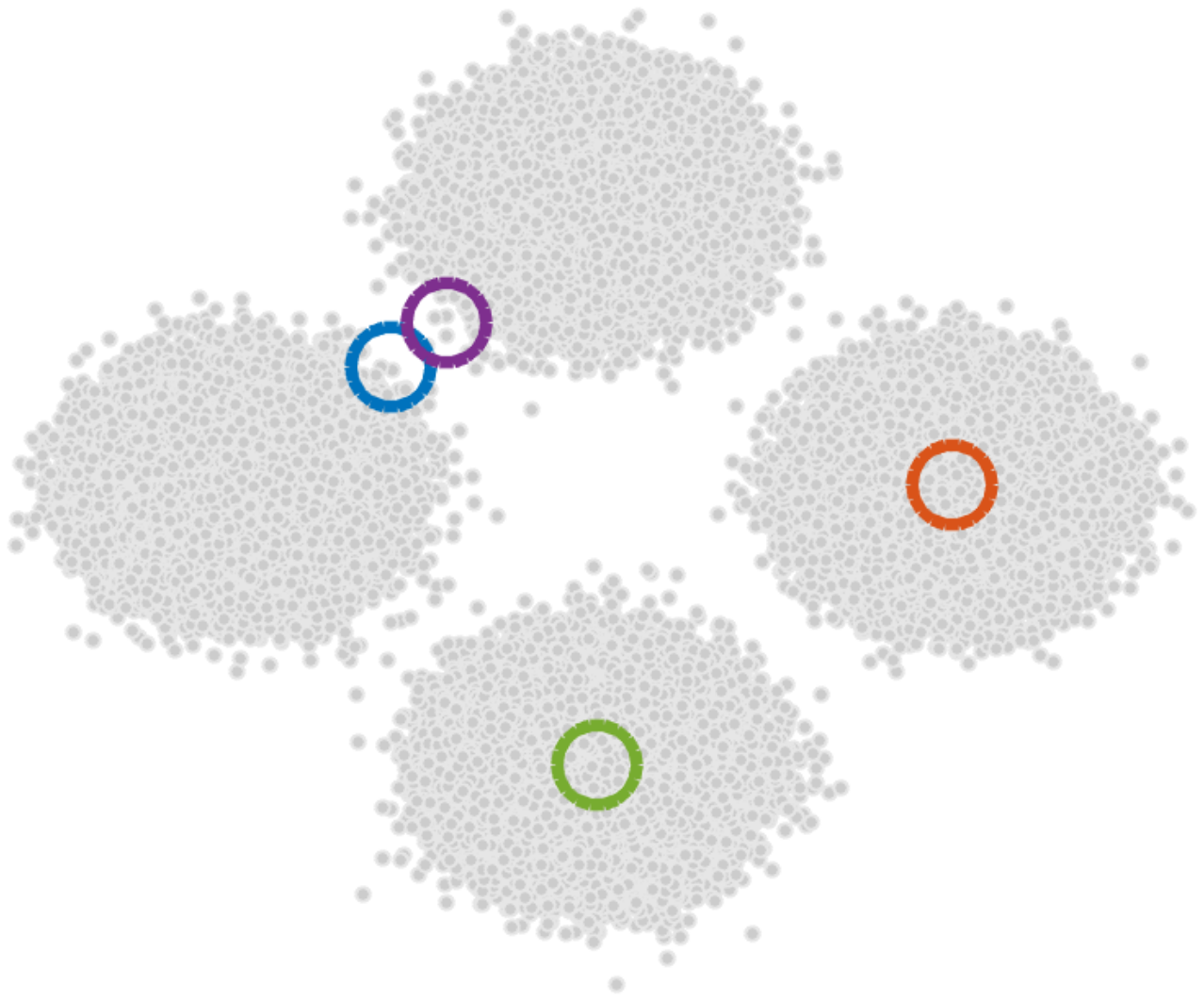}
\tabularnewline

\hline 
Global Min & Spurious Local Min & \multicolumn{2}{c|}{Not Local Min}\tabularnewline
\hline 

\hline 
 {\scriptsize 2a} \includegraphics[viewport=110bp 200bp 540bp 650bp,clip,scale=0.2]{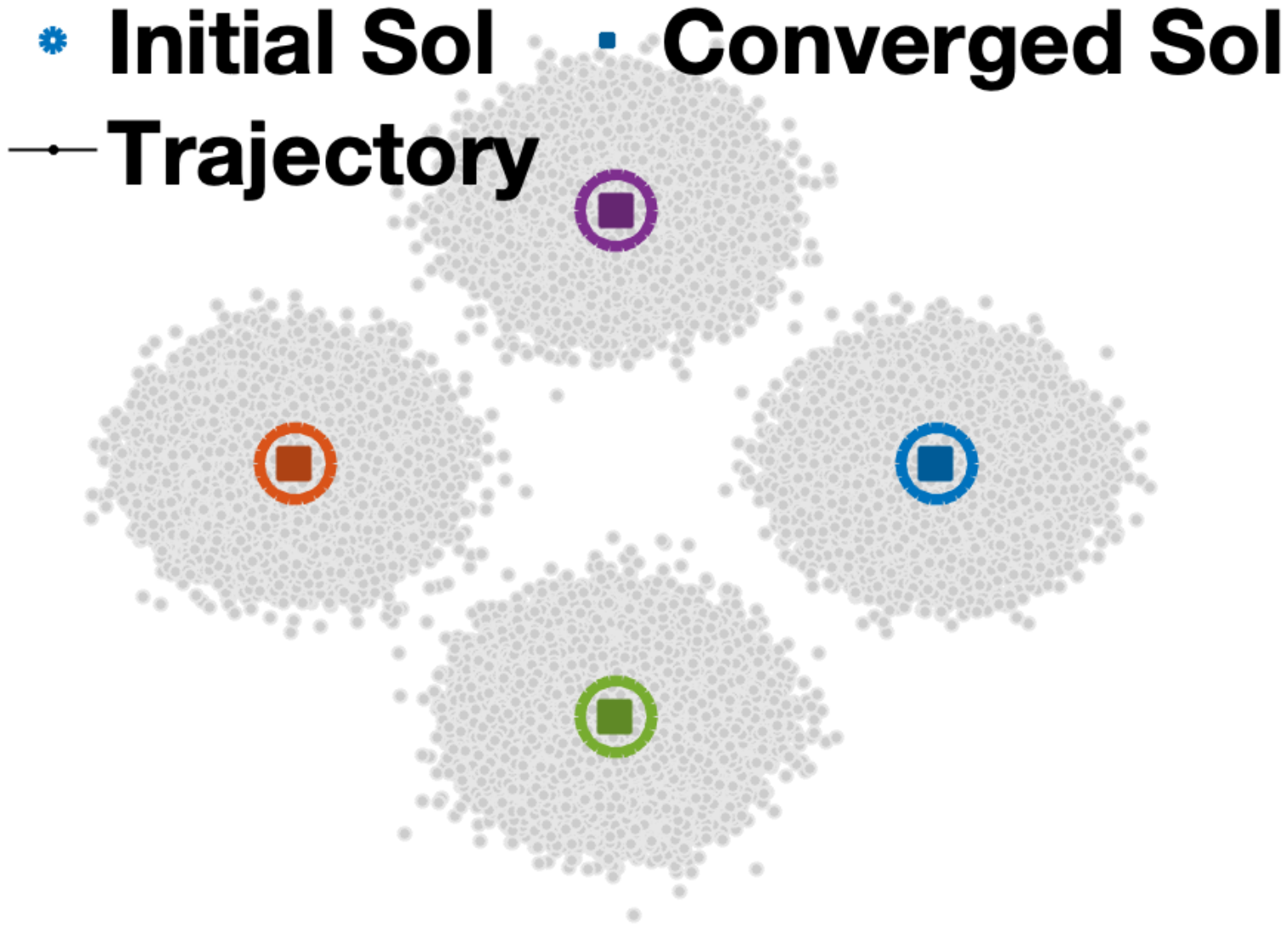} 
& {\scriptsize 2b} \includegraphics[viewport=100bp 200bp 550bp 650bp,clip,scale=0.2]{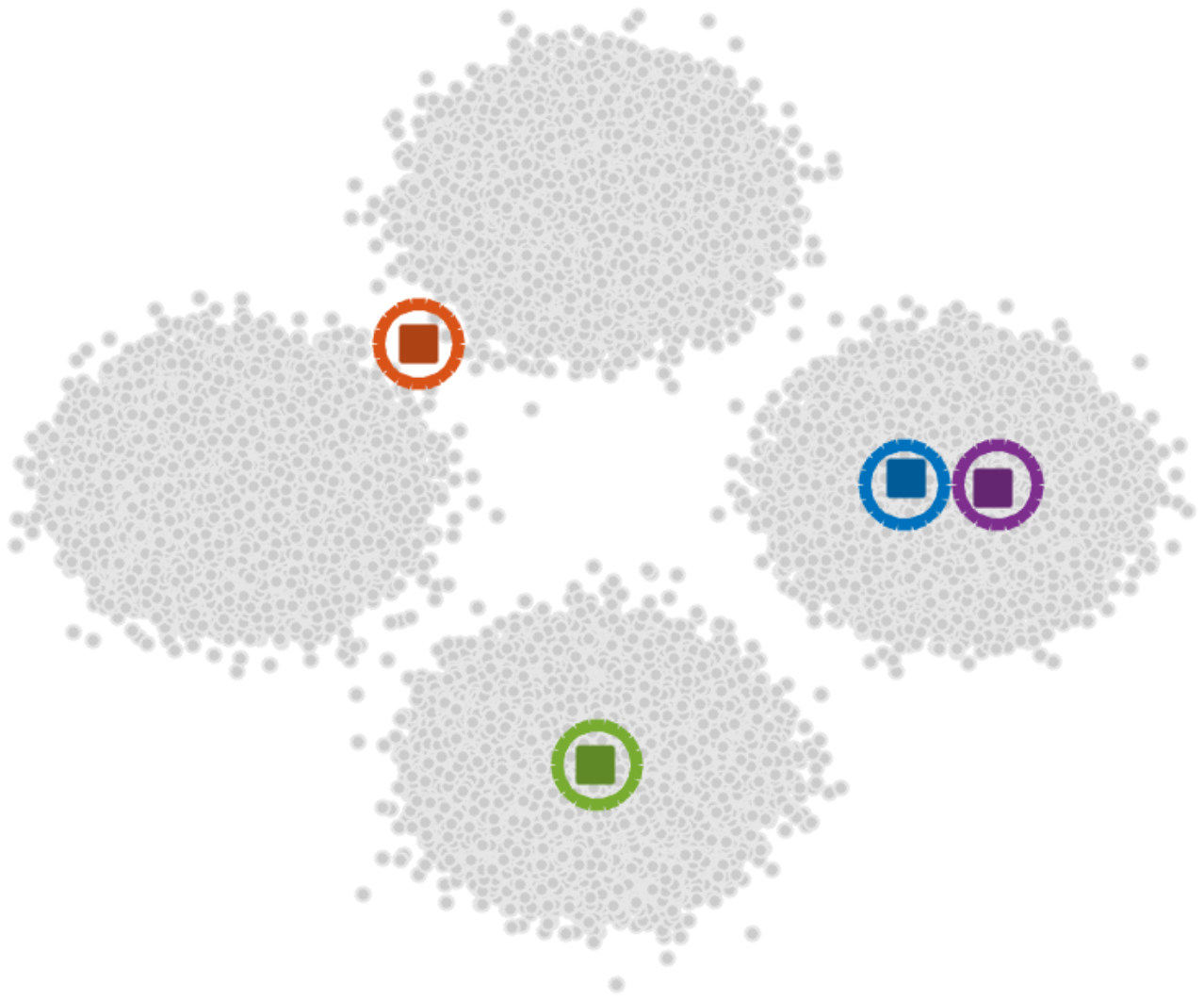}
& {\scriptsize 2c}\includegraphics[viewport=110bp 200bp 540bp 650bp,clip,scale=0.2]{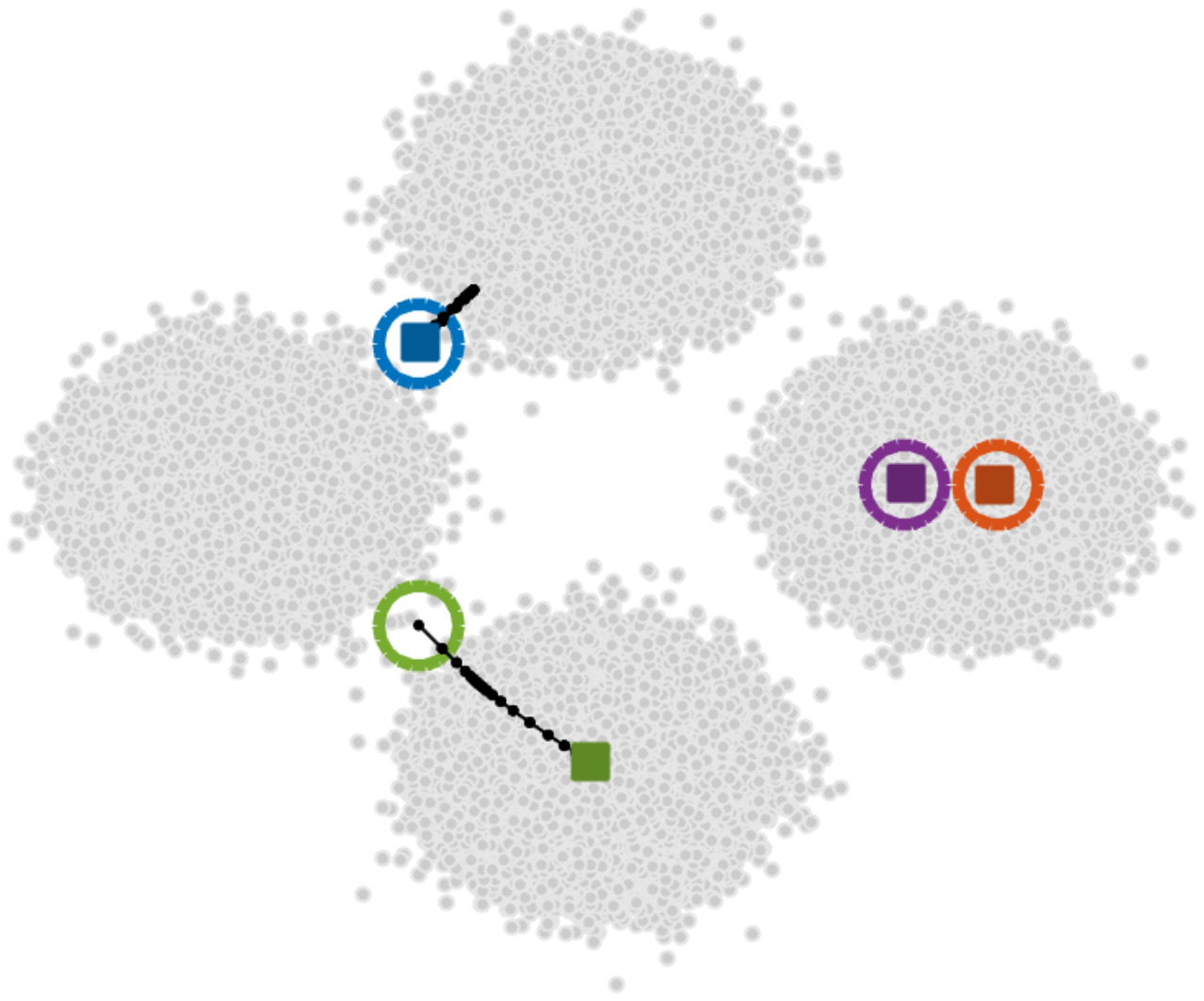} 
& {\scriptsize 2d} \includegraphics[viewport=110bp 200bp 540bp 650bp,clip,scale=0.2]{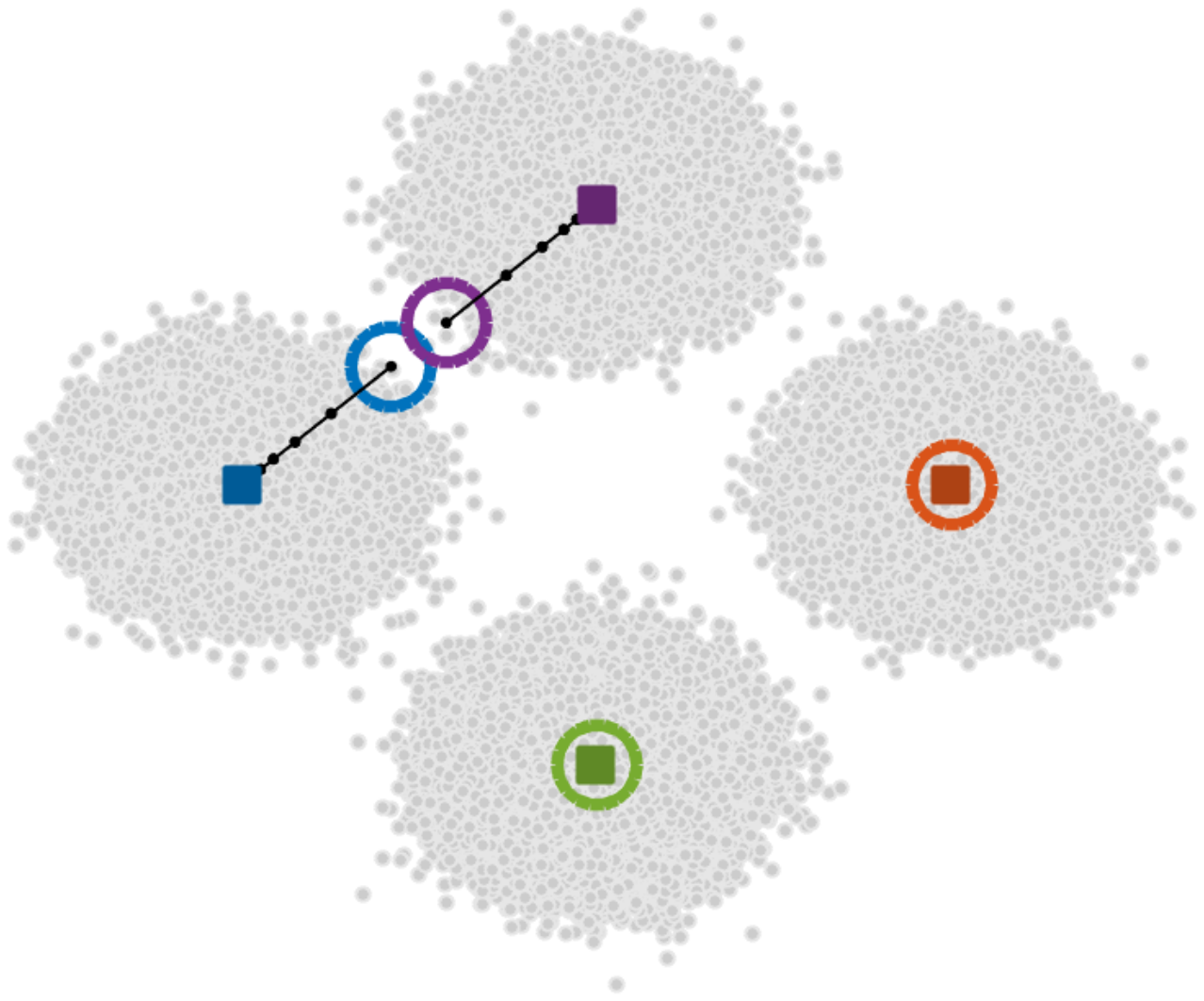}\tabularnewline
\hline 
\end{tabular}
\par\end{centering}
\caption{\label{fig:illustration} \emph{Top panels:} Local minima and non-minima in GMM with 4 components. Solutions with many-fit-many configurations are not local minima. \emph{Bottom panels:} Trajectory of greedy algorithm when initialized at different solutions. The colored circles correspond to an initial configuration of $\bbeta$. Running the Lloyd's $k$-means algorithm from this initialization converges to a solution denoted by colored squares. The black lines correspond to the trajectory of the intermediate iterates. The algorithm escapes from non-minima and converges to a global or local minimum.}
\end{figure}

We illustrate the above results under a two-dimensional GMM with 4 components in Figure~\ref{fig:illustration}. The top panels show different candidate solutions of $k$-means. The ground-truth centers are the only global minimum, as in Panel~1a. Panel~1b shows a spurious local minimum, where the orange center fits two clusters, and the blue and purple centers fit one cluster. In Panel~1c, the blue and green centers together fit 3 clusters; in Panel~1d, the blue and purple centers together fit 2 clusters. These two solutions contain many-fit-many associations and are \emph{not} local minima.

For further verification, we run the Lloyd's algorithm~\cite{lloyd1982least} with the above four solutions as the initial solution. The Lloyd's algorithm is an iterative greedy method that alternates between assigning each data point to its closest center and updating the centers to be the means of the new clusters. It can be viewed as a quasi-Newton algorithm applied to the objective function~\eqref{eq:k-means-formulation} with a specific choice of step size~\cite{bottou1995kmeans}. The bottom panels in Figure~\ref{fig:illustration} show the trajectories of intermediate solutions of Lloyd's algorithm and the final solutions they converge to. When initialized at a global or local minimum, the algorithm stays at the initial solutions as expected (Panels~2a and~2b). In Panel~2c, the algorithm escapes from the initial solution, which is not a local minimum, and then converges to the spurious local minimum plotted in Panel~1b. In Panel~2d, the algorithm again escapes from the initial solution and converges to the globally optimal ground-truth solution plotted in Panel~1a.\\

We conclude this section by mentioning that Srebro~\cite{srebro2007question} posted the following question in 2007: are all local optimal solutions of the population GMM likelihood function globally optimal? In general the answer has been shown to be negative~\cite{jin2016local}, as demonstrated by an example similar to that in equation~\eqref{eq:bad_local_min}. However, our results above provide a positive message in the context of the $k$-means objective (a limit version of the log-likelihood function; see Section~\ref{sec:models_ball}): all local minima \emph{partially} recover the global minimum, in the sense that they identify some of true cluster centers and the means of the other true cluster centers; again see Panel~2b in Figure~\ref{fig:illustration} for an illustration.

\subsection{Related Work \label{sec:related}}

With a history of more than 50 years~\citep{lloyd1982least,macqueen1967some}, the $k$-means problem has found broad applications in computer science, astronomy, biology, social science and beyond. We refer to the papers in~\citet{steinley2006k,jain2010data} for a comprehensive survey of the work on this problem.

Without additional assumptions on the data points, optimizing the $k$-means objective in~\eqref{eq:k-means-formulation} is NP-hard when the number of components $k$ is fixed~\citep{dasgupta2008hardness} or when the dimension $d$ is fixed~\citep{mahajan2009planar}. Even finding a $(1+\epsilon)$ approximation with varying $(k,d)$ is hard~\citep{awasthi2015hardness}. Progress has been made on designing constant-ratio approximation algorithms; see, e.g., the results in~\citet{kumar2004simple,kanungo2004local} among many others.

Lloyd's algorithm \citep{lloyd1982least}, often called \emph{the} $k$-means algorithm, is arguably the most popular algorithm for the $k$-means problem. In general,  Lloyd's algorithm is only guaranteed to converge to a local minimum of the $k$-means objective and is sensitive to initialization~\citep{milligan1980examination}. Moreover, it may take exponentially many steps to converge in the worst case~\citep{har2005fast, arthur2006slow}. Under certain probabilistic assumptions of the data, several theoretical guarantees have been established for the Lloyd's algorithm~\citep{chaudhuri2009learning, kumar2010clustering, lu2016statistical}. There is also substantial work on  designing provably efficient initialization schemes for Lloyd's algorithm~\citep{arthur2007k, ostrovsky2012effectiveness}.  Particularly relevant to us is the work in~\citep{dasgupta2007probabilistic}, which considers over-parametrized $k$-means/EM (which fits $k$ clusters using more than $k$ centers) equipped with extra pruning steps. Interestingly, the fitted centers they try to prune correspond to, in our language, many-fit-one associations (as well as almost-empty associations; see our main theorems). As Lloyd's algorithm finds local minima of $k$-means, our results can be used to characterize the structural properties of the output of Lloyd's. Note that our results are in fact more general, applicable to the general $k$-means objective function (with or without over-parametrization) and hence are not tied to a specific algorithm.

We mention that recent work also considers convex relaxation methods for the $k$-means problem based on linear or semidefinite programming~\citep{peng2005new,peng2007approximating,elhamifar2012finding}. Theoretical guarantees have been established on when the solution of the convex program coincides with (or approximates) the global minimum of $k$-means~\citep{nellore2015recovery,awasthi2015relax,li2020birds,fei2018hidden}.

As mentioned, the $k$-means objective function can be viewed as a ``hard'' or limit version of the negative log-likelihood function for the Gaussian Mixture Model (GMM); see Section~\ref{sec:models_gaussian}. As such, our results are related to recent theoretical work on the Expectation-Maximization (EM) algorithm~\citep{dempster1977maximum}, which is a local/greedy algorithm for optimizing the likelihood function. Positive results have been obtained for provable convergence of EM under GMM with $k=2$ components~\citep{balakrishnan2017statistical,xu2016global,daskalakis2016ten,qian2019global,kwon2019global}. In particular, these results suggest that the negative log-likelihood function has  no spurious local minima for a balanced mixture of two Gaussians with the same covariance matrix. However, in more general mixture models, it has been proved that spurious local minima do exist with high probability. Examples include a mixture of $k\ge3$ equally weighted components~\citep{jin2016local}, and a mixture of $k=2$ unequally weighted components with known mixing weights~\citep{xu2018benefits}.

\section{Problem Setup\label{sec:models and set up}}
In this section, we introduce the statistical models and notations
for our main results.
We shall consider two concrete instantiations of the mixture model in equation~(\ref{eq:ground_truth_density}).

\subsection{Ball Mixture} \label{sec:models_ball}

The first instantiation is a mixture of uniform distributions on $k$ disjoint balls. For each $\bu\in\real^{d}$, let $\ball_{\bu}(\rad)$ denote the Euclidean ball centered at $\bu$ with radius $\rad$. As the true centers $\{\bbeta_{s}^{*}\}_{s\in[k]}$ and the radius $\rad$ are fixed throughout this paper, we use the shorthand $\ball_{s}\equiv\ball_{\bbeta_{s}^{*}}(\rad)$ for brevity. We assume that each data point $\x$ is sampled independently and uniformly from one of $k$ disjoint balls centered at the true centers $\bbeta_{s}^{*}$; that is, $\x\sim\text{unif}\left(\ball_{s}\right)$ with probability $\frac{1}{k}$. 

This model, sometimes called the Stochastic Ball Model~\citep{nellore2015recovery}, is formally described below.
\begin{definition}[Stochastic Ball Model]
\label{def:SBM}
The Stochastic Ball Model is the mixture~(\ref{eq:ground_truth_density}) where each component has density 
\begin{equation*}
f_{s}(\x)=\frac{1}{\vol(\ball_{s})}\indic_{\ball_{s}}(\x),\quad s\in[k].
\end{equation*}
\end{definition}
Here $\vol(T)$ denotes the volume of a set $T\subseteq\real^{d}$ with respect to the Lebesgue measure, and and $\indic_{T}$ is the indicator function for $T$.

\subsection{Gaussian Mixture}\label{sec:models_gaussian}
The second instantiation is the (spherical) Gaussian mixture model, where each data point $\x$ is sampled independently from one of $k$ Gaussian distributions whose means are the true centers $\{\bbeta_{s}^{*}\}$; that is, $\x\sim\mathcal{N}(\bbeta_{s}^{*},\std^{2}\bm{I})$ with probability  $\frac{1}{k}$. A formal description of GMM is given below.
\begin{definition}[Gaussian Mixture Model]
\label{def:GMM}
The (spherical) Gaussian Mixture Model is the mixture~(\ref{eq:ground_truth_density}) where each component has density
\[
f_{s}(\x)=
\frac{1}{(\sqrt{2\pi}\std)^{d}}\exp\left(-\frac{\|\x-\bbeta_{s}^{*}\|^{2}}{2\std^{2}}\right),\quad s\in[k].
\]

\end{definition}
We point out that the population negative likelihood function of GMM (with a positive variance parameter $\tau^2$), namely\footnote{Here we ignore an constant additive term independent of the variable  $\bbeta$.} 
\begin{equation*}
    L_\tau (\bbeta) := 
    - \int \log \bigg[\sum_{j\in[k]} \exp\left(-\frac{\|\x-\bbeta_{j}\|^{2}}{2\tau^{2}}\right) \bigg] f(\x) \ddup \x ,
\end{equation*}
is closely related to the population $k$-means objective function $G$ defined in equation~\eqref{eq:pop-kmean}. As the log-sum-exp function above is a form of soft maximum, $L_\tau$ is a smooth approximation of $G$. Moreover, as $\tau \to 0$, we have $2\tau^2  L_\tau (\bbeta) \to G(\bbeta)$ for all $\bbeta$. In other words, the $k$-means objective function corresponds to the limit case of the GMM log-likelihood function, and hence results for one have immediate bearing upon the other.

\subsection{Model Parameters}

For both of the above models, we define the quantities 
\[
\deltamax:=\max_{s\neq s'}\|\bbeta_{s}^{*}-\bbeta_{s'}^{*}\|\qquad\text{and}\qquad\deltamin:=\min_{s\neq s'}\|\bbeta_{s}^{*}-\bbeta_{s'}^{*}\|,
\]
which are the maximum and minimum pairwise separations between the true centers. Accordingly, we introduce two quantities measuring the  Signal-to-Noise Ratios (SNR) of the models. In particular, for the Stochastic Ball Model we define
\begin{align*}
\snrmax:=\frac{\deltamax}{\rad}\qquad\text{and}\qquad\snrmin:=\frac{\deltamin}{\rad},
\end{align*}
which are the maximum and minimum separations normalized by the radius of the balls. For the Gaussian Mixture Model, we similarly define
\begin{align*}
\snrmax:=\frac{\deltamax}{\std \sqrt{\min(2k,d)}}\qquad\text{and}\qquad\snrmin:=\frac{\deltamin}{\std\sqrt{\min(2k,d)}}.
\end{align*}
Note the $\sqrt{\min(2k,d)}$ factor in the denominators above. This factor is the typical value of the norm of a random vector from a $d$-dimensional standard Gaussian distribution when projected to the $2k$-dimensional subspace spanned by the true and fitted centers $\{\bbeta^*_s\}_{s\in[k]}$ and $\{\bbeta_i\}_{i\in[k]}$. 

The above models are sometimes said to be \emph{well-separated} if $\snrmin = \Omega(1)$~\citep{jin2016local}. Also note that the ratio $\frac{\snrmax}{\snrmin}\in[1,\infty)$ measures how evenly-spaced the true centers are. In particular, this ratio is close to $1$ when the true centers are approximately evenly spaced. 

\subsection{\label{sec:voronoi}Voronoi sets}

Each candidate solution $\bbeta=(\bbeta_{1},\ldots,\bbeta_{k})$ of the $k$-means problem induces a  \emph{Voronoi diagram}, namely, a partition of the space $\real^{d}$ based on proximity to the $\bbeta_s$'s. The Voronoi diagram plays a crucial role in understanding the $k$-means objective~(\ref{eq:pop-kmean}), which is defined by the quantity $\min_{j\in[k]}\|\x-\bbeta_{j}\|$, the distance of a point $\x$ to its closest center. Here we review some basic concepts related to Voronoi diagrams, which are useful for future development.

Given a set of $k$ centers  $\bbeta=(\bbeta_{1},\ldots,\bbeta_{k})\in\mathbb{\real}^{d\times k}$ in $\real^{d}$, let $\vor_{i}(\bbeta)$ be the region consisting of points that are closer to $\bbeta_{i}$ than to any other center $\bbeta_{j}$, $j\neq i$. Formally, for each $i\in[k]$ we define
\begin{equation}
\vor_{i}(\bbeta):=\{\x\in\real^{d}:\|\x-\bbeta_{i}\|\leq\|\x-\bbeta_{j}\|,\forall j\neq i\}.\label{eq:Voronoi-set-associated-with-pi}
\end{equation}
We call each $\vor_{i}(\bbeta)$ the \emph{Voronoi set} associated with $\bbeta_{i}$. The \emph{Voronoi diagram} of $\bbeta$ is the collection of the Voronoi sets, that is,  $\vor(\bbeta) := \{\vor_{i}(\bbeta): i\in[k]\}.$ Note that each Voronoi set is a polyhedron in $\real^d$ with at most $k-1$ facets,\footnote{A facet is a $(d-1)$ dimensional face of a $d$-dimensional polyhedron.} as we can rewrite the definition in~(\ref{eq:Voronoi-set-associated-with-pi}) as 
\[
\vor_{i}(\bbeta)=\{\x\in\real^{d}:2\langle\bbeta_{j}-\bbeta_{i},\x\rangle\leq\|\bbeta_{j}\|^{2}-\|\bbeta_{i}\|^{2},\forall j\neq i,j\in[k]\}.
\]

In addition, for each index pair $(i,j):i\neq j$, we define the \emph{Voronoi boundary} $\partial_{i,j}(\bbeta)$ as the intersection of the Voronoi sets  associated with $\bbeta_{i}$ and $\bbeta_{j}$; that is,
\[
\partial_{i,j}(\bbeta):=\vor_{i}(\bbeta)\cap\vor_{i}(\bbeta)=\{\x\in\vor_{i}(\bbeta)\cup\vor_{j}(\bbeta):\|\x-\bbeta_{i}\|=\|\x-\bbeta_{j}\|\}.
\]
Note that $\partial_{i,j}(\bbeta)$ is the set of points with equal distance to $\bbeta_{i}$ and $\bbeta_{j}$. If $\partial_{i,j}(\bbeta)$ has dimension $d-1$, we say that $\vor_{i}(\bbeta)$ is \emph{adjacent} to $\vor_{j}(\bbeta)$, written as $\vor_{i}(\bbeta)\sim\vor_{j}(\bbeta)$. In this case, the two Voronoi sets $\vor_{i}(\bbeta)$ and $\vor_{j}(\bbeta)$ intersect at a common (full dimensional) facet of the two polyhedra.  We use the notation $\partial(\bbeta): = \{\partial_{i,j}(\bbeta): \vor_{i}(\bbeta)\sim\vor_{j}(\bbeta)\}$ to denote the collection of the Voronoi boundaries of adjacent Voronoi sets.

\section{\label{sec:main}Main Results}

In this section, we present our main theoretical results on the structures of the local minima of the population $k$-means objective $G$ defined in equation~\eqref{eq:pop-kmean}. In what follows, we use $\mP$ to denote the probability measure with respect to the distribution of the ground truth mixture, whose density is $f$. Similarly, for each $s\in[k]$, we use $\mP_{s}$ to denote the probability measure with respect to the distribution of the $s$-th ground truth cluster, whose density is $f_{s}$.

\subsection{Stochastic Ball Model}\label{sec:main_sbm}
Consider the Stochastic Ball Model in Definition~\ref{def:SBM}. We first state two basic results concerning the global and local minima of the $k$-means objective $G$. The first proposition, proved in Appendix~\ref{sec:proof_truth_global_opt}, states that the ground truth centers is the only global minimum of $G$.
\begin{proposition}[Ground truth is global minimum]
\label{prop:truth_is_global_opt} 
Under the Stochastic Ball Model, if $\snrmin\ge 6\sqrt{k}$, then the true centers $\bbeta^{*}=(\bbeta_1^*,\ldots,\bbeta_k^*) \in \real^{d\times k}$ (up to permutation of its components) is the unique global minimum of $G$.
\end{proposition}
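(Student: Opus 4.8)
The plan is to evaluate $G$ at the ground truth, and then show that every other configuration is strictly worse via a short combinatorial dichotomy followed by a one-ball estimate. First I would compute $G(\bbeta^{*})$: since $\snrmin\ge 6\sqrt{k}$ forces $\deltamin>2\rad$, the balls $\ball_{1},\dots,\ball_{k}$ are pairwise disjoint and each $\ball_{s}$ lies in the Voronoi cell of $\bbeta_{s}^{*}$ (a point of $\ball_{s}$ is within $\rad$ of $\bbeta_{s}^{*}$ and more than $\deltamin-\rad>\rad$ from every other $\bbeta_{s'}^{*}$), so on $\ball_{s}$ the integrand in~\eqref{eq:pop-kmean} equals $\|\x-\bbeta_{s}^{*}\|^{2}$ and averaging over $\mathrm{unif}(\ball_{s})$ gives the second moment of a radius-$\rad$ ball about its center; hence $G(\bbeta^{*})=\tfrac{d}{d+2}\rad^{2}=:v$. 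By permutation-invariance of $G$, it then suffices to prove $G(\bbeta)>v$ for every $\bbeta$ that is not a permutation of $\bbeta^{*}$. I will write $k\,G(\bbeta)=\sum_{s\in[k]}C_{s}$ with $C_{s}:=\E_{\x\sim\mathrm{unif}(\ball_{s})}\min_{j\in[k]}\|\x-\bbeta_{j}\|^{2}$ and bound each $C_{s}$ from below.

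The combinatorial core is a dichotomy on how the fitted centers are distributed among the true centers. For each $s$ put $P_{s}:=\{\,j\in[k]:\|\bbeta_{j}-\bbeta_{s}^{*}\|\le\deltamin/2\,\}$; since $\|\bbeta_{s}^{*}-\bbeta_{s'}^{*}\|\ge\deltamin$, the triangle inequality makes $P_{1},\dots,P_{k}$ pairwise disjoint, so $\sum_{s}|P_{s}|\le k$. If some $P_{s_{0}}$ is empty, then every $\bbeta_{j}$ is at distance $>\deltamin/2$ from $\bbeta_{s_{0}}^{*}$, hence $\min_{j}\|\x-\bbeta_{j}\|>\deltamin/2-\rad\ge 2\sqrt{k}\,\rad$ for all $\x\in\ball_{s_{0}}$, which already gives $k\,G(\bbeta)\ge C_{s_{0}}>4k\rad^{2}>kv$ and we are done. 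Otherwise every $P_{s}$ is nonempty, and together with disjointness and $\sum_{s}|P_{s}|\le k$ this forces $|P_{s}|=1$ for every $s$; writing $P_{s}=\{\pi(s)\}$ makes $\pi$ a bijection of $[k]$, so after relabeling the $\bbeta_{j}$'s I may assume $\pi=\mathrm{id}$, giving $\|\bbeta_{s}-\bbeta_{s}^{*}\|\le\deltamin/2$ \emph{and} $\|\bbeta_{j}-\bbeta_{s}^{*}\|>\deltamin/2$ for all $j\ne s$. Extracting this simultaneous matching-and-separation purely by counting (rather than by a perturbation argument) is the step I would be most careful to get right.

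With the matching in hand, set $\tau:=\deltamin/2-\rad$, which satisfies $\tau\ge 2\sqrt{k}\,\rad$, and write $a:=\|\bbeta_{s}-\bbeta_{s}^{*}\|$. For $\x\in\ball_{s}$ and $j\ne s$ we have $\|\x-\bbeta_{j}\|\ge\|\bbeta_{j}-\bbeta_{s}^{*}\|-\rad>\tau$, hence the pointwise bound $\min_{j}\|\x-\bbeta_{j}\|^{2}\ge\min(\|\x-\bbeta_{s}\|^{2},\tau^{2})$; the main obstacle is really recognizing that this bound is the right move, since the tempting alternative -- case-analyzing whether each ball $\ball_{s}$ is split among several fitted Voronoi cells -- is much more painful, whereas here $\tau^{2}\ge 4k\rad^{2}>v$ makes the cut pattern irrelevant. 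If $a\le\tau-\rad$, then $\|\x-\bbeta_{s}\|\le a+\rad\le\tau$ throughout $\ball_{s}$, so $C_{s}\ge\E\|\x-\bbeta_{s}\|^{2}=v+a^{2}\ge v$, with equality only if $a=0$. If $a>\tau-\rad$, then $a-\rad\le\deltamin/2-\rad=\tau$ (as $a\le\deltamin/2$) while $\|\x-\bbeta_{s}\|\ge a-\rad$ pointwise, so $C_{s}\ge(a-\rad)^{2}$, and $a-\rad>\deltamin/2-3\rad\ge 3(\sqrt{k}-1)\rad\ge\rad$ for $k\ge 2$, giving $C_{s}>\rad^{2}>v$ (the case $k=1$ is immediate, since then $G(\bbeta)=v+\|\bbeta_{1}-\bbeta_{1}^{*}\|^{2}$). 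Thus $C_{s}\ge v$ for every $s$, so $k\,G(\bbeta)\ge kv$; and since $\bbeta$ is not a permutation of $\bbeta^{*}$, at least one $a_{s}>0$, which by the two cases above makes the corresponding $C_{s}>v$ strictly. Hence $G(\bbeta)>v=G(\bbeta^{*})$, identifying $\{\text{permutations of }\bbeta^{*}\}$ as exactly the set of global minimizers.
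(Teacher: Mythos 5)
Your proof is correct, but it takes a genuinely different route from the paper's. The paper only upper-bounds $G(\bbeta^*)\le\rad^2$, takes $\bbeta$ to be a global minimizer, and uses the resulting value bound $G(\bbeta)\le\rad^2$ to show that every true center has a fitted center within $2\sqrt{k}\rad$; a pigeonhole argument then yields the bijection, the containment $\ball_s\subseteq\vor_s(\bbeta)$ follows, and the first-order stationarity condition of Lemma~\ref{lem:necessary} (each $\bbeta_s$ is the center of mass of its Voronoi cell) forces $\bbeta_s=\bbeta_s^*$ exactly. You instead compute $G(\bbeta^*)=\frac{d}{d+2}\rad^2$ exactly and prove the strict inequality $G(\bbeta)>G(\bbeta^*)$ directly for \emph{every} non-permutation $\bbeta$: your dichotomy (some ball orphaned versus a bijective matching at threshold $\deltamin/2$) replaces the paper's use of optimality to extract the matching, and your pointwise truncation $\min_j\|\x-\bbeta_j\|^2\ge\min(\|\x-\bbeta_s\|^2,\tau^2)$ replaces both the containment $\ball_s\subseteq\vor_s(\bbeta)$ and the appeal to Lemma~\ref{lem:necessary}. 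Your version is self-contained and quantitative --- it yields an explicit optimality gap in terms of the displacements $a_s$ and applies to arbitrary competitors, not just minimizers --- at the cost of a slightly longer per-ball case analysis. One cosmetic fix: define $P_s$ with strict inequality $\|\bbeta_j-\bbeta_s^*\|<\deltamin/2$. With ``$\le$'' the sets $P_s$ need not be disjoint in the degenerate configuration where some $\bbeta_j$ sits exactly at the midpoint of two true centers whose distance is exactly $\deltamin$; the strict version costs nothing elsewhere, since the orphaned-ball case still gives $C_{s_0}\ge 4k\rad^2>k\rad^2>kv$ and the matched case only needs $\|\x-\bbeta_j\|\ge\tau$ rather than a strict inequality.
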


The next proposition, proved in Appendix~\ref{sec:proof_existence}, states that $G$ has a spurious local minimum that is not a global minimum. An illustration is given in Figure~\ref{fig:local-min-3-intervals}.
\begin{proposition}[Existence of spurious local minima]
\label{prop:existence}
Consider the Stochastic Ball Model in one dimension with $\beta_{1}^{*}=-2$, $\beta_{2}^{*}=0$, $\beta_{3}^{*}=2$, where each ground truth ball/interval has radius $\rad$. When $\rad<0.4$ or equivalently $\snrmin>5$, the solution $\bbeta=(\beta_1,\beta_2,\beta_3)\in\real^{1\times3}$ with $\beta_{1}=-2-\frac{\rad}{2}$, $\beta_{2}=-2+\frac{\rad}{2}$ and $\beta_{3}=1$ is a local minimum of $G$.
\end{proposition}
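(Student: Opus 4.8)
The plan is to verify by hand that $\bbeta = (-2-\tfrac{\rad}{2},\, -2+\tfrac{\rad}{2},\, 1)$ is a stationary point, then argue that the combinatorial type of its Voronoi diagram is \emph{frozen} under small perturbations, so that near $\bbeta$ the objective $G$ coincides with an explicit smooth function whose local minimality at $\bbeta$ can be verified via a small Hessian computation. First I would record the Voronoi diagram of $\bbeta$: since $\beta_1 < \beta_2 < \beta_3$, the cells are the intervals $\vor_1(\bbeta) = (-\infty,-2]$, $\vor_2(\bbeta) = [-2,\,-\tfrac12+\tfrac{\rad}{4}]$ and $\vor_3(\bbeta) = [-\tfrac12+\tfrac{\rad}{4},\,\infty)$, with $\partial_{1,2}(\bbeta) = \{-2\}$ and $\partial_{2,3}(\bbeta) = \{-\tfrac12+\tfrac{\rad}{4}\}$. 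The geometric facts I need, all consequences of $\rad < 0.4$, are: (a) the ball $\ball_1 = [-2-\rad,\,-2+\rad]$ is cut by $\partial_{1,2}(\bbeta)=\{-2\}$ into the two halves $[-2-\rad,-2]$ and $[-2,-2+\rad]$, lying in $\vor_1(\bbeta)$ and $\vor_2(\bbeta)$ respectively; and (b) $\ball_2\cup\ball_3 \subseteq \vor_3(\bbeta)$, with $\partial_{2,3}(\bbeta)$ sitting in the empty gap $(-2+\rad,\,-\rad)$ on which $f\equiv 0$ --- here the inequality $-\tfrac12+\tfrac{\rad}{4} < -\rad$ is exactly $\rad<\tfrac25$. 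Stationarity then holds because every $\beta_j$ equals the $f$-weighted centroid of its own cell: $\beta_1$ and $\beta_2$ are the midpoints of the two halves of $\ball_1$, and $\beta_3=1$ is the centroid of $\ball_2\cup\ball_3$ by the symmetry of this set about $x=1$.

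Next I would show that for every $\bbeta'$ sufficiently close to $\bbeta$ the assignment pattern is unchanged: one still has $\beta_1'<\beta_2'<\beta_3'$; the midpoint $m' := \tfrac{\beta_1'+\beta_2'}{2}$ stays in the interior of $\ball_1$; $\beta_3'$ is never the nearest center to a point of $\ball_1$; and neither $\beta_1'$ nor $\beta_2'$ is ever nearest to a point of $\ball_2\cup\ball_3$. The last claim is where the hypothesis is consumed: the point of $\ball_2$ closest to $\beta_2$ is at distance $2-\tfrac{3\rad}{2}$, whereas its distance to $\beta_3$ is $1+\rad$, and $2-\tfrac{3\rad}{2} > 1+\rad$ precisely when $\rad<\tfrac25$, leaving a positive margin that absorbs a small perturbation. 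Consequently, for all $\bbeta'$ in a small enough neighborhood,
\begin{equation*}
3\,G(\bbeta') \;=\; \underbrace{\frac{1}{2\rad}\int_{-2-\rad}^{m'}(x-\beta_1')^2\,\ddup x + \frac{1}{2\rad}\int_{m'}^{-2+\rad}(x-\beta_2')^2\,\ddup x}_{=:\,\Phi(\beta_1',\beta_2')} \;+\; \underbrace{\int_{\ball_2\cup\ball_3}(x-\beta_3')^2 f(x)\,\ddup x}_{=:\,\Psi(\beta_3')},
\end{equation*}
where $\Phi$ is a polynomial in $(\beta_1',\beta_2')$ and $\Psi$ is a strictly convex quadratic in $\beta_3'$ with unique minimizer $\beta_3'=1$.

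It then remains to show that $(\beta_1,\beta_2)=(-2-\tfrac{\rad}{2},-2+\tfrac{\rad}{2})$ is a strict local minimum of $\Phi$. After the affine change of variable $x\mapsto (x+2)/\rad$, which maps $\ball_1$ to $[-1,1]$ and $(\beta_1,\beta_2)$ to $(-\tfrac12,\tfrac12)$, the function $\Phi$ becomes $\tfrac{\rad^2}{2}$ times the $2$-means objective of the uniform distribution on $[-1,1]$, whose gradient at $(-\tfrac12,\tfrac12)$ vanishes and whose Hessian there equals $\left(\begin{smallmatrix}3/2 & -1/2 \\ -1/2 & 3/2\end{smallmatrix}\right)$, with eigenvalues $1$ and $2$; this is positive definite. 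Combining, $3\,G(\bbeta')=\Phi(\beta_1',\beta_2')+\Psi(\beta_3')\ge\Phi(\beta_1,\beta_2)+\Psi(1)=3\,G(\bbeta)$ for all $\bbeta'$ near $\bbeta$, with equality only at $\bbeta'=\bbeta$, so $\bbeta$ is a strict local minimum of $G$.

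I expect the main obstacle to be the second step: confirming that the Voronoi combinatorics cannot change under a small perturbation, in particular that no mass migrates between the $\{\beta_1,\beta_2\}$ cluster and the $\{\beta_3\}$ cluster (the one place the hypothesis $\rad<0.4$ is truly used), and that the single Voronoi boundary carrying positive $f$-mass --- namely $x=-2$, inside $\ball_1$ --- does not destabilize the point; the latter is ensured by the symmetry of the split and is certified by the positive-definiteness of the $2\times2$ Hessian above. The remaining work (the centroid identities and the Hessian) is routine.
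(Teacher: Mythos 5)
Your proof is correct and takes essentially the same approach as the paper's: both freeze the Voronoi combinatorics in a neighborhood of $\bbeta$ (using $\rad<2/5$ to keep $\partial_{2,3}$ in the mass-free gap and $\partial_{1,2}$ inside $\ball_1$), write $G$ there as an explicit smooth function, and verify stationarity plus a positive-definite Hessian. Your separable decomposition $3G=\Phi+\Psi$ and the rescaling of $\ball_1$ to $[-1,1]$ are only cosmetic variants of the paper's direct $3\times 3$ gradient-and-Hessian computation.
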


\begin{figure}
\centering
\includegraphics[scale=0.5,clip, trim=0 20 0 20]{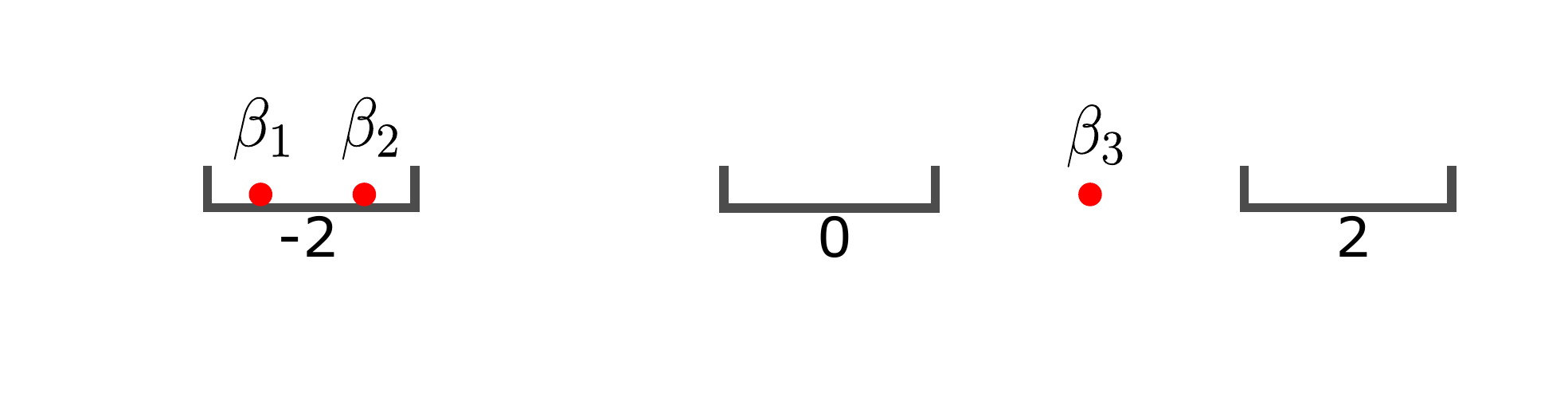}
\caption{\label{fig:local-min-3-intervals}One-dimensional Stochastic Ball Model with radius $\rad<0.4$ and ground truth cluster centers $\bbeta^{*}=(-2,0,2)$. The solution $\bbeta=(-2-\frac{\rad}{2},-2+\frac{\rad}{2},1)$ is a spurious local minimum.}
\end{figure}

Conceptually, Proposition~\ref{prop:truth_is_global_opt} shows that $G$ is a \emph{statistically} sensible objective function for clustering, as its global minimum recovers the ground truth clustering. On the other hand, Proposition~\ref{prop:existence} highlights the \emph{computational} difficulty of this optimization task, due to the existence of spurious local minima in the form of the configuration plotted in Figure~\ref{fig:local-min-3-intervals}.

As the main result of this paper, we show that the above configuration is essentially the \emph{only} local minimum, in a precise sense formalized in the theorem below.
\begin{theorem}[Local minimum structures, Stochastic Ball Model]
\label{thm:main_ball} 
Under the Stochastic Ball Model, assume that $\snrmax>4c^{2}k^{4}$ and $\snrmin\geq 10ck^2 \sqrt{\snrmax}$ for some universal constant $c\ge 3$. If $\bbeta =(\bbeta_1,\ldots,\bbeta_k)\in\real^{d\times k}$ is a local minimum of $G$, then the ground truth centers and fitted centers can be partitioned as $[k] = \bigcup_{a=1}^m S_a^*$ and $[k] = \bigcup_{a=0}^{m} S_a$ respectively, such that for each $a\in[m]$, exactly one of the following holds:
\begin{itemize}
    \item \textbf{(many/one-fit-one association)} $|S_a| \ge 1$ and $S_a^* = \{s\}$ for some $s\in[k]$; moreover, 
    \begin{align*}
        \|\bbeta_i-\bbeta_s^*\|\leq \deltamax\frac{8ck^2}{\sqrt{\snrmax}} = 8ck^2 \sqrt{\rad\deltamax},\quad\forall i\in S_a.
    \end{align*}

    \item \textbf{(one-fit-many association)} $S_a = \{i\} $ for some $i\in[k]$ and $|S_a^*|\ge 2$; moreover, 
    \begin{align*}
    \bigg\|\bbeta_i-\frac{1}{|S_a^*|}\sum_{s\in S_a^*}\bbeta_s^* \bigg\|\leq \deltamax\frac{11ck^{2}}{\sqrt{\snrmax}} = 11ck^2 \sqrt{\rad \deltamax}.
    \end{align*}

\end{itemize}
In addition, for each $i\in S_{0}$, we have $\mP\big(\vor_{i}(\bbeta)\big)\leq\frac{ck}{\sqrt{\snrmax}}$ \textbf{(almost-empty association)}.
\end{theorem}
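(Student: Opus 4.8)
The plan is to read the structure off the optimality conditions of $G$, using Voronoi geometry and the large SNR throughout; I organize it into a first‑order (stationarity) step, two geometric localization lemmas, a second‑order argument ruling out ``split balls'' (the crux), and a bookkeeping step.

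\emph{Step 1 (stationarity).} Although $G$ is only piecewise quadratic, the points on any Voronoi facet are equidistant from the two relevant centers, so an envelope argument shows $G$ is differentiable in each block $\bbeta_i$, with $\nabla_{\bbeta_i}G(\bbeta)=2\,\mP(\vor_i(\bbeta))\big(\bbeta_i-\E[\x\mid\x\in\vor_i(\bbeta)]\big)$. Hence at a non‑degenerate local minimum every center with $\mP(\vor_i(\bbeta))>0$ obeys the Lloyd fixed‑point equation $\bbeta_i=\E[\x\mid\x\in\vor_i(\bbeta)]$. Writing $w_{is}:=\mP_s(\vor_i(\bbeta))/\mP(\vor_i(\bbeta))$ and using that each conditional mean $\E[\x\mid\x\in\vor_i(\bbeta)\cap\ball_s]$ lies within $\rad$ of $\bbeta_s^*$, this becomes $\bbeta_i=\sum_s w_{is}\,\bbeta_s^*+\bm{e}_i$ with $\|\bm{e}_i\|\le\rad$: every center is, up to $\rad$, a convex combination of true centers weighted by the ball‑masses captured by its cell.

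\emph{Step 2 (localization lemmas).} The key geometric fact is: if two cells $\vor_i(\bbeta),\vor_j(\bbeta)$ both carry positive $\ball_s$‑mass, then $\ball_s$, being connected, is split by a Voronoi facet $\partial_{a,b}(\bbeta)$ between two cells meeting it, and that facet must intersect $\ball_s$; a point of $\ball_s$ on either side shows $\bbeta_a$ and $\bbeta_b$ are roughly equidistant from $\bbeta_s^*$. Combined with Step 1 this yields: (a) if some center lies within $O(k^2\sqrt{\rad\deltamax})$ of $\bbeta_s^*$ then \emph{every} cell carrying positive $\ball_s$‑mass has its center that close to $\bbeta_s^*$ (a farther center would be strictly beaten on all of $\ball_s$); and (b) a center $\bbeta_i$ that captures a fraction of $\ball_s$ while lying far (non--type-I, i.e.\ at distance $>O(k^2\sqrt{\rad\deltamax})$) from $\bbeta_s^*$ must draw its remaining mass from other balls, all far from $\bbeta_s^*$. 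Under $\snrmax>4c^2k^4$ and $\snrmin\ge 10ck^2\sqrt{\snrmax}$ one has $k^2\sqrt{\rad\deltamax}\ll\deltamin$, so these neighborhoods are pairwise well separated, and one obtains a clean dichotomy for every non‑negligible center \emph{once the next step is in place}.

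\emph{Step 3 (no split balls --- the crux).} It remains to rule out a \emph{split ball}: some $\ball_s$ whose mass lying in cells with far centers is non‑negligible, i.e.\ exceeds the $ck/\sqrt{\snrmax}$ threshold that appears in the statement. By Step 1 such mass sits at distance $\Omega(k^2\sqrt{\rad\deltamax})$ from the centers it is assigned to, so it contributes far more than $\rad^2$ to $G$, whereas $G^*=O(\rad^2)$ by Proposition~\ref{prop:truth_is_global_opt}; the plan is to convert this gap into an explicit descent direction. Either a pigeonhole on the map sending each cell to its dominant ball produces a center that can be moved continuously onto $\bbeta_s^*$ --- capturing all of $\ball_s$ at cost $O(\rad^2)$ --- or, in the balanced case, one resolves the split by sliding one of the sharing centers onto $\ball_s$ while compensating with a coordinated move of a partner center (precisely the two‑center motion that escapes Panels~1c/2c of Figure~\ref{fig:illustration}). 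One then checks that $G$ strictly decreases along this path: the gain is the expensive term above, and the loss --- the cost of reassigning the moved center's former Voronoi mass to its second‑nearest centers --- is controlled via Step 2 (that former mass consists only of tails, and the relevant distances are at most $\deltamax$). The quantitative hypotheses on $\snrmax,\snrmin$ are exactly what makes the gain dominate the loss uniformly over all split configurations. I expect this --- exhibiting a valid descent perturbation in every case and balancing gain against loss --- to be the main obstacle, made delicate by the fact that a split configuration is a saddle (stationary, so the path must be chosen with a genuinely negative second‑order variation) rather than a point where some coordinate direction already decreases $G$.

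\emph{Step 4 (assembly).} With Steps 2--3, every ball's mass is, up to an $O(k/\sqrt{\snrmax})$ tail, captured by a single group: either a cluster of centers all within $O(k^2\sqrt{\rad\deltamax})$ of that true center (a many/one‑fit‑one association, each such center within $8ck^2\sqrt{\rad\deltamax}$ of $\bbeta_s^*$ once the tail is accounted for), or a single center that captures $\ge 2$ balls essentially in full, which by Step 1 sits within $11ck^2\sqrt{\rad\deltamax}$ of their average (a one‑fit‑many association). Grouping the true centers accordingly gives $[k]=\bigcup_{a=1}^m S_a^*$; the fitted centers in group $a$ form $S_a$, and the remaining fitted centers form $S_0$. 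Since all ball‑mass except tails is accounted for in $\bigcup_{a\ge1}S_a$, each leftover cell $\vor_i(\bbeta)$, $i\in S_0$, contains only tails, so $\mP(\vor_i(\bbeta))\le ck/\sqrt{\snrmax}$, and the error estimates from Steps 1--2 give the stated radii.
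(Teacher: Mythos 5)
Your Steps~1 and~4 match the paper's skeleton (the Lloyd fixed-point condition is Lemma~\ref{lem:necessary}, and the assembly is the bookkeeping in Section~\ref{sec:proof_main_ball}), and you correctly identify ruling out split balls as the crux. But that crux --- your Step~3 --- is where the proof actually lives, and your proposal for it has a genuine gap. Observing that a split configuration forces $G(\bbeta)\gg \rad^2 \approx G(\bbeta^*)$ only contradicts \emph{global} optimality; it says nothing about local minimality. To contradict local minimality you must exhibit a perturbation along which $G$ decreases \emph{starting from} $\bbeta$, and since $\bbeta$ is stationary this is necessarily a second-order statement. Your proposed remedy --- a macroscopic path sliding one center onto $\ball_s$ with a compensating move of a partner, along which you assert ``$G$ strictly decreases'' --- is exactly the claim that needs proof, and you flag it yourself as the main obstacle without supplying the estimate. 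The paper resolves it differently: it perturbs pairs of centers along equal or opposite unit directions (so the Voronoi facet $\partial_{i,j}$ translates or rotates about the midpoint $\frac{\bbeta_i+\bbeta_j}{2}$), builds smooth upper bounds of $H^{\bv}(t)$ whose second-order limits are computed exactly (Propositions~\ref{prop:same-direction} and~\ref{prop:opposite-direction}, which account for the mass swept by the moving facet), and extracts from $\lim_{t\to0}t^{-2}(\widetilde H^{\bv}_{i,j}(t)-\widetilde H^{\bv}_{i,j}(0))\ge 0$ the two quantitative inequalities $d_{i,j}\,\rho_s(\partial_{i,j})\le k/2$ and $(D_{i,j,s}^2/d_{i,j})\,\rho_s(\partial_{i,j})\le k/2$.

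These two inequalities are also what your Step~2 silently relies on. Your ``key geometric fact'' only gives $\bigl|\,\|\bbeta_a-\bbeta_s^*\|-\|\bbeta_b-\bbeta_s^*\|\,\bigr|\le 2\rad$, i.e.\ the two sharing centers are roughly equidistant from $\bbeta_s^*$ --- it does not place either of them near $\bbeta_s^*$. The actual localization (if a facet cuts $\ball_s$ with relative volume exceeding $\lambda$, then $d_{j,\ell}\le k/(2\lambda)$ and $D_{j,\ell,s}\le k/(2\lambda)$, hence both centers lie within $k/\lambda+\rad$ of $\bbeta_s^*$) and the mass bounds ($\mP_s(\vor_i)\le k\lambda\rad$ for cells not containing $\bbeta_s^*$, via Lemma~\ref{lem:prob-intersection}) are consequences of the second-order inequalities, not of elementary Voronoi geometry plus stationarity. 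Without carrying out that variational computation, none of the stated radii $8ck^2\sqrt{\rad\deltamax}$, $11ck^2\sqrt{\rad\deltamax}$, or $ck/\sqrt{\snrmax}$ can be derived, so the proposal as written does not prove the theorem.
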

We prove this theorem in Section~\ref{sec:proof_main_ball}.

Theorem~\ref{thm:main_ball} states that \textit{all local minima have the same type of structure.} In particular, if we view a candidate solution $\bbeta = (\bbeta_i)_{i=1}^{k}$ as configuring the centers $\bbeta_i$'s to fit the true clusters, then any local minimum $\bbeta$ must be composed of only the following configurations:
\begin{itemize}
    \item [(i)] many-fit-one: multiple $\bbeta_{i}$'s are close to the same ground truth center;
    \item [(ii)] one-fit-many: one $\bbeta_{i}$ is close to the mean of several ground truth centers;
    \item [(iii)] almost-empty: a $\bbeta_i$ is far (relatively to other $\bbeta_j$'s) from any ground truth center, in the sense that the Voronoi set of $\bbeta_{i}$ is almost empty with a small measure.
\end{itemize}
Moreover, the configurations (i), (ii) and (iii) must involve disjoint sets of $\bbeta_j$s' and $\bbeta_s^*$s'.  For concrete examples, recall Figure~\ref{fig:illustration}: the ground truth solution in Panel~1a has 4 {one-fit-one} associations, whereas the spurious local minimum in Panel~1b consists of a {two-fit-one}, a {one-fit-two} and a {one-fit-one} association.

Put differently, Theorem~\ref{thm:main_ball} implies that if a solution $\bbeta$ involves any configuration other than the three above, then $\bbeta$ can be perturbed locally that strictly decreases its objective value. For example, the solutions in Panels~1c and~1d in Figure~\ref{fig:illustration} use two centers to fit three and two true clusters, respectively. The objective value can be decreased by moving these two centers \emph{away} towards different true clusters, as shown in Panels~2c and~2d. Our proof of Theorem~\ref{thm:main_ball} in fact makes use of this geometric idea in an analytical way, by studying the behavior of the objective function $G$ when $\bbeta$ is perturbed locally in certain directions.\\

It is instructive to specialize Theorem~\ref{thm:main_ball} to the limit case of a ``point model'', where $\rad \to 0$ or equivalently $\snrmax \to \infty$; that is, each ground truth cluster $s$ has a \emph{point mass} at $\bbeta_{s}^{*}$. In this case, the three possibilities guaranteed in the theorem reduce to: (i) several $\bbeta_i$'s are located exactly at one true cluster $\bbeta^*_s$ (many-fit-one); (ii) one center $\bbeta_i$ is located at the mean of several true $\bbeta_s^*$'s (one-fit-many); (iii) for all the other $\bbeta_i$'s, their Voronoi sets do not contain any true clusters. 

In the general setting with $r>0$, Theorem~\ref{thm:main_ball} guarantees that the above result for the point model still holds approximately, with an approximation error due to each true cluster having a mass spread around the true center. The three bounds in Theorem~\ref{thm:main_ball} control the approximation errors with respect to cases (i)--(iii) in the point model above. These error bounds all scale with $1/\sqrt{\snrmax}$, which becomes smaller if the SNR $\snrmax$ increases.

\paragraph{Tightness of the error bounds:}

The approximation errors above are unavoidable in general. We have already shown in Proposition~\ref{prop:existence} that there exists a local minimum $\bbeta = (\bbeta_1,\bbeta_2, \bbeta_3)$ where $\bbeta_1$ and $\bbeta_2$ are close but not exactly equal to $\bbeta^*_1$; see Figure~\ref{fig:local-min-3-intervals}. Here the mass of the first true cluster $\ball_1$ is equally split between the Voronoi sets of $\bbeta_1$ and $\bbeta_2$, each of which lies at the corresponding center of mass (cf.\ Lemma~\ref{lem:necessary}), leading to a nonzero approximation error in the many-fit-one association. In addition, in Example~\ref{ex:example2} in Appendix~\ref{sec:Additional Examples}, we demonstrate another local minimum with a non-zero approximation error in the one-fit-many association. We note that Theorem~\ref{thm:main_ball} gives upper bounds for these errors, and the bounds take the form $\mathrm{poly}(k)/\sqrt{\snrmax}$. 

In fact, the proof of Theorem~\ref{thm:main_ball} effectively establishes a family of bounds (see Theorem~\ref{prop:ball-family-bounds}) that provide a trade-off between the errors for the three types of associations. In particular, for each number $\lambda \in (0, \frac{1}{2k^2\rad})$, one can derive the bounds
\begin{align*}
&\text{(many-fit-one)}&\|\bbeta_i-\bbeta_s^*\| &\leq \frac{8k^2}{\lambda} , \quad\forall i\in S_a: |S^*_a|=1
\\
&\text{(one-fit-many)}&\bigg\|\bbeta_i-\frac{1}{|S_a^*|}\sum_{s\in S_a^*}\bbeta_s^* \bigg\| &\leq 11 \lambda k^2 \rad \deltamax,  \quad \forall i \in S_a: |S_a|=1\\
&\text{(almost-empty)}&\mP\big(\vor_{i}(\bbeta)\big) &\leq \lambda k\rad, \quad  \forall i \in S_0,
\end{align*}
where the partitions $[k] = \bigcup_{a=1}^m S_a^* = \bigcup_{a=0}^{m} S_a$ may depend on $\lambda$. Taking $\lambda = \frac{c}{\sqrt{\rad \deltamax} } = \frac{c}{\rad \sqrt{\snrmax}}$ gives the bounds in Theorem~\ref{thm:main_ball}. We are currently not sure though whether these bounds are tight in general. 
  
\paragraph{Necessity of the separation assumption:}

The result in Theorem~\ref{thm:main_ball} holds under the separation condition that the SNRs $\snrmax$ and $\snrmin$ are not too small. Such a separation condition is in general necessary. In Example~\ref{ex:example1} in Appendix~\ref{sec:Additional Examples}, we show that if $\snrmax$ is too small, then there exists a local minimum that fails to satisfy the structural properties in Theorem~\ref{thm:main_ball}. On the other hand, it is not clear to us whether the current form of the condition, $\snrmax \gtrsim k^4$, can be improved.

\paragraph{Generalization to over/under-parametrization:}

Inspecting the proof of Theorem~\ref{thm:main_ball}, one can see that the arguments therein do not actually require the number of fitted centers to be equal to that of true clusters. Therefore, our results can be extended to the setting where one fits $m$ centers to $k$ clusters with $m>k$ (over-parametrization) or $m<k$ (under-parametrization). As we discuss in greater details in Section~\ref{sec:discussion}, such a generalization has important algorithmic implications. 

\subsection{Gaussian Mixture Model}\label{sec:main_gaussian}

We next consider the Gaussian Mixture Model in Definition~\ref{def:GMM}. The main difference between this model and the Stochastic Ball Model is that the Gaussian distribution has an unbounded support and thus the tails of the mixture components overlap with each other. Nevertheless, much of the results for the Ball Model can be extended to the Gaussian case. For example, one can establish results analogous to Propositions~\ref{prop:truth_is_global_opt} and \ref{prop:existence} regarding the global minima and the existence of spurious local minima. Here we focus on establishing an analogue of Theorem~\ref{thm:main_ball}, which characterizes the structures of all local minima of the population $k$-means objective $G$.

Our main result is given in the following theorem.
\begin{theorem}[Local minimum structures, Gaussian Mixture Model]
\label{thm:main_gaussian}
Let $t>1$ be any number satisfying $\tail(t):=2\exp(-t^{2}\min(d,2k)/8)<\frac{1}{4}$. Under the Gaussian Mixture Model, assume that $\snrmax\geq16c^{2}k^{4}t$ and $\snrmin\geq 8c\sqrt{t}k^2\sqrt{\snrmax}+7k\tail(t)\snrmax$ for some constant $c\geq3$. If $\bbeta =(\bbeta_1,\ldots,\bbeta_k)\in\real^{d\times k}$ is a local minimum of $G$, then the ground truth centers and fitted centers can be partitioned as $[k] = \bigcup_{a=1}^m S_a^*$ and $[k] = \bigcup_{a=0}^{m} S_a$, respectively, such that for each $a\in[m]$, exactly one of the following holds:
\begin{itemize}
    \item \textbf{(many/one-fit-one association)} $|S_a| \ge 1$ and $S_a^* = \{s\}$ for some $s\in[k]$; moreover,
    \begin{align}
        \|\bbeta_i-\bbeta_s^*\|\leq \deltamax\left(7k^{2}\frac{c\sqrt{t}}{\sqrt{\snrmax}}+7k\tail(t)\right),\quad\forall i\in S_a.
    \end{align}
    \item \textbf{(one-fit-many association)} $S_a = \{i\} $ for some $i\in[k]$ and $|S_a^*|\ge 2$; moreover,
\begin{align}
    \bigg\| \bbeta_{i}-\frac{1}{|S_{a}^*|}\sum_{s\in S_{a}^*}\bbeta_{s}^{*} \bigg\| \leq\deltamax\left(9k^{2}\frac{c\sqrt{t}}{\sqrt{\snrmax}}+7k\tail(t)\right).
\end{align}
\end{itemize}
In addition, for each $i\in S_{0}$, we have $\mP\big(\vor_{i}(\bbeta)\big)\le\frac{ck\sqrt{t}}{\sqrt{\snrmax}}+\tail(t)$ \textbf{(almost-empty association)}.
\end{theorem}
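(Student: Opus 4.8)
The plan is to mirror the proof of Theorem~\ref{thm:main_ball}, replacing the bounded-support estimates of the Stochastic Ball Model by concentration estimates that tolerate the overlapping Gaussian tails; I would in fact set things up so that Theorem~\ref{thm:main_ball} is recovered as the degenerate case in which the tail function $\tail$ vanishes and the ``effective radius'' equals the exact radius $\rad$. The first step is a truncation reduction. For a local minimum $\bbeta$, let $\linspace$ be the (at most $2k$-dimensional) subspace spanned by $\{\bbeta^*_s\}_{s\in[k]}$ and $\{\bbeta_i\}_{i\in[k]}$; since $G$ depends on each $\x$ only through its distances to the $\bbeta_j$'s, we may analyze the projections $\Pi_{\linspace}\x$, whose norms concentrate at scale $\std\sqrt{\min(2k,d)}$. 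For each cluster $s$ define the good event $\{\|\Pi_{\linspace}(\x-\bbeta^*_s)\|\le t\std\sqrt{\min(2k,d)}\}$, which has $\mP_s$-probability at least $1-\tail(t)$ by a standard Gaussian tail bound. Conditioned on this event the $s$-th component behaves like a bounded distribution of effective radius $\rad_{\mathrm{eff}}=t\std\sqrt{\min(2k,d)}=t\,\deltamax/\snrmax$, while every quantity appearing in the ball-model argument (Voronoi masses, centroids, contributions to $G$) acquires an additive error controlled by $\tail(t)$ times the appropriate scale. This is precisely what produces the two-term bounds $\mathrm{poly}(k)/\sqrt{\snrmax}+\mathrm{poly}(k)\tail(t)$ in the statement, and it explains why one needs $\snrmin$ to dominate the $7k\tail(t)\snrmax$ summand: otherwise tail mass from a distant cluster could bridge the gap between two well-separated clusters and corrupt the coarse clustering picture.

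Next I would transport the skeleton of the ball-model proof. First-order stationarity (Lemma~\ref{lem:necessary}) says every fitted center with nonempty Voronoi set equals the $f$-weighted centroid of that set; together with the separation hypotheses this forces the coarse picture that, up to $O(\tail(t)+1/\sqrt{\snrmax})$ corrections, the mass of each true cluster sits in the Voronoi sets of just a few centers. I then build the bipartite association graph on $[k]\sqcup[k]$, joining a center $i$ to a cluster $s$ when $\mP_s(\vor_i(\bbeta))$ exceeds a $\mathrm{poly}(k)/\sqrt{\snrmax}$-type threshold; centers incident to no cluster are placed in $S_0$, and the centroid identity plus the tail bound give $\mP(\vor_i(\bbeta))\le\frac{ck\sqrt{t}}{\sqrt{\snrmax}}+\tail(t)$ for them. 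For each remaining connected component, once one knows it is a ``star'' the many/one-fit-one and one-fit-many distance bounds follow by writing $\bbeta_i$ as the centroid of $\vor_i(\bbeta)$ and estimating it against $\bbeta^*_s$ (respectively against the mean of $\{\bbeta^*_s\}_{s\in S^*_a}$) using the effective-radius and tail estimates above, tracking constants to land on the stated $7k^2$, $9k^2$ and $7k$ coefficients.

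The heart of the argument --- and the step I expect to be the main obstacle --- is ruling out a connected component with both $\ge 2$ fitted centers and $\ge 2$ true clusters. Here I would reuse the perturbation idea behind Theorem~\ref{thm:main_ball}: in such a component, pick two true clusters $\bbeta^*_s,\bbeta^*_{s'}$ joined by a path in the association graph and construct an explicit one-parameter family $\bbeta(\varepsilon)$ that ``splits'' the component by nudging the centers along that path --- some toward $\bbeta^*_s$, the rest toward $\bbeta^*_{s'}$ --- so that at least one center's Voronoi set becomes less of a mixture of distinct clusters while the mass it sheds is absorbed by centers already close to the cluster they belong to. One then shows $\tfrac{d}{d\varepsilon}G(\bbeta(\varepsilon))\big|_{\varepsilon=0^{+}}<0$ (or a strictly negative second-order term, since $\bbeta$ is stationary), contradicting local minimality. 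The delicate point is bookkeeping the change in $G$ caused by the shifting Voronoi facets $\partial_{i,j}(\bbeta)$: mass is transferred across these facets, and for Gaussians the facets also carry tail mass from clusters \emph{outside} the component, so one must show those spurious cross-terms are $O(\tail(t)\snrmax)$-small compared with the genuine decrease --- which is exactly where the $7k\tail(t)\snrmax$ term in the hypothesis on $\snrmin$ and the assumption $\tail(t)<\tfrac14$ get consumed. Choosing the path, calibrating the size of $\varepsilon$, and assembling the component-wise bounds into the global partition $[k]=\bigcup_a S^*_a=\bigcup_a S_a$ are then routine given the corresponding steps in Section~\ref{sec:proof_main_ball}.
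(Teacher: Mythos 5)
Your truncation step is exactly right and matches the paper: reduce to the $\min(2k,d)$-dimensional span of the true and fitted centers, restrict each component to the ball of radius $\rad=t\std\sqrt{\min(2k,d)}$, and carry the tail mass $\tail(t)$ as an additive error everywhere; the paper formalizes this as Theorem~\ref{prop:gaussian-family-bounds}, a Gaussian analogue of Theorem~\ref{prop:ball-family-bounds}. The gap is in what you call the heart of the argument. Your suggestion to exhibit $\frac{d}{d\varepsilon}G(\bbeta(\varepsilon))\big|_{\varepsilon=0^{+}}<0$ cannot work: Lemma~\ref{lem:necessary} forces every directional derivative to vanish at a local minimum, so many-fit-many configurations can only be excluded by second-order information, and computing the second-order change is precisely the hard part because of the mass swept by the moving Voronoi facets. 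The paper handles this not by a path-splitting perturbation in an association graph, but by restricting to the two perturbation classes $\bv_i=\bv_j$ (each facet translates) and $\bv_i=-\bv_j$ (each facet rotates about $\frac{\bbeta_i+\bbeta_j}{2}$), for which the swept sets $\Delta^{\bv}_{i\to j}(t)$ admit explicit smooth upper bounds (Propositions~\ref{prop:same-direction} and~\ref{prop:opposite-direction}); local minimality then yields the two inequalities $d_{i,j}\,\rho_s(\partial_{i,j})\le k/2$ and $D^2_{i,j,s}\,\rho_s(\partial_{i,j})/d_{i,j}\le k/2$, which are the engine of the whole proof. Your proposal defers exactly this computation.

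Relatedly, your case dichotomy is organized around thresholding the Voronoi masses $\mP_s(\vor_i(\bbeta))$, whereas the paper's is organized around thresholding the relative volumes $\rho_s(\partial_{j,\ell})$ of Voronoi boundaries inside clusters. This is not cosmetic: it is the ``large boundary volume'' case that, combined with the two inequalities above, forces $d_{j,\ell}\le k/(2\lambda)$ and $D_{j,\ell,s}\le k/(2\lambda)$ and hence pins both adjacent centers near $\bbeta_s^*$ (the many-fit-one bound), while the ``all boundaries small'' case gives the near-$0$/near-$1$ mass estimates that yield the one-fit-many centroid bound and the almost-empty bound. Without a replacement for these second-order boundary-volume inequalities, your association graph supplies the combinatorics of the partition but not the quantitative conclusions, nor the exclusion of components with several centers and several clusters.
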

We prove this theorem in Appendix~\ref{sec:proof_main_gaussian}.

Theorem~\ref{thm:main_gaussian} is qualitatively similar to Theorem~\ref{thm:main_ball}, showing that the local minima in GMM have a similar type of structure. The only difference is that the separation condition in Theorem~\ref{thm:main_gaussian} has an additional $t$ factor, and that the bounds for the three possibilities have an additional error term $\tail(t)$ that decays exponentially in $t^2$. The $\tail(t)$ term reflects the influence of the exponential tail of a Gaussian
distribution outside a ball of radius $t\sigma\sqrt{\min(d,2k)}$. In fact, the proof of Theorem~\ref{thm:main_gaussian} proceeds by effectively reducing GMM to the Stochastic Ball Model, treating the bulk of the Gaussian as a bounded distribution and the tail as additional errors. The choice of $t$ here controls the trade-off between the separation condition and the two terms in the error bounds. For a rough interpretation of the theorem, one could simply think of $t$ as a numerical constant large enough so that $\tail(t)$ is dominated by the other terms in the error bounds.

\section{Implications and Connections} \label{sec:discussion}

The theorems in the last section provide \emph{structural} results for the $k$-means objective. In this section, we discuss some \emph{algorithmic} implications of these results for solving the $k$-means problem and remark on their connections to the literature. 

\paragraph{Algorithmic Implications:}

Our result implies that one can find the global minimum of $k$ means as long as the characteristic many-fit-one association for local minima can be avoided (in this case one-fit-many association will also disappear as the number of true clusters and that of fitted centers are equal). This observation suggests that one should initialize a greedy clustering algorithm without putting fitted centers close to each other. Interestingly, several popular heuristics for $k$-means implement precisely this idea. For instance, the celebrated $k$-means$++$ algorithm~\citep{arthur2007k} is a version of the Lloyd's algorithm in which the initial centers are generated iteratively as follows: the first center
is selected uniformly from the data points; after selecting $m<k$ centers, one computes the minimal distance of each data point to these $m$ centers, and select a data point randomly as the $(m+1$)-th center with probability proportional to the above distance. This procedure therefore tends to pick $k$ initial centers that are far away from each other. Many other heuristics for $k$-means follow a similar spirit; see, e.g., the work in~\citet{ball1967promenade,astrahan1970speech,barakbah2005optimized,barakbah2009pillar}.

On the other hand, our structural results also highlight the inherent combinatorial difficulty of the problem. In particular, when the number of clusters grows, there is a growing number of possible configurations with many-fit-one and one-fit-many associations. It then becomes easier to get trapped into one of the corresponding local minima. This is consistent with the systematic empirical study in~\citet{franti2018k}, which observes that algorithms for $k$-means perform worse when there are more clusters.

\paragraph{Connection to Over-Parametrization:}

As mentioned after Theorem~\ref{thm:main_ball}, our results can be extended to the over-parametrization setting where $m>k$ centers are used to fit $k$ ground truth clusters. Over-parametrization appears to be a promising approach for avoiding local minima. In particular, when $m$ is much bigger than $k$, a random initial solution is likely to assign \emph{at least} one center to each true cluster. In this case, one-fit-many association would be avoided. Running a greedy algorithm from this initial solution, one would expect that it converges to a solution with only many-fit-one and almost-empty associations, which can then be pruned by inspecting the pairwise distances of the fitted centers and the sizes of their Voronoi sets.
The work in~\citet{dasgupta2007probabilistic} implements this idea in the context of over-parametrized EM. In particular, after EM converges, they remove fitted centers with low mixing weights (corresponding to almost-empty association) and combine fitted centers that are close to each other (corresponding to many-fit-one association). 

In fact, the extensive empirical study in~\citet{buhai2019benefits} shows that the above idea can be applied to other latent variable models, as these models often have a similar solution structure, i.e., some estimated latent variables having duplicated values or low prior probabilities.

\section{Preliminary Properties for the $k$-means Objective}
\label{sec:prelim}
In this section, we derive several preliminary results on the analytical properties of the population $k$-means objective function $G$ defined in~(\ref{eq:pop-kmean}), focusing on the Stochastic Ball Model. These properties are later used in the proofs of our main theorems.

When $\bbeta$ has pairwise distinct components (i.e., $\bbeta_i \neq \bbeta_j,\forall i\neq j\in[k]$), it is often convenient to rewrite the function $G$ using the notation of Voronoi sets:
\begin{equation}
G(\bbeta)=\sum_{i=1}^{k}\int_{\vor_{i}(\bbeta)}\|\x-\bbeta_{i}\|^{2}f(\x)\ddup\x.\label{eq:pop-kmean-alt}
\end{equation}
We can see that $G$ depends on $\bbeta$ in a complicated way through both $\|\x-\bbeta_{i}\|^{2}$ and $\vor_{i}(\bbeta)$. As shall become clear later, the dependence through the squared distance $\|\x-\bbeta_{i}\|^{2}$ determines the first-order condition for local optimality for $G$; on the other hand, understanding second-order conditions requires us to study the behaviors of the Voronoi sets $\vor_{i}(\bbeta)$ under small perturbation of $\bbeta$. To deal with this complication, our main strategy is to understand the directional behaviors of $G$ along certain (judiciously chosen) directions, and to construct upper bounds on $G$ that are easier to work with.

\subsection{Directional Behaviors of $G$ \label{sec:directional}}

Throughout the remainder of this section, we fix a candidate solution $\bbeta=(\bbeta_{1},\ldots,\bbeta_{k})\in\real^{d\times k}$. For a given direction $\bv=(\bv_{1},\bv_{2}\ldots,\bv_{k})\in\real^{d\times k}$, we are interested in how the objective $G(\bbeta)$ changes after we perturb $\bbeta$ to $\bbeta+t\bv$. Restricting the function $G$ to the direction $\bv$, we define the directional objective function
\[
H^ {\bv}(t):=G(\bbeta+t\bv).
\]
Note that $\bbeta$ is a local minimum of $G$ if and only if $\bm{0}$ is local minimum of $H^ {\bv}$ for all $\bv$.

The functions $G$ and $H^ {\bv}$ are not everywhere differentiable, as they involve the minimum of quadratic functions. However, they are differentiable almost everywhere. In particular, whenever $\bbeta$ has pairwise distinct components, the directional derivative $\frac{\ddup}{\ddup t}H^ {\bv}(0)$ is guaranteed to exist and admits a simple expression, as shown in the following lemma.

\begin{lemma}[Directional derivative]
\label{lem:differentiability}Suppose that $\bbeta$ satisfies $\bbeta_{i}\neq\bbeta_{j}$ whenever $i\neq j$. For any choice of direction $\bv$, the directional derivative $\frac{\ddup}{\ddup t}H^ {\bv}(0)$ exists and has the following analytic formula:
\[
\frac{\ddup}{\ddup t}H^ {\bv}(0)=-\sum_{i=1}^{k}\int_{\vor_{i}(\bbeta)}2\langle\bv_{i},\x-\bbeta_{i}\rangle f(\x)\ddup\x.
\]
\end{lemma}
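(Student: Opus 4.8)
The plan is to compute the one-sided derivative of $H^{\bv}(t) = G(\bbeta + t\bv)$ at $t=0$ directly from the integral representation, exploiting the fact that the only source of non-differentiability — the dependence of the Voronoi sets $\vor_i(\bbeta + t\bv)$ on $t$ — contributes nothing to the first-order term. First I would write, for each small $t$, the decomposition $G(\bbeta + t\bv) = \sum_{i=1}^k \int_{\vor_i(\bbeta + t\bv)} \|\x - \bbeta_i - t\bv_i\|^2 f(\x)\,\ddup\x$, using equation~\eqref{eq:pop-kmean-alt} (valid since $\bbeta$ has distinct components, hence so does $\bbeta + t\bv$ for all sufficiently small $t$). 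The key structural observation is that the pointwise integrand, $\min_{j}\|\x - \bbeta_j - t\bv_j\|^2$, is the minimum of finitely many smooth functions of $t$, so it is differentiable in $t$ at every $\x$ lying in the interior of a single Voronoi cell (i.e., off the measure-zero boundary $\partial(\bbeta)$), with derivative $-2\langle \bv_{i(\x)}, \x - \bbeta_{i(\x)}\rangle$ where $i(\x)$ is the index of the cell containing $\x$. Since the boundary set has Lebesgue measure zero and $f$ is bounded (or integrable), this holds $f$-almost everywhere.

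The second step is to justify interchanging the derivative with the integral. I would invoke dominated convergence: for $|t| \le t_0$ with $t_0$ small, the difference quotient $\frac{1}{t}\big(\min_j\|\x - \bbeta_j - t\bv_j\|^2 - \min_j\|\x-\bbeta_j\|^2\big)$ is bounded in absolute value by something like $2\max_j\|\bv_j\|\big(\|\x\| + \max_j\|\bbeta_j\| + t_0\max_j\|\bv_j\|\big)$, which is $f$-integrable because the mixture $f$ has finite first moment (each $f_s$ has mean $\bbeta_s^*$; in the Stochastic Ball Model the support is even bounded, and in the Gaussian case the first moment is finite). Passing to the limit $t \to 0$ through the integral then yields
\[
\frac{\ddup}{\ddup t}H^{\bv}(0) = \int \Big(-2\langle \bv_{i(\x)}, \x - \bbeta_{i(\x)}\rangle\Big) f(\x)\,\ddup\x = -\sum_{i=1}^k \int_{\vor_i(\bbeta)} 2\langle \bv_i, \x - \bbeta_i\rangle f(\x)\,\ddup\x,
\]
which is exactly the claimed formula; the existence of the two-sided limit (not just one-sided) is automatic from this argument since the bound is symmetric in the sign of $t$.

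An alternative, perhaps cleaner packaging of the same idea: split $\real^d = \vor_{i(\x)}(\bbeta)$-wise and write $G(\bbeta + t\bv) = \sum_i \int_{\vor_i(\bbeta)}\|\x - \bbeta_i - t\bv_i\|^2 f + R(t)$, where $R(t)$ collects the error from using the \emph{old} cells $\vor_i(\bbeta)$ instead of the perturbed cells $\vor_i(\bbeta + t\bv)$. The main term is manifestly differentiable with the right derivative, so it suffices to show $R(t) = o(t)$. On the symmetric difference $\vor_i(\bbeta)\,\triangle\,\vor_i(\bbeta + t\bv)$ — a thin sliver near the boundary $\partial_{i,j}(\bbeta)$ of width $O(t)$ — the integrand $\|\x - \bbeta_i - t\bv_i\|^2$ and $\|\x - \bbeta_j - t\bv_j\|^2$ agree to within $O(t)$ (they are equal on $\partial_{i,j}(\bbeta)$ up to $O(t)$ terms), and the $f$-measure of the sliver is $O(t)$, so $R(t) = O(t^2) = o(t)$; this is the standard "envelope theorem / Danskin" phenomenon that misassignment near an active boundary is second-order.

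The main obstacle is making the interchange of limit and integral (or the $R(t) = o(t)$ estimate) rigorous while keeping the argument model-agnostic — in particular the domination/tail control, which is trivial for the bounded-support Stochastic Ball Model but requires citing finiteness of the first moment of $f$ for the Gaussian Mixture Model. Everything else is bookkeeping: verifying that $\bbeta + t\bv$ retains distinct components for small $t$, that the Voronoi boundary $\partial(\bbeta)$ is Lebesgue-null (immediate since each $\partial_{i,j}(\bbeta)$ lies in a hyperplane), and that the pointwise derivative of a finite min of smooth functions is the derivative of the active branch wherever that branch is unique.
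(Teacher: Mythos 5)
Your proposal is correct and follows essentially the same route as the paper: the paper packages the argument as a measure-theoretic Leibniz integral rule, verifying exactly your three points (a.e.\ differentiability of the pointwise min off the Lebesgue-null Voronoi boundaries, an integrable dominating function for the derivative, and the resulting interchange of limit and integral). Your observation that domination requires finite first moment of $f$ in the Gaussian case is a valid and slightly more careful point than the paper's proof, which checks the bound only on the bounded support of the ball model.
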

The lemma follows from the Leibniz integral rule; we defer the proof to Appendix~\ref{sec:proof-differentiability}. Note that the above expression only involves differentiating the integrand in the expression~\eqref{eq:pop-kmean-alt}; the Voronoi sets $\vor_{i}(\bbeta)$ remain unchanged when only the first-order derivative is concerned.

\subsection{First-Order Necessary Condition for Local Optimality\label{sec:necessary}} Using the first-order derivative expression in Lemma~\ref{lem:differentiability}, we can derive a necessary condition for $\bbeta$ being a local minimum. In particular, the following lemma states that any local minimum (satisfying a non-degeneracy condition) must have pairwise distinct components, each of which must be the center of its Voronoi set.
\begin{lemma}[Local minimum must be Voronoi centers]
\label{lem:necessary}
Suppose that $\bbeta$ is a local minimum of $G$. Then for each pair $i\neq j$,  we must have $\bbeta_{i}\neq\bbeta_{j}$ whenever $\vor_{i}(\bbeta)\cup\vor_{j}(\bbeta)$ having a positive measure (with respect to $f$). Moreover, for each $\bbeta_{i}$ whose Voronoi set $\vor_{i}(\bbeta)$ has a positive measure, $\bbeta_{i}$ must be at the center of probability mass of the Voronoi set $\vor_{i}(\bbeta)$; that is, 
\begin{equation}
\bbeta_{i}=\frac{\int_{\vor_{i}(\bbeta)}\x f(\x)\ddup\x}{\int_{\vor_{i}(\bbeta)}f(\x)\ddup\x}.\label{eq:center-mass}
\end{equation}
\end{lemma}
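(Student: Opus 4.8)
The plan is to establish the two assertions separately.

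\emph{Distinctness of components with non-null cells.} I would argue by contradiction: suppose $\bbeta_{i}=\bbeta_{j}=:\bc$ for some $i\neq j$ while $\mP\big(\vor_{i}(\bbeta)\cup\vor_{j}(\bbeta)\big)>0$. When $\bbeta_{i}=\bbeta_{j}$ the two Voronoi sets coincide, $\vor_{i}(\bbeta)=\vor_{j}(\bbeta)=:W=\{\x:\|\x-\bc\|\le\|\x-\bbeta_{\ell}\|\ \forall\ell\notin\{i,j\}\}$, so $\mP(W)>0$. I would then construct a descent direction that pulls the two centers apart: fix any unit vector $\bu$, and for $t>0$ let $\bbeta^{(t)}$ agree with $\bbeta$ except that $\bbeta_{i}^{(t)}=\bc+t\bu$ and $\bbeta_{j}^{(t)}=\bc-t\bu$. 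Working with the pointwise form~\eqref{eq:pop-kmean} of $G$, the key inequality is that for every $\x$,
\[
\min_{\ell\in[k]}\|\x-\bbeta_{\ell}^{(t)}\|^{2}\ \le\ \min\big\{\|\x-\bc-t\bu\|^{2},\,\|\x-\bc+t\bu\|^{2}\big\}\ =\ \|\x-\bc\|^{2}-2t\,|\langle\x-\bc,\bu\rangle|+t^{2},
\]
since $\bc\pm t\bu$ are among the perturbed centers; meanwhile, on $\real^{d}\setminus W$ the unperturbed integrand equals $\min_{\ell\notin\{i,j\}}\|\x-\bbeta_{\ell}\|^{2}$ and the perturbation leaves those centers untouched, so the perturbed integrand is no larger there, while on $W$ the unperturbed integrand equals $\|\x-\bc\|^{2}$. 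Integrating these bounds,
\[
G(\bbeta^{(t)})-G(\bbeta)\ \le\ \int_{W}\big(-2t\,|\langle\x-\bc,\bu\rangle|+t^{2}\big)f(\x)\,\ddup\x\ =\ -2t\,c_{0}+t^{2}\,\mP(W),\qquad c_{0}:=\int_{W}|\langle\x-\bc,\bu\rangle|\,f(\x)\,\ddup\x.
\]
Since $f$ is a density and $\mP(W)>0$, the set $W$ has positive Lebesgue measure and therefore is not contained in the hyperplane $\{\langle\x-\bc,\bu\rangle=0\}$, whence $c_{0}>0$; so the right-hand side is strictly negative for all sufficiently small $t>0$, contradicting local optimality of $\bbeta$. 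If more than two centers sit at $\bc$, I would displace only two of them and freeze the rest — the same inequalities apply verbatim.

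\emph{The centroid condition.} Now fix $i$ with $\mP\big(\vor_{i}(\bbeta)\big)>0$. The part just proved gives $\bbeta_{i}\neq\bbeta_{\ell}$ for every $\ell\neq i$, because $\vor_{i}(\bbeta)\cup\vor_{\ell}(\bbeta)\supseteq\vor_{i}(\bbeta)$ has positive measure. I would then perturb only the $i$-th center, taking $\bv$ with $\bv_{i}=\bu$ and $\bv_{\ell}=\bm{0}$ for $\ell\neq i$. Since $\bbeta_{i}$ is separated from every other center, only Voronoi boundaries incident to $\bbeta_{i}$ move as $t$ varies, so the Leibniz-rule computation underlying Lemma~\ref{lem:differentiability} still applies to $H^{\bv}(t)=G(\bbeta+t\bv)$ at $t=0$ — even if there are ties among the other centers — and yields that $H^{\bv}$ is differentiable at $0$ with $\frac{\ddup}{\ddup t}H^{\bv}(0)=-\int_{\vor_{i}(\bbeta)}2\langle\bu,\x-\bbeta_{i}\rangle f(\x)\,\ddup\x$. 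As $t=0$ is a local minimum of the differentiable function $H^{\bv}$, this derivative must vanish; letting $\bu$ range over all directions gives $\int_{\vor_{i}(\bbeta)}(\x-\bbeta_{i})f(\x)\,\ddup\x=\bm{0}$, and dividing through by $\mP\big(\vor_{i}(\bbeta)\big)>0$ produces~\eqref{eq:center-mass}.

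\emph{Main obstacle.} The delicate step is the distinctness claim. A priori, splitting two coincident centers reshapes the entire Voronoi diagram, so one cannot simply differentiate $G$ as in Lemma~\ref{lem:differentiability}; in fact the integrand of~\eqref{eq:pop-kmean} is not even differentiable along this direction at a tie. My way around this is to dispense with Voronoi bookkeeping altogether and bound the perturbed objective pointwise by the minimum over just the two displaced copies of $\bc$, which surfaces the first-order gain $-2t\,c_{0}<0$ cleanly while the second-order cost $t^{2}\mP(W)$ is harmless. The only remaining technical point — showing $c_{0}>0$ — is exactly where the absolute continuity of $f$ under the Stochastic Ball Model is used.
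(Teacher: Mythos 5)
Your proof is correct, and the first half takes a genuinely different route from the paper's. For the distinctness claim, the paper partitions the common cell $W=\vor_i(\bbeta)=\vor_j(\bbeta)$ into two positive-measure pieces $S_1,S_2$ with centers of mass $\bs_1\neq\bbeta_i$, $\bs_2\neq\bbeta_j$, moves each coincident center toward the center of mass of its piece, and works with the fixed-partition upper bound $h^{\bv}(t)=\sum_\ell\int_{\vor_\ell(\bbeta)}\|\x-\bbeta_\ell-t\bv_\ell\|^2 f$, which after a bias--variance expansion is a strictly decreasing quadratic on $(0,1)$. You instead split the two copies symmetrically as $\bc\pm t\bu$ and bound the perturbed integrand pointwise by $\min\{\|\x-\bc-t\bu\|^2,\|\x-\bc+t\bu\|^2\}=\|\x-\bc\|^2-2t|\langle\x-\bc,\bu\rangle|+t^2$, extracting the first-order gain $-2tc_0$ directly. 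Your version avoids having to exhibit a partition whose pieces have centers of mass distinct from $\bc$ (a point the paper asserts without detail), and it sidesteps the double-counting bookkeeping the paper must do when $\vor_i=\vor_j$; the price is that you lean explicitly on absolute continuity of $f$ to get $c_0>0$, though the paper's partition choice implicitly needs the same non-degeneracy. Your handling of more than two coincident centers (freeze the rest) is also sound, since for $\x\notin W$ the minimizing center is unmoved. The centroid half of your argument is essentially the paper's: a first-order condition via the Leibniz rule for a single-coordinate perturbation, with your quantification over all directions $\bu$ replacing the paper's single choice $\bv_i=\bc_i-\bbeta_i$; both handle the residual-ties issue in Lemma~\ref{lem:differentiability} by the same observation that zero-measure cells can be ignored.
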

The proof of Lemma~\ref{lem:necessary} is deferred to Appendix~\ref{sec:proof-necessary-condition}. The conclusion of the above lemma can be written equivalently in a more explicit way. In particular, let $m_{i,s}(\bbeta)$ and $\bc_{i,s}(\bbeta)$ denote the probability mass and center of mass of the set $\vor_{i}(\bbeta)$ with respect to $f_{s}$ respectively:
\begin{align*}
m_{i,s}(\bbeta)  :=\int_{\vor_{i}(\bbeta)}f_{s}(\x)\ddup\x
\qquad\textup{and}\qquad
\bc_{i,s}(\bbeta)  :=\frac{\int_{\vor_{i}(\bbeta)}\x f_{s}(\x)\ddup\x}{m_{i,s}(\bbeta)}.
\end{align*}
Then equation~(\ref{eq:center-mass}) can be rewritten as
\[
\bbeta_{i}=\frac{\sum_{s=1}^{k}m_{i,s}(\bbeta)\bc_{i,s}(\bbeta)}{\sum_{s=1}^{k}m_{i,s}(\bbeta)}.
\]

\subsection[for the toc]{Decomposition of $H^{\bv}$ \label{sec:decomposition}}

\begin{figure}[t]
\begin{centering}
\includegraphics[scale=0.28, clip, trim=0 60 0 50]{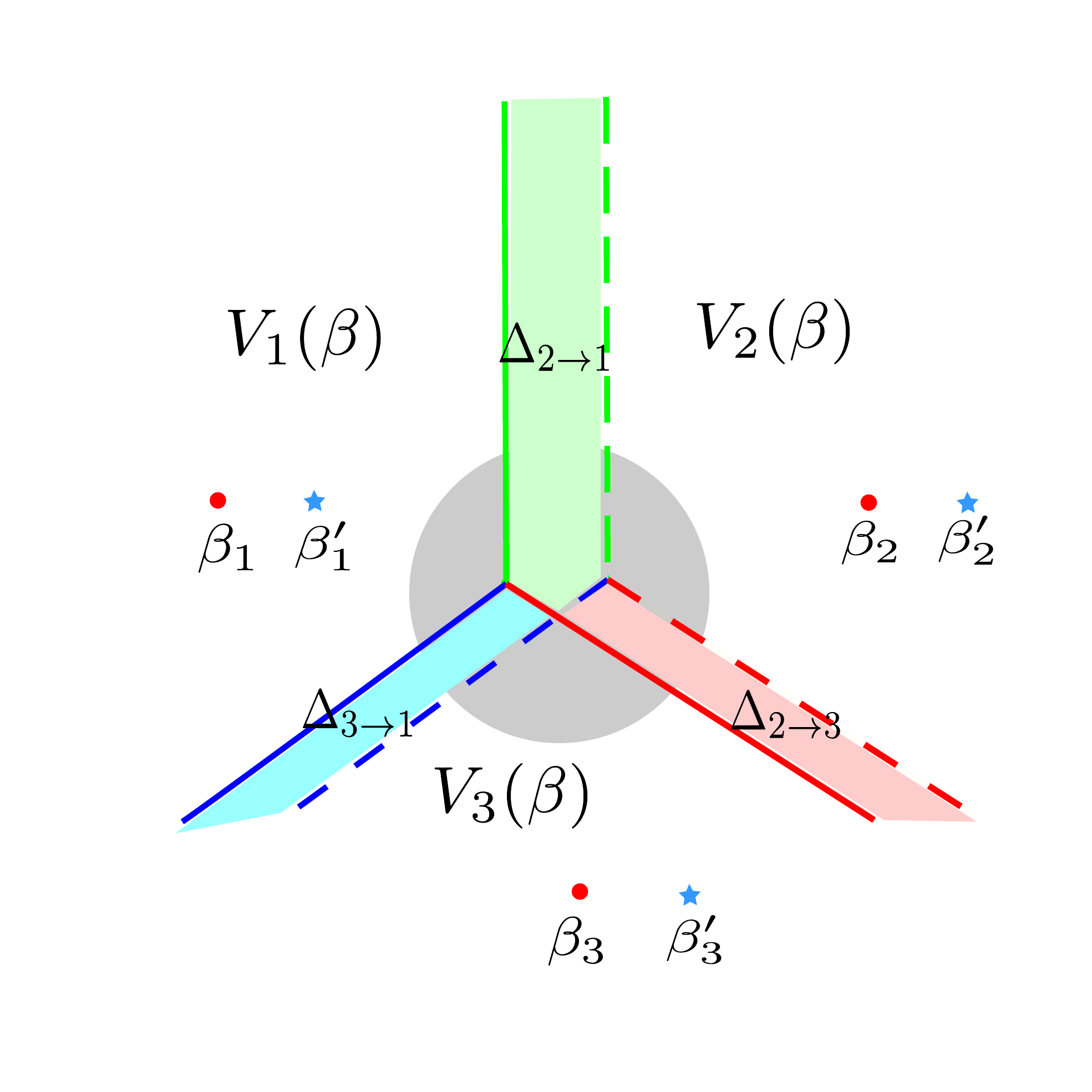}
\includegraphics[scale=0.28, clip, trim=0 60 0 50]{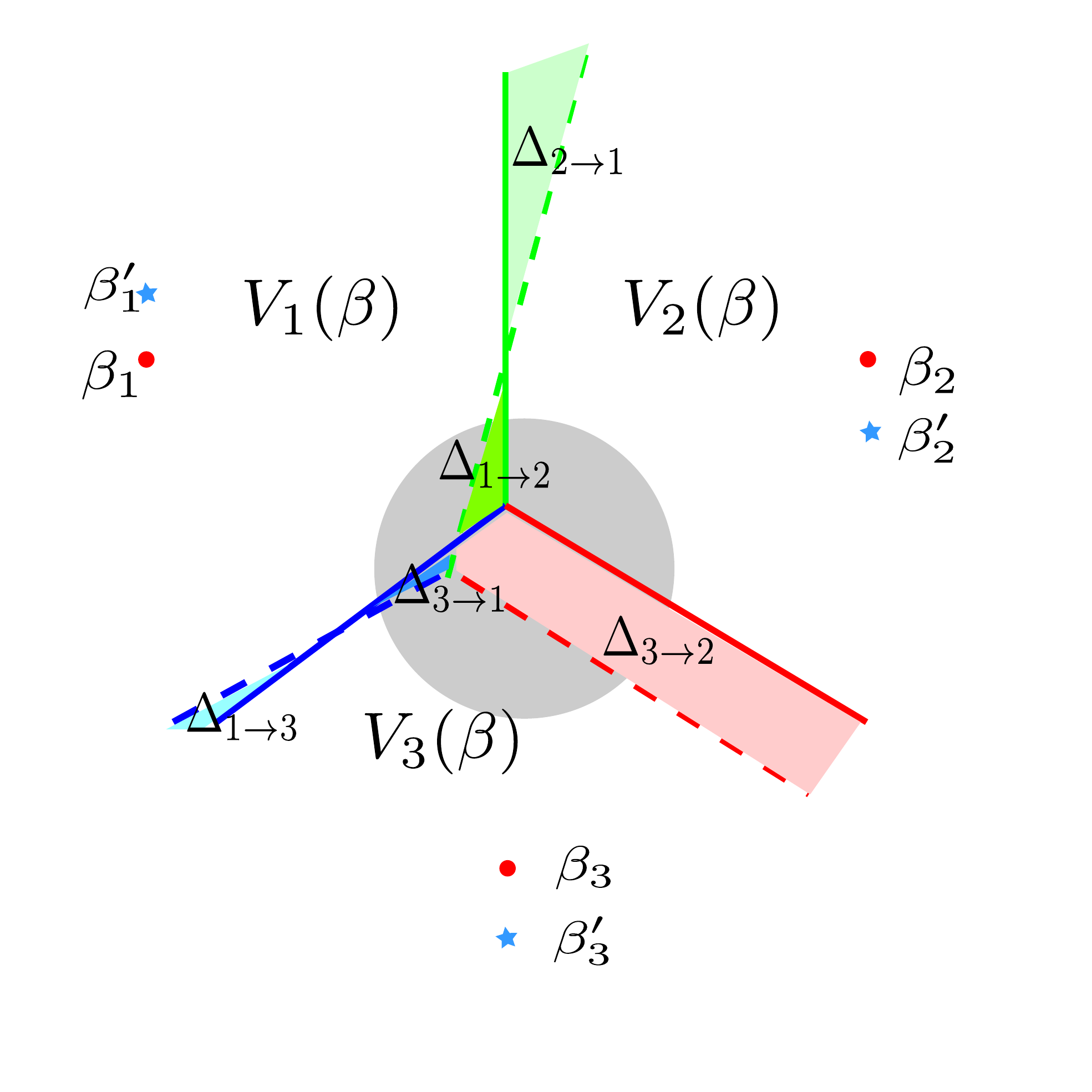}
\par\end{centering}
\caption{\label{fig:Illustration-change-assignment}Illustration of the set $\Delta_{i\to j}^{\bm{v}}(t).$ The red dots represent the original centers $ \{\bbeta_{i}\}$ and the blue stars represent the perturbed centers $ \{ \bbeta_{i}^{\prime} = \bbeta_{i}+ t\bm{v}_i \}$. The blue solid lines are the original Voronoi boundaries $\partial(\bm{\beta})$,  and the red dashed lines are the perturbed Voronoi boundaries $\partial(\bm{\beta}^{\prime})$. Left Panel: the moving direction $ \bv=( \bv_{1}, \bv_{2}, \bv_{3})$ satisfies $\bm{v}_{1}=\bm{v}_{2}=\bm{v}_{3}$, in which case the Voronoi boundaries are shifted parallelly by $t\bm{v}_{1}$. Right panel: the moving direction satisfies $\bm{v}_{1}=-\bm{v}_{2}=-\bm{v}_{3}$, in which case the boundary $\partial_{1,2}(\bm{\beta}^{\prime})$ rotates around the mid point $\frac{ \bbeta_{1}+ \bbeta_{2}}{2}$, $\partial_{1,3}(\bbeta^{\prime})$ rotates around the mid point $\frac{\bbeta_{1}+\bbeta_{3}}{2}$, and $\partial_{2,3}( \bbeta^{\prime})$ shifts parallelly in the direction of $\bm{v}_{2}$. Each colored region represents $\Delta_{i\to j}^{\bm{v}}(t)$, the set of points that change the association from the $i$-th center to the $j$-th center. }
\end{figure}

Lemmas~\ref{lem:differentiability} and \ref{lem:necessary} provide a first-order characterization of the local minima of $G$. For a more precise characterization, we need to account for the change in the Voronoi sets $\vor(\bbeta)$ and its boundaries $\partial(\bbeta)$ when perturbing $\bbeta$ to $\bbeta+t\bv$. With $t>0$ considered arbitrarily small, we make two observations:
\begin{enumerate}
\item The Voronoi set boundaries $\partial(\bbeta+t\bv)$ change continuously with respect to $t$.
\item When $\bbeta$ is perturbed by $t\bv$, the points swept by the boundaries $\partial(\bbeta+t\bv)$ change their association from one Voronoi set to another.
\end{enumerate}
Formally, for each pair $(i,j)\in[k]\times[k]$ we define the set
\[
\Delta_{i\to j}^ {\bv}(t):=\vor_{i}(\bbeta)\cap\vor_{j}(\bbeta+t\bv),
\]
which is the set of points that change association from the $i$-th fitted center to the $j$-th fitted center due to the perturbation $t\bv$. Being the intersection of two polyhedra, the set $\Delta_{i\to j}^ {\bv}(t)$ is a also polyhedron. An illustration of $\Delta_{i\to j}^ {\bv}(t)$ is provided in Figure~\ref{fig:Illustration-change-assignment}.

As previously shown in Lemma~\ref{lem:necessary}, any non-degenerate local minimum $\bbeta$ must have distinct components, so the corresponding Voronoi sets are also pairwise distinct. The same holds for the perturbed solution  $\bbeta+t\bv$ when $t$ is sufficiently small. In this case, we can decompose the directional objective function $H^ {\bv}$ as follows:
\begin{align}
H^ {\bv}(t) & =\sum_{i=1}^{k}\int_{\vor_{i}(\bbeta+t\bv)}\|\x-\bbeta_{i}-t\bv_{i}\|^{2}f(\x)\ddup\x\nonumber \\
 & =\underbrace{\sum_{i=1}^{k}\int_{\vor_{i}(\bbeta)}\|\x-\bbeta_{i}-t\bv_{i}\|^{2}f(\x)\ddup\x}_{U^ {\bv}(t)}\nonumber \\
 &\qquad+\underbrace{\sum_{(i,j):i\neq j}\int_{\Delta_{i\to j}^ {\bv}(t)}(\|\x-\bbeta_{j}-t\bv_{j}\|^{2}-\|\x-\bbeta_{i}-t\bv_{i}\|^{2})f(\x)\ddup\x}_{W^ {\bv}(t)}.
 \label{eq:H_decompose}
\end{align}
Here $U^ {\bv}(t)$ and $W^ {\bv}(t)$ correspond to the change in the objective value from two different sources. In particular,  $U^ {\bv}(t)$ is due to the change in the distance between the data points and the centers, and $W^ {\bv}(t)$ is due to the data points changing association with the Voronoi sets.
\begin{remark}
\label{rem:non-positive}Note that for each $\x\in\Delta_{i\to j}^ {\bv}(t)$, the integrand $\|\x-\bbeta_{j}-t\bv_{j}\|^{2}-\|\x-\bbeta_{i}-t\bv_{i}\|^{2}$ in the definition of $W^ {\bv}(t)$ is non-positive.
\end{remark}
\begin{proof}[Proof of equation (\ref{eq:H_decompose})]
When the Voronoi sets $\vor(\bbeta)$ are perturbed to $\vor(\bbeta+t\bv)$, each point $\x$ in $\real^{d}$ either remains associated with the $i$-th center for some $i$, or changes its association from the $i$-th center to the $j$-th center for some $j\neq i$. In the first case, assuming that $\x\in \vor_{i}(\bbeta) \cap \vor_{i}(\bbeta+t\bv)$, we see that the contribution from $\x$ to $H^ {\bv}(t)$ appears in $U^ {\bv}$. In the second case, assuming that $\x\in\Delta_{i\to j}^ {\bv}(t)=\vor_{i}(\bbeta) \cap \vor_{j}(\bbeta+t\bv)$, we can write the contribution from $\x$ as 
\[
\|\x-\bbeta_{j}-t\bv_{j}\|^{2}=\|\x-\bbeta_{i}-t\bv_{i}\|^{2}+(\|\x-\bbeta_{j}-t\bv_{j}\|^{2}-\|\x-\bbeta_{i}-t\bv_{i}\|^{2}),
\]
which appears in both $U^ {\bv}$ and $W^ {\bv}$.
\end{proof}

\subsection[for the toc]{\label{sec:core_upper_bound}Smooth Upper Bounds of $H^{\bv}$}

The expression~(\ref{eq:H_decompose}) for $H^ {\bv}$ is quite complicated. To understand the local minima of $H^ {\bv}$, we instead study a simpler, better-behaved upper bound function  of $H^ {\bv}$ that preserves the local minima and is amenable to calculus tools. In particular, we make use of the following lemma.
\begin{lemma}[Smooth upper bound]
\label{lem:core-upper-bound}
Suppose that $h,\widetilde{h}:\real\to\real$ are two continuous functions that satisfy $h\leq\widetilde{h}$ and $h(0)=\widetilde{h}(0)$. If $0$ is a local minimum of $h$, then $0$ is also a local minimum of $\widetilde{h}$; moreover, we have $\lim_{t\to0}\frac{\widetilde{h}(t)-\widetilde{h}(0)}{t}=0$ and $\lim_{t\to0}\frac{\widetilde{h}(t)-\widetilde{h}(0)}{t^{2}}\ge0$ whenever the limits exist.
\end{lemma}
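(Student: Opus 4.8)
The plan is to derive all three conclusions from a single elementary observation about the interplay of the hypotheses near the origin. Since $0$ is a local minimum of $h$, there is some $\varepsilon>0$ with $h(t)\ge h(0)$ for all $t\in(-\varepsilon,\varepsilon)$. Feeding in the pointwise bound $h\le\widetilde h$ together with the matching of values $h(0)=\widetilde h(0)$ yields the chain
\[
\widetilde h(t)\;\ge\;h(t)\;\ge\;h(0)\;=\;\widetilde h(0),\qquad\forall\,t\in(-\varepsilon,\varepsilon),
\]
which already proves that $0$ is a local minimum of $\widetilde h$. (Continuity is not actually needed for this step; it is listed in the hypothesis only because the lemma is applied to continuous $h,\widetilde h$ elsewhere.)

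For the remaining two assertions I would simply track the sign of the difference quotient on a punctured neighborhood of $0$. The displayed inequality gives $\widetilde h(t)-\widetilde h(0)\ge0$ on $(-\varepsilon,\varepsilon)$, so $\frac{\widetilde h(t)-\widetilde h(0)}{t^{2}}\ge0$ for every $t\in(-\varepsilon,0)\cup(0,\varepsilon)$; hence if $\lim_{t\to0}\frac{\widetilde h(t)-\widetilde h(0)}{t^{2}}$ exists it must be $\ge0$. For the first-order limit, assume $L:=\lim_{t\to0}\frac{\widetilde h(t)-\widetilde h(0)}{t}$ exists. The right-hand limit forces $L\ge0$ (nonnegative numerator over positive denominator), while the left-hand limit forces $L\le0$ (nonnegative numerator over negative denominator), so $L=0$.

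I expect essentially no obstacle here: this is a soft real-analysis fact and the proof is just careful bookkeeping of signs. The only point to be mindful of is that the statements about the limits are conditional ("whenever the limits exist"), so the argument must never assert differentiability of $\widetilde h$ at $0$ — it only uses that the difference quotient is sign-definite on a deleted neighborhood.
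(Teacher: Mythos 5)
Your proposal is correct and follows essentially the same route as the paper's proof: the sandwich chain $\widetilde h(t)\ge h(t)\ge h(0)=\widetilde h(0)$ on a neighborhood of $0$, followed by sign analysis of the difference quotients (the paper simply invokes the first-order optimality condition where you spell out the two one-sided limits, but the content is identical).
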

\begin{proof}
Since $0$ is a local minimum of $h$, we have $\widetilde{h}(0)=h(0)\le h(t)\le\widetilde{h}(t)$ for all $t$ in a neighborhood of $0$, so $0$ is also a local minimum of $\widetilde{h}$. The first-order optimality condition for $0$ gives $\lim_{t\to0}\frac{\widetilde{h}(t)-\widetilde{h}(0)}{t}=\widetilde{h}'(0)=0$. Moreover, we have $\widetilde{h}(t)-\widetilde{h}(0)\ge0\implies\frac{\widetilde{h}(t)-\widetilde{h}(0)}{t^{2}}\ge0$ for all $t\neq0$ in a neighborhood of $0$, which implies that  $\lim_{t\to0}\frac{\widetilde{h}(t)-\widetilde{h}(0)}{t^{2}}\ge0$.
\end{proof}
With the above lemma, we can study the structure of each local minimum of $H^ {\bv}$ (and hence that of $G$) by exploiting the optimality conditions of a smooth upper bound of $H^ {\bv}$ that is tight at the minimum. Let us take a first step in constructing such an upper bound. In view of Remark~\ref{rem:non-positive}, we can obtain an upper bound of the function $W^ {\bv}$ defined in~(\ref{eq:H_decompose}) by only considering those  pairs $(i,j)$ for which $\vor_{i}(\bbeta)\sim\vor_{j}(\bbeta)$ are adjacent:
\begin{align*}
W^ {\bv}(t)
&\; \leq\sum_{(i,j):\vor_{i}(\bbeta)\sim\vor_{j}(\bbeta)}\int_{\Delta_{i\to j}^ {\bv}(t)}(\|\x-\bbeta_{j}-t\bv_{j}\|^{2}-\|\x-\bbeta_{i}-t\bv_{i}\|^{2})f(\x)\ddup\x\\
 &\; =\sum_{(i,j):\vor_{i}(\bbeta)\sim\vor_{j}(\bbeta)}\frac{1}{k}\sum_{s=1}^{k} \underbrace{\int_{\Delta_{i\to j}^ {\bv}(t)}(\|\x-\bbeta_{j}-t\bv_{j}\|^{2}-\|\x-\bbeta_{i}-t\bv_{i}\|^{2})f_{s}(\x)\ddup\x}_{W_{i\to j,s}^ {\bv}(t)}.
\end{align*}
Here the quantity $W_{i\to j,s}^ {\bv}$ represents the contribution from the points in the $s$-th true cluster that change association from the $i$-th center to the $j$-th center. Combining the above inequality with equation~(\ref{eq:H_decompose}), we obtain the following upper bound
\begin{equation}
H^ {\bv}(t)\leq U^ {\bv}(t)+\sum_{(i,j):\vor_{i}(\bbeta)\sim\vor_{j}(\bbeta)}\frac{1}{k}\sum_{s=1}^{k}W_{i\to j,s}^ {\bv}(t).\label{eq:H_upper_bound}
\end{equation}
In the proofs of our main theorems, we will build upon equation~\eqref{eq:H_upper_bound} to derive further smooth upper bounds of $H^ {\bv}$.

\section{Proof of Theorem~\ref{thm:main_ball}}
\label{sec:proof_main_ball}

In this section, we prove our main result under the Stochastic Ball Model (Definition~\ref{def:SBM}). Throughout the proof, let $\bbeta$ be a fixed local minimum of the $k$-means objective $G$. Note that $G$ is invariant under translation of the space and permutation of the true centers. Consequently, we may assume that $\bbeta_{1}^{*}=\bzero$ and $\max_{s\in[k]}\|\bbeta_{s}^{*}\|\leq\deltamax$ without loss of generality.

\paragraph{Notations:}

We use $V_{d}$ to denote the volume of a unit ball in $\real^{d}$ with respect to the Lebesgue measure. For a set $T\subset \real^{d}$, $\textup{int}(T)$ denotes its interior and $\revol(T)$ denotes its relative volume with respect to the Lebesgue measure on the affine hull of $T$, with the convention that $\revol(\emptyset) = 0$. For two vector $\bu,\bu'\in\real^{d}$, $\angle(\bu,\bu'):=\arccos(\frac{\bu^{\top}\bu'}{\|\bu\|\|\bu'\|})\in[0,\pi]$ is the angle between $\bu$ and $\bu'$. For each pair $i\neq j$, we use $\linspace_{i,j,s}(\bbeta)$ to denote the two-dimensional plane that contains $\bbeta_{i}$, $\bbeta_{j}$ and $\bbeta_{s}^{*}$ (if such a plane is not unique, we fix an arbitrary one). Since we are concerned with a fixed local minimum $\bbeta$, we sometimes suppress the dependency on $\bbeta$ and write, for example, $\vor_{i}\equiv\vor_{i}(\bbeta)$, $\partial_{i,j}\equiv\partial_{i,j}(\bbeta)$ and $\linspace_{i,j,s}\equiv\linspace_{i,j,s}(\bbeta)$.\\

To prove Theorem~\ref{thm:main_ball}, we in fact establish a more general result as given in Theorem~\ref{prop:ball-family-bounds}, which provides a family of bounds parametrized by $\lambda>0$.

\begin{theorem}[Family of bounds for ball model] 
\label{prop:ball-family-bounds}
Under the Stochastic Ball Model, let $\bbeta=(\bbeta_1,\ldots, \bbeta_k)$ be a local minimum of the $k$-means objective function $G$ defined in~\eqref{eq:pop-kmean} and $\lambda>0$ be an arbitrary fixed number. For each $i,j,s\in[k]$, let $\rho_s(\partial_{i,j}):=\frac{1}{V_d\rad^d}\revol(\partial_{i,j}\cap \ball_s)$. For each $i\in[k]$, define the sets
\begin{align*}
    T_i: = \big\{s\in [k]:\vor_i\cap \ball_s\neq\emptyset\big\}
    \quad\text{and}\quad
    \Wint_i:=\big\{s\in [k]:\bbeta_s^* \in \textup{int}(\vor_i)\big\} \subseteq T_i.
\end{align*}
Then the following is true for each $i\in[k]$.
\begin{enumerate}
    \item If  $\rho_s(\partial_{j,\ell})>\lambda$ for some $s\in T_i$ and some pair $(j,\ell)$, then 
    \begin{align*}
        \|\bbeta_i-\bbeta_s^*\|\leq \frac{k}{\lambda}+3r.
    \end{align*}
    \item 
    For each $s\in T_i$, if $\rho_s(\partial_{j,\ell})\leq \lambda$ for all pair $(j,\ell)$, then the following bounds hold:
    \begin{align*}
    \mP_s\big(\vor_i\big) \geq& 1-k^2\lambda\rad, \quad \text{if } s\in \Wint_i,\\
    \mP_s\big(\vor_i\big) \leq& k\lambda\rad, \,\quad\qquad \text{if } s\in T_i \setminus \Wint_i.
    \end{align*}
    Furthermore, if $\rho_s(\partial_{j,\ell})\leq \lambda$ for all $s\in T_i$ and all pair $(j,\ell)$, then:
    \begin{enumerate}
    \item When $|\Wint_i|=0$, we have 
    $$
        \mP (\vor_i)\leq k\lambda\rad.
    $$
    \item When $|\Wint_i|>0$, we have
    \begin{align*}
        \|\bbeta_i-\bb_i\| \leq& \frac{k\rad}{1-k^{2}\lambda\rad}+\frac{k\rad(k^{2}\lambda\rad)}{(1-k^{2}\lambda\rad)^{2}}+\frac{2k^{2}\lambda\rad}{1-k^{2}\lambda\rad}\deltamax,
    \end{align*}
    where  $\bb_i:= \frac{1}{|\Wint_i|}\sum_{s\in \Wint_i}\bbeta_s^*$.
    \end{enumerate}
\end{enumerate}
\end{theorem}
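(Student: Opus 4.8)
The plan is to combine the first-order optimality conditions of Section~\ref{sec:necessary} --- at a local minimum every $\bbeta_i$ is the $f$-barycenter of its Voronoi cell, so $\bbeta_i=\big(\sum_{s}m_{i,s}\bc_{i,s}\big)/\big(\sum_s m_{i,s}\big)$ with $m_{i,s}:=\mP_s(\vor_i)=\vol(\vor_i\cap\ball_s)/(V_d\rad^d)$, and in particular $\bbeta_i\in\textup{conv}\big(\bigcup_{s\in T_i}\ball_s\big)$ --- with one geometric volume estimate and a few directional perturbations analyzed through the smooth-upper-bound machinery of Section~\ref{sec:core_upper_bound}. The distance bounds then reduce to understanding how the mass $\{m_{i,s}\}_s$ of each cell is spread over the true balls.

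First I would isolate the geometric lemma that drives Part~2: if $K\subseteq\ball_s$ is convex and $\bbeta_s^*\notin K$, then $\vol(K)\le\rad\cdot\revol(\partial K\setminus\partial\ball_s)$, i.e.\ the volume is controlled by the flat part of the boundary. Taking $w$ to be the unit vector from $\bbeta_s^*$ to the point of $K$ closest to $\bbeta_s^*$, one has $\langle x-\bbeta_s^*,w\rangle>0$ for every $x\in K$, so $K$ lies in a slab of width $<\rad$ normal to $w$; moreover the part of $\partial K$ whose outward normal has negative inner product with $w$ cannot lie on $\partial\ball_s$ (a spherical boundary point $x$ would have outward normal $(x-\bbeta_s^*)/\rad$ and hence $\langle x-\bbeta_s^*,w\rangle<0$, a contradiction), so that ``lower'' boundary is entirely flat; projecting $K$ along $w$ gives $\vol(K)\le(\text{slab width})\cdot\revol(\text{lower boundary})\le\rad\cdot\revol(\text{flat part})$. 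Applied with $K=\vor_i\cap\ball_s$, whose flat boundary lies inside $\bigcup_{j\ne i}(\partial_{i,j}\cap\ball_s)$ of total relative volume $\le(k-1)\lambda V_d\rad^d$ under the hypothesis $\rho_s(\partial_{j,\ell})\le\lambda$, this gives $m_{i,s}\le(k-1)\lambda\rad\le k\lambda\rad$ whenever $\bbeta_s^*\notin\vor_i$ --- in particular for $s\in T_i\setminus\Wint_i$. Summing the same estimate over the $\le k-1$ cells $\vor_{i'}$, $i'\ne i$, each of which omits $\bbeta_s^*$ when $s\in\Wint_i$, yields $\mP_s(\vor_i^c)\le(k-1)^2\lambda\rad\le k^2\lambda\rad$, hence $m_{i,s}\ge1-k^2\lambda\rad$ for $s\in\Wint_i$. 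Part~2(a) follows at once: $|\Wint_i|=0$ forces $\mP(\vor_i)=\tfrac1k\sum_{s\in T_i}m_{i,s}\le\tfrac{|T_i|}{k}k\lambda\rad\le k\lambda\rad$. Part~2(b) is routine bookkeeping: substitute the mass estimates and the bounds $\|\bc_{i,s}-\bbeta_s^*\|\le\rad$ ($s\in\Wint_i$), $\|\bc_{i,s}-\bb_i\|\le\deltamax+\rad$ ($s\in T_i\setminus\Wint_i$), $\|\bbeta_s^*-\bb_i\|\le\deltamax$ ($s\in\Wint_i$), and $\sum_s m_{i,s}\ge|\Wint_i|(1-k^2\lambda\rad)$ into the barycenter identity, split the sum over $\Wint_i$ and over $T_i\setminus\Wint_i$, and tidy up with the triangle inequality and a geometric series.

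For Part~1, put $P:=\partial_{j,\ell}\cap\ball_s$, so $\revol(P)>\lambda V_d\rad^d$ and $P\subseteq\vor_j\cap\vor_\ell$; since $P$ has positive relative volume, $\mP(\vor_j),\mP(\vor_\ell)>0$ and the barycenter identity applies to $\bbeta_j,\bbeta_\ell$, and $\bbeta_j\ne\bbeta_\ell$. Let $m_0$ be the centroid of $P$ (so $m_0\ne\bbeta_j$). Picking any $y\in\vor_i\cap\ball_s$, we have $\|\bbeta_i-y\|\le\|\bbeta_j-y\|$ because $y\in\vor_i$, and since $m_0,y\in\ball_s$, $\|\bbeta_i-\bbeta_s^*\|\le\|\bbeta_j-m_0\|+\|m_0-y\|+\|y-\bbeta_s^*\|\le\|\bbeta_j-m_0\|+3\rad$; so it remains to prove $\|\bbeta_j-m_0\|\le k/\lambda$. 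Here I would perturb only $\bbeta_j$, along the unit vector toward $m_0$. The barycenter identity kills the first-order term, so $U^{\bv}(t)-G(\bbeta)=t^2\mP(\vor_j)$, while the sliver of $\ball_s$-mass swept across $P$ from $\vor_\ell$ into $\vor_j$ contributes to $W^{\bv}(t)$ a negative term whose leading $t^2$-coefficient is of order $\|\bbeta_j-m_0\|^2\,\revol(P)/\big(\|\bbeta_j-\bbeta_\ell\|\,V_d\rad^d\big)$; feeding this into Lemma~\ref{lem:core-upper-bound} applied to a smooth majorant of $H^{\bv}$ built from~\eqref{eq:H_upper_bound}, and using $\mP(\vor_j)\le1$ and $\revol(P)>\lambda V_d\rad^d$, yields $\|\bbeta_j-m_0\|^2\lesssim k\|\bbeta_j-\bbeta_\ell\|/\lambda$. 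The symmetric perturbation of $\bbeta_\ell$ gives the same with $\ell$ in place of $j$, and combining the two with $\|\bbeta_j-\bbeta_\ell\|\le\|\bbeta_j-m_0\|+\|m_0-\bbeta_\ell\|$ bootstraps to $\|\bbeta_j-\bbeta_\ell\|\lesssim k/\lambda$ and hence $\|\bbeta_j-m_0\|\le k/\lambda$.

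The hard part is making this last perturbation quantitative: one must pin down exactly the rate at which the Voronoi boundary $\partial_{j,\ell}$ sweeps past $m_0$ and the mass it carries (so the $t^2$-coefficient above, including its constant, is correct), and --- since $H^{\bv}$ is not twice differentiable --- dominate it by a genuinely smooth function that is tight at $t=0$, which requires controlling the error terms in the $t^2$-expansions of the relevant $W_{i'\to j',s'}^{\bv}$ uniformly for small $t$. The geometric volume lemma in Part~2 also needs a careful boundary decomposition in general dimension $d$, but that is the more benign of the two technical points.
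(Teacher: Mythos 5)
Your Part~2 follows essentially the paper's route: the volume estimate you isolate (a convex subset of $\ball_s$ not containing $\bbeta_s^*$ has volume at most $\rad$ times the relative volume of its flat boundary) is the same Cavalieri/slab argument as the paper's Lemma~\ref{lem:prob-intersection}, and the barycenter bookkeeping for Part~2(b) matches Lemmas~\ref{lem:bound-A}--\ref{lem:bound-B} (your crude bound $\|\bc_{i,s}-\bbeta_s^*\|\le\rad$ in place of the paper's $\rad(1-m_{i,s})/m_{i,s}$ still fits inside the stated budget, whose leading term is $k\rad/(1-k^2\lambda\rad)$; just note that for $s\in T_i\setminus\Wint_i$ you should state the hypothesis as $\bbeta_s^*\notin\textup{int}(\vor_i)$ and use a supporting rather than strictly separating hyperplane, since $\bbeta_s^*$ may lie on $\partial\vor_i$). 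Part~1, however, is a genuinely different and arguably cleaner perturbation. The paper perturbs the \emph{pair} $(\bbeta_j,\bbeta_\ell)$ twice --- same-direction translation and opposite-direction rotation about the midpoint (Propositions~\ref{prop:same-direction} and~\ref{prop:opposite-direction}) --- to get the two inequalities $d_{j,\ell}\,\rho_s(\partial_{j,\ell})\le k/2$ and $D_{j,\ell,s}^2\,\rho_s(\partial_{j,\ell})/d_{j,\ell}\le k/2$, then combines them; the $d=1$ case needs separate treatment. You perturb a \emph{single} center toward the centroid $m_0$ of $P=\partial_{j,\ell}\cap\ball_s$ and use the centroid identity to lower-bound the second-order decrease by $\|\bbeta_j-m_0\|^2\rho_s(\partial_{j,\ell})/(2kd_{j,\ell})$; I have checked this leading coefficient and it is correct, and your argument handles $d=1$ uniformly (the variance term simply vanishes). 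What the paper's version buys is that the two boundary motions are pure translations/rotations, for which the sandwich bounds are already worked out; your single-center motion simultaneously translates and rotates $\partial_{j,\ell}$, so you must prove a third proposition of the same type, which you correctly flag as the remaining technical work.

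Two quantitative caveats. First, your bootstrap gives $\|\bbeta_j-m_0\|^2\le 2k\,d_{j,\ell}/\lambda$ with $d_{j,\ell}\le\|\bbeta_j-m_0\|$ (as $m_0\in\partial_{j,\ell}$), hence $\|\bbeta_j-m_0\|\le 2k/\lambda$, not $k/\lambda$; the asserted final step ``hence $\|\bbeta_j-m_0\|\le k/\lambda$'' does not follow from a chain of $\lesssim$'s with constant exactly one. You would prove Part~1 with $2k/\lambda+3\rad$, which only perturbs the absolute constants propagated into Theorem~\ref{thm:main_ball}. Second, in the statement of Part~2 the hypothesis $\rho_s(\partial_{j,\ell})\le\lambda$ is imposed for \emph{all} pairs $(j,\ell)$, not only those adjacent to $\vor_i$; this is exactly what your summation over the cells $\vor_{i'}$, $i'\ne i$, for $s\in\Wint_i$ uses, so make sure the volume lemma is invoked with the facets of each $\vor_{i'}$, as you do.
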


The proof of Theorem~\ref{prop:ball-family-bounds}, which lies in the core of our analysis, is given in Section~\ref{sec:proof-ball-family-bounds}. \\

We now derive our main Theorem~\ref{thm:main_ball} from Theorem~\ref{prop:ball-family-bounds}. Doing so involves several elementary though somewhat tedious steps. To this end, we fix $\lambda = \frac{c}{\sqrt{\rad \deltamax}} = \frac{c}{r\sqrt{\snrmax}}$. Recall the assumption in the main theorem that $\snrmin\geq 10ck^2\sqrt{\snrmax}$ and $\snrmax > 4c^2k^4$ for $c>3$. This assumption implies that $k^2\lambda \rad<0.5$. If $\rho_s(\partial_{i,j})>\lambda$, we say that a true cluster  $\ball_s$ encloses the Voronoi boundary $\partial_{i,j}$ with a large relative volume; otherwise, we say that $\ball_s$  encloses the Voronoi boundary $\partial_{i,j}$ with a small relative volume. 

We first state two simple implications of Theorem~\ref{prop:ball-family-bounds}. These observations are used frequently in the subsequent proof.
\begin{enumerate}[label=Observation~{\arabic*}.,ref={\arabic*},leftmargin=8em]
    \item \label{pro1} For each $i\in[k]$, there exists at most one $s\in T_i$ such that $\rho_s(\partial_{j,\ell})>\lambda$ for some pair $(j,\ell)$. In words, each Voronoi set $\vor_i$ can intersect at most one true cluster $\ball_s$ that encloses some Voronoi boundary with a large relative volume. 
    \item \label{pro2} 
    For each $s\in [k]$, if $\rho_s(\partial_{j,\ell})\leq \lambda$ for all pair $(j,\ell)$, then $\bbeta_s^*\in \vor_i$ implies that  $s \in \Wint_i$. In words, if all Voronoi boundaries enclosed by a true cluster $\ball_s$ have small relative volumes, then the center $\bbeta_s^*$ cannot itself lie on a Voronoi boundary.
\end{enumerate}
\begin{proof}
We prove these observations by contradiction. 

For Observation~\ref{pro1}, suppose otherwise that there exists $s \neq s^\prime \in T_i$ for which the statement holds. Part~1 of Theorem~\ref{prop:ball-family-bounds} ensures that $\|\bbeta_i-\bbeta_s^*\|\leq \frac{k}{\lambda}+3\rad$ and $\|\bbeta_i-\bbeta_{s^{\prime*}}\|\leq \frac{k}{\lambda}+3\rad$. Using the triangle inequality and the value for $\lambda$, we obtain that $\|\bbeta_s^* - \bbeta_{s^{\prime}}^*\|\leq \frac{2k}{\lambda}+6\rad \leq \deltamax\frac{8k}{c\sqrt{\snrmax}}$, which contradicts the assumption on $\snrmin$. 

For Observation~\ref{pro2}, suppose otherwise that $\bbeta_s^* \in \vor_i$ and $ s \notin  \Wint_i$ for some $i\in [k]$, which implies that $\bbeta^*_s$ lies on a Voronoi boundary and hence $\bbeta_s^*\not\in \Wint_j, \forall j\in[k]$. If $s \in T_j$, then Part~2 of Theorem~\ref{prop:ball-family-bounds} ensures that $\mP_s(\vor_j)\leq k\lambda\rad, \forall j\in [k]$; if $s\notin T_j$, then $\mP_s(\vor_j)=0$ by definition of $T_j$. Summing over $j\in[k]$, we obtain $1=\mP_s(\ball_s)=\sum_{j\in [k]} \mP_s(\vor_j)\leq k^2\lambda\rad<0.5$, thus a contradiction. 
\end{proof}

We now construct a partition $\bigcup_{a=0}^m S_a = [k]$ of the fitted centers and a partition $\bigcup_{a=1}^m S^*_a = [k]$ of the true centers that satisfy the conclusion of Theorem~\ref{thm:main_ball}. These partitions lead to an association between the fitted centers in $S_a$ and the true centers in $S^*_a$, for each $a=1,\ldots, m$. The construction proceeds in three steps.

\paragraph{Step 1 (almost-empty association):}
First consider the fitted centers indexed by the set
\begin{align*}
    S_0: =\left\{i\in [k]: \rho_s(\partial_{j,\ell})\leq \lambda,\forall (s,j,\ell)\in T_i\times [k]\times[k]; |\Wint_i|=0 \right\} .
\end{align*}
Part~2(a) of Theorem~\ref{prop:ball-family-bounds} ensures that  for all $i\in S_0$, we have $\mP(\vor_i)\leq k\lambda \rad = \frac{ck}{\sqrt{\snrmax}}$ as claimed in Theorem~\ref{thm:main_ball}.

\paragraph{Step 2 (many/one-fit-one association):} 

We next consider the fitted centers indexed by the set
\begin{align*}
    \mathcal{J}:=\left\{i\in [k]:|\Wint_i|\leq 1\right\} \setminus S_0.
\end{align*}
For each $i\in \mathcal{J}$, there are two complementary cases:
\begin{itemize}
    \item  $\rho_s(\partial_{j,k})\leq \lambda $ for all $(s,j,\ell)\in T_i\times[k]\times[k]$; that is, all true clusters that intersect $\vor_i$ only enclose Voronoi boundaries with a small relative volume. Since $i\notin S_0$, by definition of $S_0$ and $\mathcal{J}$ we must have $|\Wint_i|=1$; say $\Wint_i = \{s\}$. Applying Part~2(b) of Theorem~\ref{prop:ball-family-bounds}, we have
        \begin{align}
        \|\bbeta_i-\bbeta^*_s\| = \|\bbeta_i-\bb_i\| \leq\frac{k\rad}{1-k^{2}\lambda\rad}+\frac{k\rad(k^{2}\lambda\rad)}{(1-k^{2}\lambda\rad)^{2}}+\frac{2k^{2}\lambda\rad}{1-k^{2}\lambda\rad}\deltamax\leq \deltamax\frac{8ck^2}{\sqrt{\snrmax}}, \label{eq:bound-to-mean-of-centers}
        \end{align}
    where the last step holds due to $k^2\lambda\rad<\frac{1}{2}$ and our separation assumption on $\snrmax$. 
    \item $\rho_s(\partial_{j,\ell})> \lambda$ for some $(s,j,\ell)\in T_i\times[k]\times[k]$;  that is, there exists some ground truth cluster $\ball_s$ that encloses a Voronoi boundary with a large relative volume. Applying Part~1 of Theorem~\ref{prop:ball-family-bounds} and plugging the value of $\lambda$, we obtain that $\|\bbeta_i-\bbeta_s^*\|\leq \frac{k}{\lambda}+3\rad\leq \deltamax\frac{4k}{c\sqrt{\snrmax}}\leq \deltamax\frac{8ck^2}{\sqrt{\snrmax}}$.
\end{itemize}
    In both cases, we have $\|\bbeta_i-\bbeta_s^*\| \leq \deltamax\frac{8ck^2}{\sqrt{\snrmax}}$ as claimed in Theorem~\ref{thm:main_ball}. For each distinct $s\in[k]$ that appears in the above arguments, let $S^*_a = \{s\}$ and let the corresponding $S_a$ index those $\bbeta_i$'s for which either of the two cases holds. It is clear that the sets $\{S_a\}$ constructed here are disjoint. Indeed, for each $i$ the above two cases are exclusive, where in the first case $\Wint_i$ contains $s$ and only $s$, and in the second case above the index $s$ is unique by Observation~\ref{pro1}.

\paragraph{Step 3 (one-fit-many association):} 
We are left with the fitted centers indexed by the set
\begin{align*}
    \mathcal{K}:=\left\{i\in[k]: |\Wint_i|\geq 2 \right\} = [k]\setminus\big(S_0 \cup \mathcal{J}\big).
\end{align*}
Similarly to before, for each $i\in \mathcal{K}$, there are two complementary cases:
\begin{itemize}
    \item  $\rho_s(\partial_{j,\ell})\leq \lambda $ for all $ (s,j,\ell) \in T_i \times [k] \times [k]$. Applying Part~2(b) of Theorem~\ref{prop:ball-family-bounds} and following the same steps as in equation~\eqref{eq:bound-to-mean-of-centers}, we obtain that $\|\bbeta_i-\bb_i\|\leq \deltamax\frac{8ck^2}{\sqrt{\snrmax}}$. In this case, we let $S_a=\{i\}$ and $S^*_a = \Wint_i$. Note that $|S^*_a| = |\Wint_i|\ge 2 $ by definition of $\mathcal{K}$.
    \item $\rho_s(\partial_{j,\ell})> \lambda$ for some $ (s,j,\ell) \in T_i \times [k] \times [k]$.  In this case, applying Part~1 of Theorem~\ref{prop:ball-family-bounds} would show that $\bbeta_i$ is close to $\bbeta^*_s$. In fact, we can establish a stronger result showing that $\bbeta_i$ is close to the mean of all the true centers contained in its Voronoi set, regardless of whether we include or exclude  $\bbeta_s^*$. This is the content of the following lemma, which is proved in Section~\ref{sec:proof-prox-center}.
    \begin{lemma}[Proximity to mean of true centers] 
    \label{lem:prox-center} Under the assumption of Theorem~\ref{thm:main_ball}, let $\bbeta$ be a local minimum of $G$. The following is true for each $i\in[k]$. If $\rho_s(\partial_{j,\ell})>\lambda=\frac{c}{\sqrt{\rad\deltamax}}$ for some $(s,j,\ell) \in T_i \times [k]\times [k]$ and  $|\Wint_i\setminus\{s\}|\geq 1$, then we have the bounds
    \begin{align*}
        \|\bbeta_i-\bb_{i}^{-}\|\leq  \deltamax\frac{11ck^2}{\sqrt{\snrmax}}
        \quad\text{and}\quad
        \|\bbeta_i-\bb_{i}^{+}\|\leq  \deltamax\frac{11ck^2}{\sqrt{\snrmax}},
    \end{align*}
    where $\bb_{i}^{-}:=\frac{1}{|\Wint_i\setminus\{s\}|}\sum_{s'\in \Wint_i\setminus\{s\}}\bbeta_{s'}^*$ and  $\bb_{i}^{+}:=\frac{1}{|\Wint_i\cup\{s\}|}\sum_{s'\in \Wint_i\cup\{s\}}\bbeta_{s'}^*$; moreover, we have $|\Wint_i \setminus \left\{s\right\}|\geq 2 $.
    \end{lemma}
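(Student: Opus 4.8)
The plan is to combine the center-of-mass characterization of local minima (Lemma~\ref{lem:necessary}) with the two halves of Theorem~\ref{prop:ball-family-bounds}, exploiting a cancellation in the center-of-mass identity. First I would set up the bookkeeping for the clusters that meet $\vor_i$. Since $\Wint_i\setminus\{s\}\ne\emptyset$, the set $\vor_i$ contains a positive-measure portion of some $\ball_{s'}$, so Lemma~\ref{lem:necessary} applies and $\bbeta_i=\big(\sum_{t\in[k]}m_{i,t}\bc_{i,t}\big)\big/\big(\sum_{t\in[k]}m_{i,t}\big)$. By hypothesis $s\in T_i$ encloses a Voronoi boundary with large relative volume, so Observation~\ref{pro1} makes it the \emph{only} such cluster meeting $\vor_i$; hence every $t\in T_i\setminus\{s\}$ encloses only small-relative-volume boundaries, and Theorem~\ref{prop:ball-family-bounds}(2) gives $m_{i,s'}\ge 1-k^2\lambda\rad$ for $s'\in\Wint_i\setminus\{s\}$ and $m_{i,t}\le k\lambda\rad$ for every $t\in[k]\setminus(\Wint_i\cup\{s\})$ (the latter being $0$ when $t\notin T_i$, so $\sum_{t\notin\Wint_i\cup\{s\}}m_{i,t}\le k^2\lambda\rad$). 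Theorem~\ref{prop:ball-family-bounds}(1), together with $\lambda=c/\sqrt{\rad\deltamax}$ and $\snrmax>4c^2k^4$, gives the single most important elementary bound, $\|\bbeta_i-\bbeta_s^*\|\le k/\lambda+3\rad\le ck^2\deltamax/\sqrt{\snrmax}$; since $\|\bc_{i,t}-\bbeta_t^*\|\le\rad$ always, this in turn yields $\|\bc_{i,s}-\bbeta_i\|\le k/\lambda+4\rad$ and $\|\bbeta_i\|\le 2\deltamax$.

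For the bound on $\|\bbeta_i-\bb_i^-\|$, write $n_0:=|\Wint_i\setminus\{s\}|$ and $M_A:=\sum_{s'\in\Wint_i\setminus\{s\}}m_{i,s'}\in[n_0(1-k^2\lambda\rad),n_0]$, and subtract $\big(m_{i,s}+\sum_{t\notin\Wint_i\cup\{s\}}m_{i,t}\big)\bbeta_i$ from both sides of the center-of-mass identity to get
\[
M_A\,\bbeta_i=\sum_{s'\in\Wint_i\setminus\{s\}}m_{i,s'}\bc_{i,s'}+m_{i,s}(\bc_{i,s}-\bbeta_i)+\sum_{t\notin\Wint_i\cup\{s\}}m_{i,t}(\bc_{i,t}-\bbeta_i).
\]
The conceptual point is to eliminate $\bc_{i,s}$ \emph{via} $\bbeta_i$ rather than keep it: $\|\bc_{i,s}-\bbeta_i\|$ is already known to be small, whereas $\bc_{i,s}$ itself can be $\Theta(\deltamax)$ away, so controlling it directly would be too lossy; this is exactly what makes the many-fit contribution of the large-volume cluster $s$ invisible in the estimate. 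Replacing $\sum m_{i,s'}\bc_{i,s'}$ by $n_0\bb_i^-$ (error $\le n_0\rad+n_0k^2\lambda\rad\deltamax$, using $m_{i,s'}\ge1-k^2\lambda\rad$ and $\|\bc_{i,s'}-\bbeta_{s'}^*\|\le\rad$), bounding $m_{i,t}\|\bc_{i,t}-\bbeta_i\|\le k\lambda\rad\cdot 4\deltamax$, dividing by $M_A\ge n_0/2$, and comparing with $\bb_i^-$ gives $\|\bbeta_i-\bb_i^-\|\le 8k^2\lambda\rad\deltamax+2k/\lambda+(\text{lower-order terms in }\rad)$; substituting $\lambda=c/\sqrt{\rad\deltamax}$ and using $\rad\deltamax=\deltamax^2/\snrmax$, $\rad=\deltamax/\snrmax$, $\snrmax>4c^2k^4$ collapses this to $\|\bbeta_i-\bb_i^-\|\le 11ck^2\deltamax/\sqrt{\snrmax}$.

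The bound on $\|\bbeta_i-\bb_i^+\|$ then follows purely formally: one checks that in both cases $s\in\Wint_i$ and $s\notin\Wint_i$ one has $\bb_i^+=\frac{\bbeta_s^*+n_0\bb_i^-}{n_0+1}$, so $\bbeta_i-\bb_i^+=\frac{(\bbeta_i-\bbeta_s^*)+n_0(\bbeta_i-\bb_i^-)}{n_0+1}$ and the triangle inequality gives $\|\bbeta_i-\bb_i^+\|\le\frac{1+11n_0}{1+n_0}\,ck^2\deltamax/\sqrt{\snrmax}\le 11ck^2\deltamax/\sqrt{\snrmax}$. For $|\Wint_i\setminus\{s\}|\ge2$, suppose instead $n_0=1$, say $\Wint_i\setminus\{s\}=\{s'\}$, so $\bb_i^-=\bbeta_{s'}^*$; specializing the identity above ($M_A=m_{i,s'}\ge\tfrac12$) and again using $\|\bc_{i,s}-\bbeta_i\|\le k/\lambda+4\rad$ gives $\|\bbeta_i-\bbeta_{s'}^*\|\le 6ck^2\deltamax/\sqrt{\snrmax}$; combining with $\|\bbeta_i-\bbeta_s^*\|\le ck^2\deltamax/\sqrt{\snrmax}$ forces $\|\bbeta_s^*-\bbeta_{s'}^*\|\le 7ck^2\deltamax/\sqrt{\snrmax}<10ck^2\deltamax/\sqrt{\snrmax}\le\deltamin$, contradicting the definition of $\deltamin$ (and this is where the hypothesis $\snrmin\ge 10ck^2\sqrt{\snrmax}$ is spent).

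The main obstacle is not conceptual but the arithmetic bookkeeping: one must carry the $O(\rad)$, $O(k^2\lambda\rad\deltamax)$ and $O(k/\lambda)$ error terms through the center-of-mass identity, and after substituting $\lambda=c/\sqrt{\rad\deltamax}$ verify that the constant genuinely closes at $11$ in the two displayed inequalities and at the margin $7<10$ needed in the last step; the only delicate modeling choice — eliminating $\bc_{i,s}$ in favor of $\bbeta_i$ — has already been made in the identity above.
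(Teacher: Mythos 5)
Your proposal is correct and follows essentially the same route as the paper: both arguments rest on the center-of-mass identity from Lemma~\ref{lem:necessary}, eliminate the uncontrollable $\bc_{i,s}$ by trading it for $\bbeta_i$ via the Part~1 bound $\|\bbeta_i-\bbeta_s^*\|\le k/\lambda+3\rad$, feed in the Part~2 mass bounds $m_{i,s'}\ge 1-k^2\lambda\rad$ and $m_{i,t}\le k\lambda\rad$ (justified by Observations~1 and~2), obtain the $\bb_i^+$ bound from the convex-combination identity $\bb_i^+=\frac{\bbeta_s^*+|\Wint_i\setminus\{s\}|\,\bb_i^-}{1+|\Wint_i\setminus\{s\}|}$, and rule out $|\Wint_i\setminus\{s\}|=1$ by contradiction with the $\snrmin$ assumption; the paper merely packages the "drop cluster $s$" step by defining the modified center of mass $\widetilde{\bc_{i,s}}$ and re-invoking the Part~2(b) machinery (Lemmas~\ref{lem:bound-A} and~\ref{lem:bound-B}) rather than redoing the decomposition by hand. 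Your intermediate constants are a bit optimistic (a careful accounting of the $\frac{1}{M_A}$ factors in the $n_0=1$ case gives margins closer to the paper's borderline $10ck^2$ than to your $7ck^2$), but as you note this is bookkeeping, and the same looseness is present in the paper's own proof.
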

     
    In this case, we let $S_a = \{i\}$; also let $S^*_a = \Wint_i \setminus \{s\} $ if the index $s$ has appeared in the sets  $\{S^*_{a'}\}$  constructed previously in Step~2 or in this step, and set $S^*_a = \Wint_i \cup \{s\} $ otherwise. Note that $|S^*_a| \ge 2$ by Lemma~\ref{lem:prox-center}. As shall become clear momentarily, the flexibility allowed by Lemma~\ref{lem:prox-center} is important for ensuring that $\{S_a^*\}$ indeed partitions $[k]$.
\end{itemize}
In both cases above, we have the bound $\big\| \bbeta_i - \frac{1}{|S_a^*|}\sum_{s'\in S^*_a} \bbeta_{s'}^* \big\| \le \deltamax\frac{11ck^2}{\sqrt{\snrmax}}$ as claimed in Theorem~\ref{thm:main_ball}. It is clear that the sets $\{S_a^*\}$ constructed in this step are disjoint from each other and from those constructed in Step~2, because the sets $\{\Wint_i\}$ are disjoint as each true center can be in the interior of only one Voronoi set.

\paragraph{Summary:}

The above procedure constructs a collection of sets $\{S_a\}_{a=0}^{m}$ and $\{S^*_a\}_{a=1}^{m}$, which index the fitted and true centers, respectively, and satisfy the bounds in Theorem~\ref{thm:main_ball}. The sets $\{S_a\}$ indeed form a partition of $[k]$, as we have $\bigcup_{a=0}^{m} S_a = S_0 \cup \mathcal{J} \cup \mathcal{K} = [k]$ by definition, and $S_a \cap S_b = \emptyset, \forall a\neq b$ by construction and the fact that $S_0, \mathcal{J}, \mathcal{K}$ are disjoint. For the sets $\{S^*_a\}$, we have argued in the construction above that they are disjoint. On the other hand, each true center $\bbeta^*_s$ must belong to at least one Voronoi set $\vor_i$, in which case we have $s\in T_i$. Consider two complementary cases: (i) If $\exists(j,\ell): \rho_s(\partial_{j,\ell}) > \lambda$, then $s$ must be covered in the second case of either Step~2 or Step~3. (ii) If $\forall(j,\ell): \rho_s(\partial_{j,\ell}) \le \lambda $, then Observation~\ref{pro2} ensures that $s\in \Wint_i \subseteq T_i$. In this case, if all other $s' \in T_i$ satisfies $\forall(j,\ell): \rho_{s'}(\partial_{j,\ell}) \le \lambda$ as well, then $s$ is covered in the first case of either Step~2 or Step~3. Otherwise, if there exists another $s' \in T_i $ satisfying $\exists (j,\ell): \rho_{s'}(\partial_{j,\ell}) > \lambda$, then $s$ is covered in the second case of Step~3 (with the role of $s$ and $s'$ exchanged therein) as $s\in \Wint_i\subseteq \Wint_i\setminus\{s'\}$. We conclude that the collection of sets $\{S^*_a\}$ covers all $s\in[k]$ and hence is indeed a partition of $[k]$. This completes the proof of Theorem~\ref{thm:main_ball}.

\subsection{Proof of Theorem~\ref{prop:ball-family-bounds}}
\label{sec:proof-ball-family-bounds}

In this section, we prove Theorem~\ref{prop:ball-family-bounds}, which shows that a local minimum $\bbeta$ satisfies a family of bounds that imply our main Theorem~\ref{thm:main_ball}.

\paragraph{Proof strategy:}

To derive structural properties of the local minimum $\bbeta$, we exploit the fact that $t=0$ is a local minimum of the directional objective $H^{\bv}(t)$ (or a smooth upper bound thereof) for any perturbation direction $\bv$; see Lemma~\ref{lem:core-upper-bound} and the discussion in Section~\ref{sec:prelim}.
The expression~\eqref{eq:H_decompose} of $H^{\bv}$ involves the set $\Delta_{i\to j}^{\bv}$ of points that switch from one Voronoi set from another when $\bbeta$ is perturbed to $\bbeta + t\bv$. These sets are quite complicated for a general direction $\bv$. Our main idea is to focus on a special class of directions satisfying
\begin{equation}
\|\bv_{i}\|=1,\forall i\in[k];\qquad\bv_{i}=\bv_{j}\text{ or }\bv_{i}=-\bv_{j},\forall i\neq j.\label{eq:direction_choice}
\end{equation}
That is, we perturb the $\bbeta_{i}$'s along the same or opposite
directions. For these choices of $\bv$, the Voronoi boundary $\partial_{i,j}(\bbeta+t\bv)$
behaves in a simple way. In particular, when $\bv_{i}=\bv_{j}$, the
boundary $\partial_{i,j}(\bbeta+t\bv)$ translates along the direction
of $\bv_{i}$; when $\bv_{i}=-\bv_{j}$, the boundary rotates around
the mid point $\frac{\bbeta_{i}+\bbeta_{j}}{2}$. See Figure~\ref{fig:Illustration-change-assignment}
for an illustration. Using this fact, we can construct simple, tractable
upper bounds of $H^{\bv}$, from which we can deduce the structural properties of the local minimum $\bbeta$.

\paragraph{Key quantities:}

Our analysis involves several key quantities related to the Voronoi sets of $\bbeta$ and their boundaries. In particular, for each pair $i\neq j$ whose associated Voronoi sets are adjacent, i.e., $\vor_i(\bbeta)\sim \vor_j(\bbeta)$, we introduce the following four quantities. 
\begin{figure}[t]
\centering
\includegraphics[scale=0.5, clip, trim=0 60 0 20]{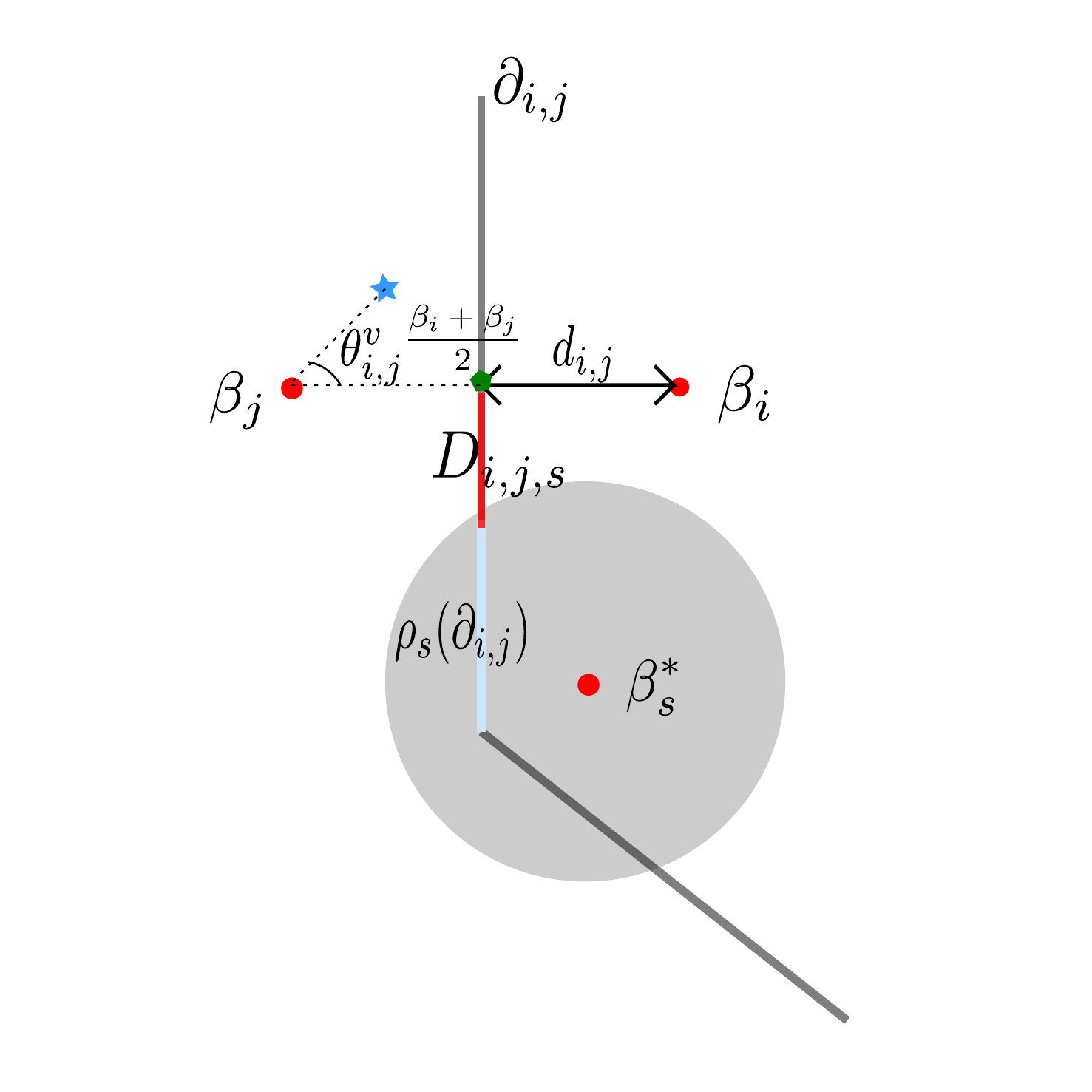}
\caption{\label{fig:Explanation-of-notation}Illustration of the quantities
$d_{i,j}$, $\theta_{i,j}^{\bv}$, $D_{i,j,s}$ and  $\rho_{s}(\partial_{i,j})$. The red dots represent $\bbeta_{i}$ and $\bbeta_{j}$, and the blue star represents the perturbed solution $\bbeta_{j}+t\bm{v}_{j}$.
Here $d_{i,j}$ is the distance between $ \bbeta_{i}$ and the mid point
$\frac{\bbeta_{i}+ \bbeta_{j}}{2}$ (represented by a green dot); $D_{i,j,s}$, which is represented by the red line segment, is the distance between $\frac{\bbeta_{i}+ \bbeta_{j}}{2}$ and the ball $\mB_{s}$ when computed within the hyperplane
containing the Voronoi boundary $\partial_{i,j}$; $\theta_{i,j}^{\bv}$
is the angle between the perturbation direction $\bv_{j}$ and the direction of $\bbeta_{i}- \bbeta_{j}$; $\rho_{s}(\partial_{i,j})$ is the normalized relative volume of the set $\partial_{i,j}\cap \ball_{s}$, which is represented by the blue line segment inside the ball.}
\end{figure}
\begin{enumerate}[label=(\alph*)]
    \item Denote by $d_{i,j}:=\frac{1}{2}\|\bbeta_{i}-\bbeta_{j}\|$ the distance between $\bbeta_{i}$ (or $\bbeta_{j}$) and the Voronoi boundary $\partial_{i,j} \equiv \partial_{i,j}(\bbeta)$. 
    \item Denote by $\theta_{i,j}^{\bv}=\angle(\bv_j,\bbeta_i-\bbeta_j)$ the (unsigned) angle of the perturbation direction $\bv_j$ with respect to $\bbeta_i-\bbeta_j$. 
    \item Define $D_{i,j,s}:=\dist\big(\frac{\bbeta_{i}+\bbeta_{j}}{2},\ball_{s}\cap \partial_{i,j} \big)$, with the convention that $D_{i,j,s}=1$ if $\ball_{s}\cap \partial_{i,j} = \emptyset$.  Here $D_{i,j,s}$ is the distance between the mid-point $\frac{\bbeta_{i}+\bbeta_{j}}{2}$ and $s$-th ground truth cluster $\ball_{s}$, where the distance is computed within the hyperplane containing the Voronoi boundary $\partial_{i,j}$.
    \item Recall the quantity $
    \rho_{s}(\partial_{i,j}):=\frac{1}{\rad^{d} V_{d}}\revol(\partial_{i,j}\cap\ball_{s})
    $
    defined in the statement of Theorem~\ref{prop:ball-family-bounds}. Note that $\rho_{s}(\partial_{i,j})$ is the relative volume of the intersection of the Voronoi boundary $\partial_{i,j}$ and the $s$-th ground truth cluster $\ball_{s}$, normalized by the volume of $\ball_{s}$.
\end{enumerate}
An illustration of these quantities is given in Figure~\ref{fig:Explanation-of-notation}.\\

We are now ready to prove Theorem~\ref{prop:ball-family-bounds}. We begin with the upper bound of $H^{\bv}$ given in equation~(\ref{eq:H_upper_bound}), restated in an equivalent way below:
\begin{align}
    H^{\bv}(t)
    \le U^{\bv}(t)+\frac{1}{2k}\sum_{(i,j):\vor_{i}\sim\vor_{j}} \sum_{s=1}^{k} \Big(W_{i\to j,s}^{\bv}(t)+W_{j\to i,s}^{\bv}(t)\Big). \label{eq:H-upper-bound-in-proof}
\end{align}
Note that the equality holds at $t=0$, since by definition $U^{\bv}(0)=H^{\bv}(0)$ and $W_{i\to j,s}^{\bv}(0)=0,\forall\; i,j,s$. Moreover, when $\bbeta$ is a local minimum of $G$, a quick calculation using Lemma~\ref{lem:necessary} shows that 
\begin{equation}
\lim_{t\to0}\frac{1}{t}\big(U^{\bv}(t)-U^{\bv}(0)\big)=0\qquad\text{and}\qquad\lim_{t\to0}\frac{1}{t^{2}}\big(U^{\bv}(t)-U^{\bv}(0)\big)=1.\label{eq:U_properties}
\end{equation}

Under the specific choice of the direction $\bv$ in equation~(\ref{eq:direction_choice}), the function $W_{i\to j}^{\bv}+W_{j \to i}^{\bv}$ can be further upper bounded, in a small neighborhood of $0$, by a smooth function with nice analytical properties. In particular, when the directions $\bv_{i}=\bv_{j}$ are the same, such an upper bound $\widetilde{W}_{i,j,s}^{\bv}$ is given in the following proposition, which is proved in Section~\ref{sec:same-direction}.
\begin{proposition}[Upper bound, same direction]
\label{prop:same-direction}
 Let $\bbeta$ be a local minimum of $G$ and $\bv$ satisfy $\|\bv_{i}\|=1,\forall i\in[k]$. If $\bv_{i}=\bv_{j}$, then $W_{i\to j,s}^{\bv}+W_{j\to i,s}^{\bv}$ is upper bounded in a neighborhood of 0 by some smooth function $\widetilde{W}_{i,j,s}^{\bv}$  satisfying the following
properties: 
\begin{enumerate}
\item $\widetilde{W}_{i,j,s}^{\bv}(0)=0;$
\item $\frac{\ddup}{\ddup t}\widetilde{W}_{i, j,s}^{\bv}(t)\mid_{t=0}=0$;
\item $\lim_{t\to0}\frac{1}{t^{2}}\widetilde{W}_{i,j,s}^{\bv}(t)\mid_{t=0}=-2\cos^{2}(\theta_{i, j}^{\bv})d_{i,j}\cdot\rho_{s}(\partial_{i,j})$.
\end{enumerate}
\end{proposition}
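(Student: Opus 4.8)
The plan is to fix a single adjacent pair $(i,j)$ with $\vor_i\sim\vor_j$ and common direction $\bv_i=\bv_j$ — so $d_{i,j}=\tfrac12\|\bbeta_i-\bbeta_j\|>0$ and the unit normal $\widehat{\bm{n}}:=(\bbeta_i-\bbeta_j)/\|\bbeta_i-\bbeta_j\|$ to the hyperplane of $\partial_{i,j}$ is well defined (local minimality of $\bbeta$ enters only through Lemma~\ref{lem:necessary}, guaranteeing that $\bbeta$, hence $\bbeta+t\bv$ for small $t$, has distinct components) — write $c_\theta:=\cos(\theta_{i,j}^{\bv})$, and let $\delta(\x):=\langle\x-\tfrac12(\bbeta_i+\bbeta_j),\widehat{\bm{n}}\rangle$ be the signed distance of $\x$ to that hyperplane, positive toward $\bbeta_i$. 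First I would record the pointwise identity obtained by expanding the squares and using $\bv_i=\bv_j$:
\[
\|\x-\bbeta_j-t\bv_j\|^2-\|\x-\bbeta_i-t\bv_i\|^2=2\big\langle\x-\tfrac12(\bbeta_i+\bbeta_j),\,\bbeta_i-\bbeta_j\big\rangle-2t\langle\bv_i,\bbeta_i-\bbeta_j\rangle=4d_{i,j}\big(\delta(\x)-tc_\theta\big).
\]
Because every center shifts by the \emph{same} vector $t\bv_i$, the $(i,j)$-bisector merely translates, so $\Delta_{i\to j}^{\bv}(t)=\vor_i(\bbeta)\cap\vor_j(\bbeta+t\bv)$ lies in the slab $\{\,0\le\delta(\x)\le tc_\theta\,\}$; it is therefore empty unless $tc_\theta\ge0$, its integrand above is non-positive there (re-deriving Remark~\ref{rem:non-positive} in this case), and intersected with $\ball_s$ it sits in a thin column region over (a subset of) $\partial_{i,j}\cap\ball_s$.

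The engine of the bound is that, the integrand being non-positive on $\Delta_{i\to j}^{\bv}(t)\cap\ball_s$, \emph{restricting} the domain of integration to a smaller set only \emph{raises} $W_{i\to j,s}^{\bv}(t)$. I would integrate only over the ``core columns'' $\{\y+\delta\widehat{\bm{n}}:\y\in E_t,\,0\le\delta\le tc_\theta\}$, where $E_t\subseteq\partial_{i,j}\cap\ball_s$ is the inner parallel body at distance $c_2|t|$ (within the hyperplane) from the relative boundary of $\partial_{i,j}\cap\ball_s$, and $c_2=c_2(\bbeta,i,j,s)$ is a fixed constant. A geometric argument shows that, for $c_2$ large enough and $|t|$ small, every such column stays inside $\ball_s\cap\vor_i(\bbeta)\cap\vor_j(\bbeta+t\bv)=\Delta_{i\to j}^{\bv}(t)\cap\ball_s$. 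Then, using $\int_0^{tc_\theta}(\delta-tc_\theta)\,\ddup\delta=-\tfrac12 t^2c_\theta^2$ and $f_s\equiv(V_d\rad^d)^{-1}$ on $\ball_s$, one gets $W_{i\to j,s}^{\bv}(t)\le-\frac{2d_{i,j}c_\theta^2}{V_d\rad^d}\,\revol(E_t)\,t^2$ whenever $tc_\theta\ge0$. Since $\partial_{i,j}\cap\ball_s$ is a bounded convex body, $\revol(E_t)\ge\revol(\partial_{i,j}\cap\ball_s)-c_3|t|$ for small $t$ (Lipschitz continuity of the volume of inner parallel bodies), and recalling $\rho_s(\partial_{i,j})=\revol(\partial_{i,j}\cap\ball_s)/(V_d\rad^d)$ I obtain $W_{i\to j,s}^{\bv}(t)\le-2d_{i,j}c_\theta^2\rho_s(\partial_{i,j})\,t^2+c_4|t|^3$ on $\{tc_\theta\ge0\}$, for a fixed $c_4=c_4(\bbeta,i,j,s)$.

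Then I would symmetrize and assemble. The same argument for the pair $(j,i)$ gives the identical bound for $W_{j\to i,s}^{\bv}(t)$ on the complementary half-line $\{tc_\theta\le0\}$ — here $d_{j,i}=d_{i,j}$, $\cos^2\theta_{j,i}^{\bv}=\cos^2\theta_{i,j}^{\bv}$ since $\theta_{j,i}^{\bv}=\pi-\theta_{i,j}^{\bv}$, and $\rho_s(\partial_{j,i})=\rho_s(\partial_{i,j})$ — while on each half-line the remaining term is $\le0$ by Remark~\ref{rem:non-positive}. Writing $a:=2\cos^2(\theta_{i,j}^{\bv})d_{i,j}\rho_s(\partial_{i,j})$ and $C:=\max(c_4,c_4')$, these combine to $W_{i\to j,s}^{\bv}(t)+W_{j\to i,s}^{\bv}(t)\le-a t^2+C|t|^3=:\widetilde W_{i,j,s}^{\bv}(t)$ in a neighbourhood of $0$. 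The function $t\mapsto-a t^2+C|t|^3$ is $C^2$ on $\real$ (its second derivative $-2a+6C|t|$ is continuous), vanishes at $0$ together with its first derivative, and has $\lim_{t\to0}t^{-2}\widetilde W_{i,j,s}^{\bv}(t)=-a$, so properties 1--3 hold; the degenerate cases $c_\theta=0$ or $\rho_s(\partial_{i,j})=0$ are trivial (take $\widetilde W_{i,j,s}^{\bv}\equiv0$, resp.\ $\equiv C|t|^3$, using $W_{i\to j,s}^{\bv}+W_{j\to i,s}^{\bv}\le0$ and that the target limit is $0$).

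The step I expect to be the main obstacle is the geometric inclusion ``core columns $\subseteq\Delta_{i\to j}^{\bv}(t)$'': one must certify that for $\y$ at distance $\gtrsim|t|$ from the relative boundary of $\partial_{i,j}\cap\ball_s$, the short segment $[\y,\y+tc_\theta\widehat{\bm{n}}]$ leaves neither $\ball_s$, nor $\vor_i(\bbeta)$, nor the perturbed cell $\vor_j(\bbeta+t\bv)$. This amounts to quantitative transversality at the fixed configuration $\bbeta$: the sphere $\partial\ball_s$ and every Voronoi facet of $\vor_i$ or $\vor_j$ adjacent to $\partial_{i,j}$ peel away from the hyperplane of $\partial_{i,j}$ at a strictly positive rate, and each competing bisector keeps such $\y$ strictly on the $\bbeta_j$-side with a margin comparable to its hyperplane-distance to the relative boundary — so the implied constants $c_2,c_3,c_4$ depend only on $\bbeta$ and $i,j,s$. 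The remaining ingredients (the pointwise identity, the restriction trick, the elementary one-variable integral, and the Lipschitz estimate for $\revol(E_t)$) are routine.
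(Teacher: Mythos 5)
Your proposal is correct and follows essentially the same route as the paper: reduce to the slab $\{0\le\delta(\x)\le t\cos\theta_{i,j}^{\bv}\}$ swept by the translated bisector, integrate the linear integrand $4d_{i,j}(\delta(\x)-t\cos\theta_{i,j}^{\bv})$ along the normal direction via Fubini, and identify the limiting cross-sectional mass with $\rho_{s}(\partial_{i,j})$. The only real difference is in implementation: the paper certifies the $t^{2}$-coefficient by sandwiching the cross-sectional mass between its min and max over the slab and takes $\widetilde{W}_{i,j,s}^{\bv}=W_{i\to j,s}^{\bv}$ itself, whereas you build an explicit majorant $-2d_{i,j}\cos^{2}(\theta_{i,j}^{\bv})\rho_{s}(\partial_{i,j})t^{2}+C|t|^{3}$ via inner parallel bodies (with configuration-dependent constants, which is fine since only a neighborhood of $t=0$ is needed) and treat the two half-lines $t\cos\theta_{i,j}^{\bv}\ge 0$ and $t\cos\theta_{i,j}^{\bv}\le 0$ symmetrically — both arguments are valid.
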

When the directions $\bv_{i}=-\bv_{j}$ are opposite and $d\ge2$, an upper bound $\widehat{W}_{i,j,s}^{\bv}$ is given in the following proposition, which is proved in Section~\ref{sec:opposite-direction}.
\begin{proposition}[Upper bound, opposite direction]
\label{prop:opposite-direction}%
Let $\bbeta$ be a local minimum of $G$ and $\bv$ satisfy $\|\bv_{i}\|=1,\forall i\in[k]$. If $\bv_{i}=-\bv_{j}$ and $\bv_{i},\bv_{j} \in\linspace_{i,j,s}$, then $W_{i\to j,s}^{\bv} + W_{j \to i,s}^{\bv}$ is upper bounded in a neighborhood of $0$ by some smooth function $\widehat{W}_{i,j,s}^{\bv}$ satisfying the following properties: 
\begin{enumerate}
\item $\widehat{W}_{i,j,s}^{\bv}(0)=0;$
\item $\frac{\ddup}{\ddup t}\widehat{W}_{i, j,s}^{\bv}(t)\mid_{t=0}=0$;
\item $\lim_{t\to0}\frac{1}{t^{2}}\widehat{W}_{i, j,s}^{\bv}(t)\mid_{t=0}=-2\frac{D_{i,j,s}^{2}}{d_{i,j}}\sin^{2}(\theta_{i,j}^{\bv})\cdot \rho_{s}(\partial_{i,j})$.
\end{enumerate}
\end{proposition}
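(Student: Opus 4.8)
Paralleling the treatment of Proposition~\ref{prop:same-direction}, the plan is to obtain the bound by a direct geometric analysis of the switch-term $W_{i\to j,s}^{\bv}+W_{j\to i,s}^{\bv}$ near $t=0$, carried out entirely inside the plane $\linspace_{i,j,s}$ --- this is exactly where the hypothesis $\bv_i,\bv_j\in\linspace_{i,j,s}$ enters. In the end $\widehat W_{i,j,s}^{\bv}$ can simply be taken to be the pure quadratic
\[
\widehat W_{i,j,s}^{\bv}(t):=-2\,\frac{D_{i,j,s}^{2}}{d_{i,j}}\,\sin^{2}\!\big(\theta_{i,j}^{\bv}\big)\,\rho_s(\partial_{i,j})\,t^{2},
\]
which is manifestly smooth and satisfies properties 1--3 by inspection, so the whole task reduces to proving $W_{i\to j,s}^{\bv}(t)+W_{j\to i,s}^{\bv}(t)\le\widehat W_{i,j,s}^{\bv}(t)$ for all $t$ in a neighborhood of $0$.

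\textbf{Step 1 (the rotating bisector).} Put the origin at $M:=\frac{\bbeta_i+\bbeta_j}{2}$, let $\hat e_1$ point along $\bbeta_i-\bbeta_j$, and let $\hat e_2$ span the remaining direction of $\linspace_{i,j,s}$; then $\bbeta_i=d_{i,j}\hat e_1$, $\bbeta_j=-d_{i,j}\hat e_1$, the point $\bbeta_s^{*}$ has vanishing $\hat e_3,\dots,\hat e_d$ components, and $\bv_i=\cos\phi\,\hat e_1+\sin\phi\,\hat e_2=-\bv_j$ with $|\sin\phi|=\sin\theta_{i,j}^{\bv}$. Because $\bv_j=-\bv_i$, the perturbed centers satisfy $\bbeta_j+t\bv_j=-(\bbeta_i+t\bv_i)$, so their midpoint remains $M$, and the perturbed bisector $\partial_{i,j}(\bbeta+t\bv)$ lies on the hyperplane through $0$ obtained from $\{x_1=0\}$ by a rotation inside the $(x_1,x_2)$-plane through the angle $\psi(t)$ with $\tan\psi(t)=\frac{t\sin\phi}{d_{i,j}+t\cos\phi}$. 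Hence the switch set is a thin ``double wedge'' of angular half-width $|\psi(t)|$ about the $(d-2)$-dimensional apex-subspace $\{x_1=x_2=0\}\ni M$, hugging the facet $\partial_{i,j}$, and $\big(\Delta_{i\to j}^{\bv}(t)\cup\Delta_{j\to i}^{\bv}(t)\big)\cap\ball_s$ is a thin layer (of $x_1$-thickness $O(t)$) lying over $\partial_{i,j}\cap\ball_s$.

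\textbf{Step 2 (second-order value and the geometric inequality).} On $\Delta_{i\to j}^{\bv}(t)$ the integrand equals $2\langle(\bbeta_i-\bbeta_j)+2t\bv_i,\,\x\rangle$, which involves only the $(x_1,x_2)$-coordinates of $\x$ and is $\le0$; on $\Delta_{j\to i}^{\bv}(t)$ it is its negative, also $\le0$. For $\y=(0,y_2,\dots,y_d)$ in the relative interior of $\partial_{i,j}\cap\ball_s$, integrating this (linear in $x_1$) integrand along $\{\y+x_1\hat e_1\}$ over the interval between $\{x_1=0\}$ and the rotated bisector --- an interval of length $|y_2\tan\psi(t)|$ with one endpoint on $\partial_{i,j}$ --- gives exactly $-\frac{2t^{2}\sin^{2}\phi}{d_{i,j}+t\cos\phi}\,y_2^{2}$, regardless of which of the two wedges is active at that location. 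Integrating over $\partial_{i,j}\cap\ball_s$ against $f_s=\frac{1}{\vol(\ball_s)}\indic_{\ball_s}$ and collecting the lower-order corrections yields
\[
W_{i\to j,s}^{\bv}(t)+W_{j\to i,s}^{\bv}(t)=-\frac{2\sin^{2}\theta_{i,j}^{\bv}}{d_{i,j}}\cdot\frac{1}{\vol(\ball_s)}\int_{\partial_{i,j}\cap\ball_s}\!y_2^{2}\,d\y\cdot t^{2}+O(t^{3}).
\]
The key elementary fact is that $y_2^{2}\ge D_{i,j,s}^{2}$ for every $\y\in\partial_{i,j}\cap\ball_s$, strictly on a set of positive relative volume whenever $\rho_s(\partial_{i,j})>0$: indeed $\partial_{i,j}\cap\ball_s$ lies in a $(d-1)$-ball in $\{x_1=0\}$ whose center is on the $x_2$-axis (because $\bbeta_s^{*}\in\linspace_{i,j,s}$), so its point nearest $M$ sits on the $x_2$-axis at distance $D_{i,j,s}$, and a positive-measure subset of that ball cannot lie in the hyperplane $\{y_2^{2}=D_{i,j,s}^{2}\}$. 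Using $\rho_s(\partial_{i,j})=\revol(\partial_{i,j}\cap\ball_s)/\vol(\ball_s)$, the $t^{2}$-coefficient above is therefore $\le-\frac{2\sin^{2}\theta_{i,j}^{\bv}}{d_{i,j}}D_{i,j,s}^{2}\rho_s(\partial_{i,j})$, strictly so when $\rho_s(\partial_{i,j})>0$; in that case the strict slack in the $t^{2}$-term absorbs the $O(t^{3})$ error and $W_{i\to j,s}^{\bv}(t)+W_{j\to i,s}^{\bv}(t)\le\widehat W_{i,j,s}^{\bv}(t)$ for $|t|$ small, while when $\rho_s(\partial_{i,j})=0$ we have $\widehat W_{i,j,s}^{\bv}\equiv0$ and the inequality is immediate from Remark~\ref{rem:non-positive}.

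\textbf{Main obstacle.} The substantive part is justifying the ``$O(t^{3})$'' in Step 2 uniformly in $t$. The clean per-segment formula is valid only when, at the base point $\y$, the original Voronoi boundary really coincides with $\{x_1=0\}$ for $x_1$ slightly negative and the perturbed one with the rotated bisector; this fails for $\y$ within $O(t)$ of the relative boundary of the facet $\partial_{i,j}$, where other facets intrude, and one must also account for the $O(t)$ fluctuation of the $(x_2,\dots,x_d)$-domain of integration and for the fact that the switch set is not exactly a hyperplane-slab. Since the relative boundary of $\partial_{i,j}$ is a $(d-2)$-dimensional polyhedral set, its $O(t)$-neighborhood inside $\{x_1=0\}$ has $(d-1)$-volume $O(t)$, and on each such ``bad'' region the $x_1$-thickness and the integrand magnitude are both $O(t)$, so all these corrections are genuinely $O(t^{3})$ uniformly in $t$; carrying this out while keeping track of which polyhedra are adjacent to which is where most of the work lies. (When $\sin\theta_{i,j}^{\bv}=0$ or $\vor_i\not\sim\vor_j$ the statement is trivial: then $\widehat W_{i,j,s}^{\bv}\equiv0$ while $W_{i\to j,s}^{\bv}+W_{j\to i,s}^{\bv}\le0$ by Remark~\ref{rem:non-positive}.)
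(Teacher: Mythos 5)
Your route is genuinely different from the paper's. The paper never computes the exact second-order behavior of $W_{i\to j,s}^{\bv}+W_{j\to i,s}^{\bv}$; instead it manufactures an honest majorant valid for each small $t$ by three monotone moves: drop the non-positive $W_{j\to i,s}^{\bv}$ entirely, shrink the domain of integration of $W_{i\to j,s}^{\bv}$ to the sub-slab $\widetilde{\Delta}_{i\to j}^{\bv}(t)=\{\x\in\Delta_{i\to j}^{\bv}(t):x_1\le D_{i,j,s}\tan(\psi(t))\}$ (legitimate because the integrand is non-positive), and replace $x_2$ in the integrand by its maximum $D(t)$ over $\widetilde{\Delta}_{i\to j}^{\bv}(t)\cap\ball_s$, with $D(t)\to-D_{i,j,s}$ by Lemma~\ref{lem:second-coordinate}. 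The second-order coefficient of that majorant is then read off by sandwich bounds, exactly as in Section~\ref{sec:same-direction}. This buys the paper an inequality that holds identically near $0$ with no remainder to absorb. Your version — take $\widehat{W}_{i,j,s}^{\bv}$ to be the target quadratic, compute the true $t^2$-coefficient of the switch term, and use strict slack in that coefficient to swallow a cubic error — is conceptually cleaner but shifts all the difficulty into the uniform $O(t^3)$ control near the relative boundary of the facet, which you correctly flag but do not carry out; the paper's construction avoids this issue entirely.

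There is also a concrete gap in your Step 2. You assert $y_2^2\ge D_{i,j,s}^2$ for every $\y\in\partial_{i,j}\cap\ball_s$ on the grounds that the nearest point of the $(d-1)$-ball $\{x_1=0\}\cap\ball_s$ to the midpoint lies on the $x_2$-axis. That argument controls $\dist\bigl(\tfrac{\bbeta_i+\bbeta_j}{2},\{x_1=0\}\cap\ball_s\bigr)$, but $D_{i,j,s}$ is defined as the distance to the \emph{facet-restricted} set $\ball_s\cap\partial_{i,j}$, which can be strictly larger: if the other facets of $\vor_i$ cut away the part of the ball slice nearest the midpoint, the minimizer of $\|\y\|$ over $\partial_{i,j}\cap\ball_s$ is pushed onto the relative boundary of the facet and off the $x_2$-axis, and there $|y_2|<\|\y\|=D_{i,j,s}$ — indeed on a positive-relative-volume neighborhood. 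In that configuration your average $\frac{1}{\revol(\partial_{i,j}\cap\ball_s)}\int_{\partial_{i,j}\cap\ball_s}y_2^2$ need not dominate $D_{i,j,s}^2$, so the strict coefficient comparison your whole absorption argument hinges on can fail. You need either to prove that this degenerate facet geometry cannot occur for a local minimum under the standing assumptions, or to replace $D_{i,j,s}$ throughout by the (possibly smaller) quantity you actually control, namely the minimum of $|y_2|$ over $\partial_{i,j}\cap\ball_s$.
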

Also note that $W_{i\to j,s}^{\bv} \le 0, \forall (i,j,s) $ by Remark~\ref{rem:non-positive}. For each $s\in [k]$ and each un-ordered pair $(i,j)\in[k]\times [k]$ satisfying $\vor_{i}\sim\vor_{j}$, combining equation~\eqref{eq:H-upper-bound-in-proof} and the last two propositions give the following smooth upper bound $\widetilde{H}_{i,j}^{\bv}$ of $H^{\bv}$:
\begin{align*}
 H^{\bv}(t) &\le\widetilde{H}_{i,j}^{\bv}(t) \\
 &:= U^{\bv}(t)
 +\frac{1}{k}\widetilde{W}_{i,j,s}^{\bv}(t)\indic_{\{\bv_{i}=\bv_{j}\}} +\frac{1}{k}\widehat{W}_{i,j,s}^{\bv}(t)\indic_{\{\bv_{i}=-\bv_{j}\in\linspace_{i,j,s}\}},
\end{align*}
which is valid in a neighborhood of $0$ and satisfies  $\widetilde{H}_{i,j}^{\bv}(0) = H^{\bv}(0)$. Since $t=0$ is a local minimum of $H^{\bv}$, Lemma~\ref{lem:core-upper-bound} ensures that 
\begin{align}
\lim_{t\to 0}\frac{1}{t^{2}}\left[\widetilde{H}_{i,j}^{\bv}(t)-\widetilde{H}_{i,j}^{\bv}(0)\right]  
 &\geq 0. \label{eq:second-derivative-upper-bound}
\end{align}
Moreover, by combining equation~(\ref{eq:U_properties}), Proposition~\ref{prop:same-direction} and Proposition~\ref{prop:opposite-direction}, we obtain that
\begin{equation}
\label{eq:second-derivative-equality}
\begin{aligned}
&\lim_{t\to 0}\frac{1}{t^{2}}\left[\widetilde{H}_{i,j}^{\bv}(t)-\widetilde{H}_{i,j}^{\bv}(0)\right]\\
&=1-\frac{2}{k}\cos^{2}(\theta_{i,j}^{\bv})d_{i,j}\cdot \rho_{s}(\partial_{i,j})\indic_{\{\bv_{i}=\bv_{j}\}} -\frac{2}{k}\frac{D_{i,j,s}^{2}}{d_{i,j}}\sin^{2}(\theta_{i,j}^{\bv})\rho_{s}(\partial_{i,j}) \indic_{\{\bv_{i}=-\bv_{j}\in\linspace_{i,j,s}\} }. 
\end{aligned}
\end{equation}

Since equation~(\ref{eq:second-derivative-equality}) holds for any choice of $\bv$ satisfying the  condition~(\ref{eq:direction_choice}), we may choose $\bv$ judiciously to simplify the right hand side of~(\ref{eq:second-derivative-equality}). By doing so we can show that for each $s\in[k]$ and each pair $i\neq j\in[k]$ satisfying $\vor_{i}\sim\vor_{j}$, there hold the inequalities
\begin{align}
d_{i,j}\cdot\rho_{s}(\partial_{i,j}) & \leq\frac{k}{2}\qquad\text{and}\label{eq:relation-1}\\
\frac{D_{i,j,s}^{2}}{d_{i,j}}\cdot\rho_{s}(\partial_{i,j}) & \leq\frac{k}{2},\label{eq:relation-2}
\end{align}
where the second inequality is valid when $d\geq2$. To prove the inequality~(\ref{eq:relation-1}), suppose otherwise that $d_{i,j}\cdot\rho_{s}(\partial_{i,j})>\frac{k}{2}$ for some $(i,j,s)$. We can choose the directions $\bv_{i}=\bv_{j}=\frac{\bbeta_{i}-\bbeta_{j}}{\|\bbeta_{i}-\bbeta_{j}\| }$, which satisfies $\theta_{i,j}^{\bv}=0$. Combining with equation~(\ref{eq:second-derivative-equality}) gives
\[
\lim_{t\to0}\frac{1}{t^{2}}\left(\widetilde{H}_{i,j}^{\bv}(t)-\widetilde{H}^{\bv}(0)\right)<0,
\]
which contradicts the inequality~(\ref{eq:second-derivative-upper-bound}). Similarly, to prove the inequality~(\ref{eq:relation-2}), suppose otherwise that $\frac{D_{i,j,s}^{2}}{d_{i,j}}\cdot\rho_{s}(\partial_{i,j})>\frac{k}{2}$ for some $(i,j,s)$ when $d\ge 2$. We can choose $\bv_{i}$ and $\bv_{j}$ to be two unit vectors in the two-dimensional plane $\linspace_{i,j,s}$ such that $\bv_{i}=-\bv_{j}$ and $\bv_{i}\perp(\bbeta_{j}-\bbeta_{i})$, which satisfies $\theta_{i,j}^{\bv}=\frac{\pi}{2}$. Combining with equation~(\ref{eq:second-derivative-equality}) gives
\[
\lim_{t\to0}\frac{1}{t^{2}}\left(\widetilde{H}_{i,j}^{\bv}(t)-\widetilde{H}^{\bv}(0)\right)<0,
\]
which again contradicts the inequality~(\ref{eq:second-derivative-upper-bound}).\\

In the remaining of the proof, fix an index $i\in[k]$ and a number $\lambda >0$. We shall use equations~\eqref{eq:relation-1} and \eqref{eq:relation-2} to derive the structural properties of $\bbeta_{i}$ and its Voronoi set $\vor_{i}$. To this end, we consider two complementary cases that correspond to Part~1 and Part~2 of Theorem~\ref{prop:ball-family-bounds}. Recall that $T_i :=\{s\in[k]:\vor_{i}\cap\ball_{s}\neq\emptyset\}$.

\subsubsection*{Case 1: there exists some $(s,j,\ell)\in T_i \times [k] \times [k]$ such that  $\rho_{s}(\partial_{j,\ell})>\lambda$.}

In this case, the Voronoi set $\vor_i$ intersects a true cluster $\ball_s$ that encloses some Voronoi boundary $\partial_{j,\ell}$ with a large relative volume. Note that this case corresponds to Part~1 of Theorem~\ref{prop:ball-family-bounds}.

Under the case condition, the inequality~(\ref{eq:relation-1}) implies that $d_{j,\ell}\leq \frac{k}{2\rho_{s}(\partial_{j,\ell})} \leq\frac{k}{2\lambda}$; equivalently,
\[
\Big\|\frac{\bbeta_{j}+\bbeta_{\ell}}{2}-\bbeta_{j}\Big\|
=\Big\|\frac{\bbeta_{j}+\bbeta_{\ell}}{2}-\bbeta_{\ell}\Big\|
\leq\frac{k}{2\lambda}.
\]
We consider the one-dimensional and high-dimensional cases separately. When $d=1$, the case condition $\rho_{s}(\partial_{j,\ell})>\lambda$ further implies that $\frac{\beta_{j}+\beta_{\ell}}{2}\in\mB_{s}$ and hence $|\frac{\beta_{j}+\beta_{\ell}}{2}-\beta_{s}^{*}|\leq\rad$. 
It follows that  $|\beta_{j}-\beta_{s}^{*}| \leq \big|\beta_{j}-\frac{\beta_{j}+\beta_{\ell}}{2}\big| + \big|\frac{\beta_{j}+\beta_{\ell}}{2}-\beta_{s}^{*}\big| \leq\frac{k}{2\lambda}+\rad$. 
When $d\geq2$,  we have $D_{j,\ell,s}\leq\frac{k}{2\lambda}$ by multiplying the inequalities~(\ref{eq:relation-1}) and~(\ref{eq:relation-2}). Let $\z\in\ball_{s}\cap \partial_{j,\ell}$ be the point that attains $\big\|\frac{\bbeta_{j}+\bbeta_{\ell}}{2}-\z\big\|=\dist\big(\frac{\bbeta_{j}+\bbeta_{\ell}}{2},\ball_{s}\cap\partial_{j,\ell}\big)=D_{j,\ell,s}.$
 It follows that 
\begin{align*}
\|\bbeta_{j}-\bbeta_{s}^{*}\| & \leq\Big\|\bbeta_{j}-\frac{\bbeta_{j}+\bbeta_{\ell}}{2}\Big\| +\Big\|\frac{\bbeta_{j}+\bbeta_{\ell}}{2}-\z\Big\|+\|\z-\bbeta_{s}^{*}\|\nonumber \\
 & \leq \frac{k}{2\lambda} + \frac{k}{2\lambda} +\rad
 =\frac{k}{\lambda}+\rad.\label{eq:bound-dist-cluster-center}
\end{align*}

In either case of $d$, we have the bound $\|\bbeta_{j}-\bbeta_{s}^{*}\| \le \frac{k}{\lambda}+\rad$. Since $s\in T_i$, there exist a point $\x\in \ball_s\cap \vor_{i}$. It follows that 
\begin{align*}
    \|\bbeta_i-\bbeta_s^*\|\leq & \|\bbeta_i-\x\|+ \|\x-\bbeta_s^*\|\\
    \overset{\textup{(i)}}{\leq} & \|\bbeta_j-\x\|+ \|\x-\bbeta_s^*\|\\
    \leq & (\|\bbeta_j-\bbeta_s^* \|+\|\x-\bbeta_s^*\|)+ \|\x-\bbeta_s^*\|
    \overset{\textup{(ii)}}{\leq} \frac{k}{\lambda}+3r,
\end{align*}
where step (i) follows from $\x \in \vor_i$, and step (ii) follows from $\x \in \ball_s$ and the bound on $\|\bbeta_{j}-\bbeta_{s}^{*}\|$ proved above. We have established Part~1 of Theorem~\ref{prop:ball-family-bounds}.

\subsubsection*{Case 2: for all $(s,j,\ell)\in T_i\times[k]\times[k]$ there holds $\rho_{s}(\partial_{j,\ell})\leq\lambda$.}

In this case, for all true clusters $\{\ball_s\}$ that intersect the Voronoi set $\vor_i$, all the Voronoi boundaries enclosed by $\ball_s$ have a small relative volume.

Let us partition the set $T_i:=\{s\in[k]:\vor_{i}\cap\ball_{s}\neq\emptyset\}$ into two subsets defined as follows:
\[
\Wint_i:=\{s\in T_i:\bbeta_{s}^{*}\in\textup{int}(\vor_{i})\}\quad\text{and}\quad \Wext_i:=T_i \setminus \Wint_i = \{s\in T_i:\bbeta_{s}^{*}\not\in\textup{int}(\vor_{i})\}.
\]
Here $\Wint_i$ indexes the ground truth clusters whose centers are in the interior of the Voronoi set $\vor_{i}$; $\Wext_i$ indexes the ground truth clusters that intersect $\vor_i$ but their centers are outside its interior (i.e., the center either lies on a Voronoi boundary or in some other Voronoi set $\vor_j$). Also recall the quantities $m_{i,s}\equiv m_{i,s}(\bbeta)$ and $\bc_{i,s}\equiv \bc_{i,s}(\bbeta)$ introduced after Lemma~\ref{lem:necessary}; in particular, $m_{i,s}$ is the probability mass of $\vor_{i}\cap\mB_{s}$ with respect to $\mP_{s}$, and $\bc_{i,s}$ is the corresponding center of mass.

Note the Voronoi set $\vor_{i}$ is a polyhedron with at most $k$ facets. For each $s\in \Wext_i$, if $\rho_{s}(\partial_{j,\ell})\leq\lambda$ for all $\forall (j,\ell)$ (and in particular, for $j=i$), then all facets of $\vor_i$ intersect the ball  $\ball_s$ with a small relative volume. Moreover, $\bbeta^*_s$ is not in  $\textup{int}(\vor_i)$. With these two facts, an elementary geometric argument (formally given in Lemma~\ref{lem:prob-intersection}) shows that the intersection $\vor_{i}\cap\mB_{s}$ must have a small mass; that is, 
\begin{equation}
m_{i,s} = \mP_s(\vor_i) \leq k\lambda\rad,\quad\forall s\in \Wext_i.\label{eq:T2-mass}
\end{equation}
On the other hand, for each $s\in \Wint_i$, we must have $\bbeta^*_s \notin \textup{int}(\vor_j),\forall j\neq i$, since the interiors of Voronoi sets are disjoint. Repeating the same argument above shows that $\mP_s(\vor_{j})\leq k\lambda, \forall j\neq i$, whence
\begin{align}
    m_{i,s} = \mP_s(\vor_{i}) = & 1 - \sum_{j:j\neq i}\mP_s(\vor_{j})
    \geq  1-k^2\lambda\rad, \quad\forall s\in \Wint_i. \label{eq:contained-mass}
\end{align}
The inequalities~\eqref{eq:T2-mass} and~\eqref{eq:contained-mass} establish the first two bounds in Part~2 of Theorem~\ref{prop:ball-family-bounds}.

We next turn to Part~2(a) of Theorem~\ref{prop:ball-family-bounds}, which concerns the case with $\Wint_i=\emptyset$. This means that $T_i = \Wext_i$. We therefore have 
\begin{align*}
    \mP(\vor_i) = \frac{1}{k}\sum_{s\in [k]} m_{i,s} \leq  k\lambda \rad,
\end{align*}
where the last step holds due to equation~\eqref{eq:T2-mass} and the fact that $m_{i,s}=0,\forall s \notin T_i$.

Finally, we consider  Part~2(b) of Theorem~\ref{prop:ball-family-bounds}, which concerns the case with $\Wint_i\neq \emptyset$ and hence $\mP(\vor_i)>0$. Since $\bbeta$ is a local minimum, Lemma~\ref{lem:necessary} and the discussion thereafter ensure that 
\[
\bbeta_{i}=\frac{\sum_{s=1}^{k}m_{i,s}\bc_{i,s}}{\sum_{s=1}^{k}m_{i,s}}=\frac{\sum_{s\in T_i}m_{i,s}\bc_{i,s}}{\sum_{s\in T_i}m_{i,s}}. 
\]
Recalling the definition $\bb_i :=\frac{1}{|\Wint_i|}\sum_{s\in \Wint_i}\bbeta_{s}^{*}$, we can decompose the quantity $(\bbeta_{i}-\bb_{i})$ of interest as follows:
\begin{align}
\bbeta_{i}-\bb_{i}= & \frac{\sum_{s\in \Wint_i}m_{i,s}\bc_{i,s}+\sum_{s\in \Wext_i}m_{i,s}\bc_{i,s}}{\sum_{s\in \Wint_i}m_{i,s}+\sum_{s\in \Wext_i}m_{i,s}}-\frac{1}{|\Wint_i|}\sum_{s\in \Wint_i}\bbeta_{s}^{*}\nonumber \\
= & \underbrace{\frac{\sum_{s\in \Wint_i}m_{i,s}\bc_{i,s}+\sum_{s\in \Wext_i}m_{i,s}\bc_{i,s}}{\sum_{s\in \Wint_i}m_{i,s}+\sum_{s\in \Wext_i}m_{i,s}}-\frac{\sum_{s\in \Wint_i}m_{i,s}\bc_{i,s}}{\sum_{s\in \Wint_i}m_{i,s}}}_{\bmu}+\underbrace{\frac{\sum_{s\in \Wint_i}m_{i,s}\bc_{i,s}}{\sum_{s\in \Wint_i}m_{i,s}}-\frac{1}{|\Wint_i|}\sum_{s\in \Wint_i}\bbeta_{s}^{*}}_{\bnu}.\label{eq:A and B}
\end{align}
The following two lemmas, proved in Section~\ref{sec:bound_A_B}, control the norms of the vectors $\bmu$ and $\bnu$.
\begin{lemma}
\label{lem:bound-A} We have $\|\bmu\|\leq\frac{k\rad}{1-k^{2}\lambda\rad}+\frac{k\rad(k^{2}\lambda\rad)^{2}}{(1-k^{2}\lambda\rad)^{2}}+\frac{k^{2}\lambda\rad}{1-k^{2}\lambda\rad}\deltamax.$
\end{lemma}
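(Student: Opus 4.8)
The plan is to leverage the first-order optimality of $\bbeta$ together with the mass bounds \eqref{eq:T2-mass} and \eqref{eq:contained-mass} just established in Case~2. Write $M_{\mathrm{int}}:=\sum_{s\in\Wint_i}m_{i,s}$, $M_{\mathrm{ext}}:=\sum_{s\in\Wext_i}m_{i,s}$, $M:=M_{\mathrm{int}}+M_{\mathrm{ext}}$, and $\bm{p}:=\frac{1}{M_{\mathrm{int}}}\sum_{s\in\Wint_i}m_{i,s}\bc_{i,s}$. By Lemma~\ref{lem:necessary} and the display after it (using $m_{i,s}=0$ for $s\notin T_i=\Wint_i\cup\Wext_i$), the first fraction in~\eqref{eq:A and B} equals $\bbeta_i$, so $\bmu=\bbeta_i-\bm{p}$. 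Substituting $\bbeta_i=\frac{1}{M}\big(M_{\mathrm{int}}\bm{p}+\sum_{s\in\Wext_i}m_{i,s}\bc_{i,s}\big)$ and simplifying yields the identity
\[
\bmu=\frac{1}{M}\sum_{s\in\Wext_i}m_{i,s}\,(\bc_{i,s}-\bm{p}),
\]
which already displays $\bmu$ as an $M_{\mathrm{ext}}/M$-fraction of a bounded displacement; in particular $\bmu=\bzero$ when $\Wext_i=\emptyset$.

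It then remains to bound the two factors. For the mass ratio: since $|\Wint_i|\ge1$ in Part~2(b), \eqref{eq:contained-mass} gives $M_{\mathrm{int}}\ge1-k^{2}\lambda\rad$, and since $|\Wext_i|\le k$, \eqref{eq:T2-mass} gives $M_{\mathrm{ext}}\le k^{2}\lambda\rad$, so $\frac{M_{\mathrm{ext}}}{M}\le\frac{M_{\mathrm{ext}}}{M_{\mathrm{int}}}\le\frac{k^{2}\lambda\rad}{1-k^{2}\lambda\rad}$ (in the regime $k^{2}\lambda\rad<1$; outside it the claimed bound is vacuous). For the displacement: each $\bc_{i,s'}$ is a center of mass of a subset of $\ball_{s'}$, hence lies in $\ball_{s'}$, so $\|\bc_{i,s'}-\bbeta_{s'}^{*}\|\le\rad$; combining with $\|\bbeta_{s'}^{*}-\bbeta_{s}^{*}\|\le\deltamax$ shows that, for any fixed $s\in\Wext_i$, every $\bc_{i,s'}$ with $s'\in\Wint_i$—and therefore the convex combination $\bm{p}$—lies within $\rad+\deltamax$ of $\bbeta_{s}^{*}$; since also $\|\bc_{i,s}-\bbeta_{s}^{*}\|\le\rad$, the triangle inequality gives $\|\bc_{i,s}-\bm{p}\|\le2\rad+\deltamax$. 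Plugging both bounds into the identity for $\bmu$ and then using the elementary inequality $\frac{2k^{2}\lambda\rad^{2}}{1-k^{2}\lambda\rad}\le\frac{k\rad}{1-k^{2}\lambda\rad}+\frac{k\rad(k^{2}\lambda\rad)^{2}}{(1-k^{2}\lambda\rad)^{2}}$ (which reduces, after clearing denominators and writing $x=k^{2}\lambda\rad$, to $3x^{2}-3x+1\ge0$) yields exactly the stated bound, in fact with some slack.

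The argument is largely bookkeeping, since the real content sits upstream in Propositions~\ref{prop:same-direction}--\ref{prop:opposite-direction} and the resulting mass bounds \eqref{eq:T2-mass}--\eqref{eq:contained-mass}. The one point that needs a little care is the displacement estimate $\|\bc_{i,s}-\bm{p}\|\le2\rad+\deltamax$ with coefficient one on $\deltamax$: one must bound $\bm{p}$'s distance to $\bbeta_{s}^{*}$ (the true center attached to the \emph{exterior} index $s$) rather than to $\bbeta_i$ or to an interior center $\bbeta_{s'}^{*}$, since routing through the latter would both insert an extra $\deltamax$ and risk a circular dependence on the very quantity $\bbeta_i-\bb_i$ we are trying to control.
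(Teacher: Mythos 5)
Your proof is correct, and it reaches the stated bound by a cleaner route than the paper's. The paper expands $\bmu$ into four terms by adding and subtracting $\bbeta_s^*$ inside each weighted average, and then invokes the refined center-of-mass displacement bound $\|\bc_{i,s}-\bbeta_s^*\|\le \rad(1-m_{i,s})/m_{i,s}$ (Lemma~\ref{lem:bound-dist-2-center-of-mass}, recorded as \eqref{eq:cis_bound}); the three terms of the stated bound then correspond term-by-term to that expansion. You instead collapse everything into the single identity $\bmu=\frac{1}{M}\sum_{s\in\Wext_i}m_{i,s}(\bc_{i,s}-\bm{p})$, which makes it transparent that $\bmu$ is an $\frac{M_{\mathrm{ext}}}{M}$-fraction of a displacement of size at most $2\rad+\deltamax$, and you need only the crude fact $\bc_{i,s}\in\ball_s$ rather than Lemma~\ref{lem:bound-dist-2-center-of-mass}. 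Your intermediate estimate $\frac{k^{2}\lambda\rad}{1-k^{2}\lambda\rad}(2\rad+\deltamax)$ is in fact tighter than the stated bound in the regime $k^{2}\lambda\rad<1$ where the lemma is applied (your non-$\deltamax$ part is $O(k^{2}\lambda\rad^{2})$ versus the paper's leading $O(k\rad)$), and your final reduction to $3x^{2}-3x+1\ge 0$ with $x=k^{2}\lambda\rad$ checks out for all $k\ge 1$. Two small remarks: your observation that the convex combination $\bm{p}$ of the interior centers of mass stays within $\rad+\deltamax$ of the \emph{exterior} true center $\bbeta_s^*$ is exactly the right way to get coefficient $1$ on $\deltamax$ (the paper's intermediate display carries a factor $2$ there that silently disappears in its last step); and, as you note, both proofs implicitly require $k^{2}\lambda\rad<1$ for the denominators to be meaningful, which is guaranteed where the lemma is used.
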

\begin{lemma}
\label{lem:bound-B} We have $\|\bnu\|\leq\frac{k\rad(k^{2}\lambda\rad)}{1-k^{2}\lambda\rad}+\frac{k^{2}\lambda\rad}{1-k^{2}\lambda\rad}\deltamax.$
\end{lemma}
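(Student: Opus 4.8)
The vector $\bnu$ measures the discrepancy between the $\{m_{i,s}\}$-weighted average of the partial centers of mass $\{\bc_{i,s}\}_{s\in\Wint_i}$ and the \emph{uniform} average $\bb_i = \frac{1}{|\Wint_i|}\sum_{s\in\Wint_i}\bbeta_s^{*}$ of the true centers. The plan is to split this discrepancy into two sources of error: (a) each partial center of mass $\bc_{i,s}$ differs from the true center $\bbeta_s^{*}$ because $\vor_i$ fails to capture all of $\ball_s$; and (b) the normalized weights $\{m_{i,s}/\sum_{s'\in\Wint_i}m_{i,s'}\}$ differ from the uniform weights $\{1/|\Wint_i|\}$. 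Concretely, writing $n:=|\Wint_i|$ and $W:=\sum_{s\in\Wint_i}m_{i,s}$, I would add and subtract $\frac{1}{W}\sum_{s\in\Wint_i}m_{i,s}\bbeta_s^{*}$ to obtain $\bnu=\bnu_1+\bnu_2$ with
\[
\bnu_1 = \frac{1}{W}\sum_{s\in\Wint_i}m_{i,s}(\bc_{i,s}-\bbeta_s^{*}),
\qquad
\bnu_2 = \frac{1}{W}\sum_{s\in\Wint_i}m_{i,s}\bbeta_s^{*}-\frac{1}{n}\sum_{s\in\Wint_i}\bbeta_s^{*},
\]
and bound each piece separately. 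Throughout I work in the regime $k^2\lambda\rad<1$ (needed for the stated bound to be meaningful), under which inequality~\eqref{eq:contained-mass} yields $1-k^2\lambda\rad\le m_{i,s}\le 1$ for every $s\in\Wint_i$, and hence $n(1-k^2\lambda\rad)\le W\le n$.

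For $\bnu_1$, the key observation is that $\bbeta_s^{*}$ is the center of mass of the \emph{entire} ball $\ball_s$ with respect to $f_s$, by symmetry of the uniform distribution. Letting $\bc_{i,s}'$ denote the center of mass of $\ball_s\setminus\vor_i$ with respect to $f_s$, we have $\bbeta_s^{*}=m_{i,s}\bc_{i,s}+(1-m_{i,s})\bc_{i,s}'$, which rearranges to $\bc_{i,s}-\bbeta_s^{*}=\frac{1-m_{i,s}}{m_{i,s}}(\bbeta_s^{*}-\bc_{i,s}')$. Since $\bc_{i,s}'$ lies in the convex set $\ball_s$, we get $\|\bc_{i,s}-\bbeta_s^{*}\|\le\frac{(1-m_{i,s})\rad}{m_{i,s}}\le\frac{k^2\lambda\rad}{1-k^2\lambda\rad}\rad$. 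Feeding this into the triangle inequality for $\bnu_1$ (whose weights $m_{i,s}/W$ form a probability vector over $\Wint_i$) gives $\|\bnu_1\|\le\frac{k^2\lambda\rad^2}{1-k^2\lambda\rad}$, which is at most the first claimed term $\frac{k\rad(k^2\lambda\rad)}{1-k^2\lambda\rad}$ since $k\ge1$.

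For $\bnu_2=\sum_{s\in\Wint_i}\big(\tfrac{m_{i,s}}{W}-\tfrac1n\big)\bbeta_s^{*}$, I would use that the coefficients sum to zero, so $\bbeta_s^{*}$ may be replaced by $\bbeta_s^{*}-\bbeta_{s_0}^{*}$ for any fixed $s_0\in\Wint_i$, making each term have norm factor $\|\bbeta_s^{*}-\bbeta_{s_0}^{*}\|\le\deltamax$. Then $\|\bnu_2\|\le\deltamax\sum_{s\in\Wint_i}\big|\tfrac{m_{i,s}}{W}-\tfrac1n\big|$, and from $m_{i,s}\in[1-k^2\lambda\rad,1]$ together with $W\in[n(1-k^2\lambda\rad),n]$ one checks the termwise bound $\big|\tfrac{m_{i,s}}{W}-\tfrac1n\big|\le\tfrac1n\cdot\tfrac{k^2\lambda\rad}{1-k^2\lambda\rad}$, so summing over the $n$ indices gives $\sum_{s\in\Wint_i}\big|\tfrac{m_{i,s}}{W}-\tfrac1n\big|\le\tfrac{k^2\lambda\rad}{1-k^2\lambda\rad}$ and hence $\|\bnu_2\|\le\frac{k^2\lambda\rad}{1-k^2\lambda\rad}\deltamax$, the second claimed term. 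Adding the two bounds proves the lemma; the argument for $\|\bmu\|$ in Lemma~\ref{lem:bound-A} is entirely parallel, the only difference being that the perturbing mass there comes from the indices in $\Wext_i$ and is controlled via~\eqref{eq:T2-mass} instead. No step is genuinely hard here; the only (mild) obstacle is the deterministic bookkeeping of the normalizers $W$ and $n$, keeping the condition $k^2\lambda\rad<1$ in force so that all denominators stay positive.
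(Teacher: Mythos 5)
Your proposal is correct and follows essentially the same route as the paper: the identical decomposition of $\bnu$ into a center-of-mass-error term and a weight-imbalance term, with the first controlled by the geometric bound $\|\bc_{i,s}-\bbeta_s^*\|\le\frac{(1-m_{i,s})\rad}{m_{i,s}}$ (the paper's Lemma~\ref{lem:bound-dist-2-center-of-mass}, which you re-derive) and the second by the mass bounds~\eqref{eq:contained-mass}. Your constants are in fact slightly sharper than the paper's on the first term, and your recentering trick for $\bnu_2$ is an equivalent substitute for the paper's normalization $\bbeta_1^*=\bzero$, $\max_s\|\bbeta_s^*\|\le\deltamax$.
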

Applying these two lemmas to bound the right hand side of equation~(\ref{eq:A and B}), we obtain that 
\begin{equation}
\|\bbeta_{i}-\bb_{i}\|\leq\frac{k\rad}{1-k^{2}\lambda\rad}+\frac{k\rad(k^{2}\lambda\rad)}{(1-k^{2}\lambda\rad)^{2}}+\frac{2k^{2}\lambda\rad}{1-k^{2}\lambda\rad}\deltamax,\label{eq:bound-dist-weighted-center}
\end{equation}
thereby proving Part~2(b) of Theorem~\ref{prop:ball-family-bounds}.

We have completed the proof of Theorem~\ref{prop:ball-family-bounds}.

\subsection{Proof of Proposition~\ref{prop:same-direction} (Same Direction)
\label{sec:same-direction}}
Under the  perturbation direction $\bv_{i}=\bv_{j}$, the new Voronoi boundary $\partial_{i,j}(\bbeta+t\bv)$ is a translation of the original boundary $\partial_{i,j}(\bbeta)$ by the amount $t\bv$; see left panel of Figure ~\ref{fig:Illustration-change-assignment}. When both  $\Delta_{i\to j}^{\bv}$ and $\Delta_{j\to i}^{\bv}$ have measure $0$ with respect to $\mP_s$, setting $\widetilde{W}_{i,j}^{\bv}\equiv 0$ satisfies the conclusions of the proposition as $\rho_s(\partial_{i,j})=0$ in this case. Thus we only need to consider the case where $\partial_{i,j}(\bbeta)$ intersects $\ball_{s}$ non-trivially. As exactly one of the sets $\mP_s(\Delta_{i\to j}^{\bv})$ and $\mP_s(\Delta_{j\to i}^{\bv})$ is non-zero, we assume WLOG that $\mP_s(\Delta_{i\to j}^{\bv})>0$. Since $W_{j\to i}^{\bv}$ is non-positive, we have  $W_{i\to j}^{\bv}+W_{j\to i}^{\bv} \le W_{i\to j}^{\bv}$. It suffices to upper bound $W_{i\to j}^{\bv}$.

Recall expression for $W_{i\to j,s}^{\bv}$:
\begin{align*}
W_{i\to j,s}^{\bv}(t)
:=&\int_{\Delta_{i\to j}^ {\bv}(t)}(\|\x-\bbeta_{j}-t\bv_{j}\|^{2}-\|\x-\bbeta_{i}-t\bv_{i}\|^{2})f_{s}(\x)\ddup\x \\
= & \int_{\Delta_{i\to j}^{\bv}(t)}[2\langle\x,\bbeta_{i}+t\bv_{i}-\bbeta_{j}-t\bv_{j}\rangle+(\|\bbeta_{j}+t\bv_{j}\|^{2}-\|\bbeta_{i}+t\bv_{i}\|^{2})]f_{s}(\x)\ddup\x.
\end{align*}
 Since the integrand above only involves the Euclidean norm, we are free to choose any coordinate system. In particular, we choose the origin to be $\frac{1}{2}(\bbeta_{i}+\bbeta_{j})$, the principal axis to be the direction of $\bbeta_{i}-\bbeta_{j}$, and the secondary axis to be the direction orthogonal to $\bbeta_{i}-\bbeta_{j}$ and in $\text{span}\{\bbeta_{i}-\bbeta_{j},\bv_{i}\}$. Under this coordinate system, we have 
\begin{align*}
W_{i\to j,s}^{\bv}(t) & =2\int_{\Delta_{i\to j}^{\bv}(t)}(\|\bbeta_{i}-\bbeta_{j}\|x_{1}-t\langle\bbeta_{i}-\bbeta_{j},\bv_{i}\rangle)f_{s}(\x)\ddup\x\\
 & =2\int_{x_{2},\ldots,x_{d}:\x\in\Delta_{i\to j}^{\bv}(t)}\int_{x_{1}=0}^{t\cos(\theta)}[\|\bbeta_{i}-\bbeta_{j}\|x_{1}-t\|\bbeta_{i}-\bbeta_{j}\|\cos(\theta)]f_{s}(\x)\ddup x_{1}\ddup x_{2}\ldots\ddup x_{d},
\end{align*}
where we introduce the shorthand $\theta\equiv\theta_{i, j}^{\bv}:=\angle(\bbeta_{i}-\bbeta_{j},\bv_{j})=\angle(\bbeta_{i}-\bbeta_{j},\bv_{i})$ and, slightly abusing notation,  still use $f_s$ to denote the density function under the new coordinate system. Note that the region
\[
S(z,t):=\{(x_{2},\ldots,x_{d}):\x\in\Delta_{i\to j}^{\bv}(t),x_{1}=z\}
\]
is a vertical slice under the current coordinate system; in particular, $S(z,t)$ is the intersection of the set $\Delta_{i\to j,s}^{\bv}(t)$ and the hyperplane that is parallel
to $\partial_{i,j}$ and at a distance $z$ from $\partial_{i,j}$. Defining the integral
\[
\rho_{i\to j,s}^{\bv}(z,t):=\int_{S(z,t)}f_{s}(z,x_2,\ldots,x_d)\ddup x_{2}\ldots\ddup x_{d},
\]
we can write 
\begin{equation}
W_{i\to j,s}^{\bv}(t)=2\int_{x_{1}=0}^{t\cos(\theta)} \Big[\|\bbeta_{i}-\bbeta_{j}\|x_{1}-t\|\bbeta_{i}-\bbeta_{j}\|\cos(\theta)\Big]\rho_{i\to j,s}^{\bv}(x_{1},t)\ddup x_{1}.\label{eq:W_expression}
\end{equation}
When $t$ is small, we have the sandwich bound $m(t)\leq\rho_{i\to j,s}^{\bv}(x_{1},t)\leq M(t)$, where
\[
m(t):=\min_{x_{1}\in[0,t\cos(\theta)]}\rho_{i\to j,s}^{\bv}(x_{1},t),
\]
 and 
\[
M(t):=\max_{x_{1}\in[0,t\cos(\theta)]}\rho_{i\to j,s}^{\bv}(x_{1},t).
\]
Here $m(t)$ and $M(t)$ are well-defined as they are the max/min of the bounded function $\rho_{i\to j,s}^{\bv}$ over the compact interval $[0,t\cos(\theta)]$. Moreover, $m(t)$ and $M(t)$ satisfy
\[
\lim_{t\to0}m(t)=\lim_{t\to0}M(t)=\frac{\revol(\partial_{i,j})}{\vol(\ball_{s}(\rad))}=\rho_{s}(\partial_{i,j}).
\]
Bounding the two terms in the bracket in equation~(\ref{eq:W_expression}) separately, we obtain that
\[
2d_{i,j}\cos^{2}(\theta)\cdot m(t)t^{2}\leq2\int_{x_{1}=0}^{t\cos(\theta)}\|\bbeta_{i}-\bbeta_{j}\|x_{1}\rho_{i\to j,s}^{\bv}(x_{1},t)\ddup x_{1}\leq2d_{i,j}\cos^{2}(\theta)\cdot M(t)t^{2}
\]
and 
\[
4d_{i,j}\cos^{2}(\theta)\cdot m(t)t^{2}\leq2t\int_{x_{1}=0}^{t\cos(\theta)}\|\bbeta_{i}-\bbeta_{j}\|\cos(\theta)\rho_{i\to j,s}^{\bv}(x_{1},t)\ddup x_{1}\leq4d_{i,j}\cos^{2}(\theta)\cdot M(t)t^{2},
\]
whence 
\[
2(m(t)-2M(t))d_{i,j}\cos^{2}(\theta)t^{2}\leq W_{i\to j,s}^{\bv}(t)\leq2(M(t)-2m(t))d_{i,j}\cos^{2}(\theta)t^{2}.
\]
It is then easy to see that $W_{i\to j,s}^{\bv}(0)=0$, $\frac{\ddup}{\ddup t}W_{i\to j,s}^{\bv}(t)\mid_{t=0}=0$
and 
\[
\lim_{t\to0}\frac{W_{i\to j,s}^{\bv}(t)}{t^{2}}=-2d_{i,j}\cos^{2}(\theta)\rho_{s}(\partial_{i,j}).\]
In summary, setting $\widetilde{W}_{i,j}^{\bv} = W_{i\to j}^{\bv}$, we have established that $W_{i\to j}^{\bv}+W_{j\to i}^{v} \le \widetilde{W}_{i\to j}^{\bv}$  and that $\widetilde{W}_{i\to j}^{\bv}$ satisfies the desired analytical properties in Proposition~\ref{prop:same-direction}.

\subsection{Proof of Proposition~\ref{prop:opposite-direction} (Opposite Direction)
\label{sec:opposite-direction}}

\begin{figure}
\centering
\includegraphics[scale=0.25, clip, trim=0 0 0 0]{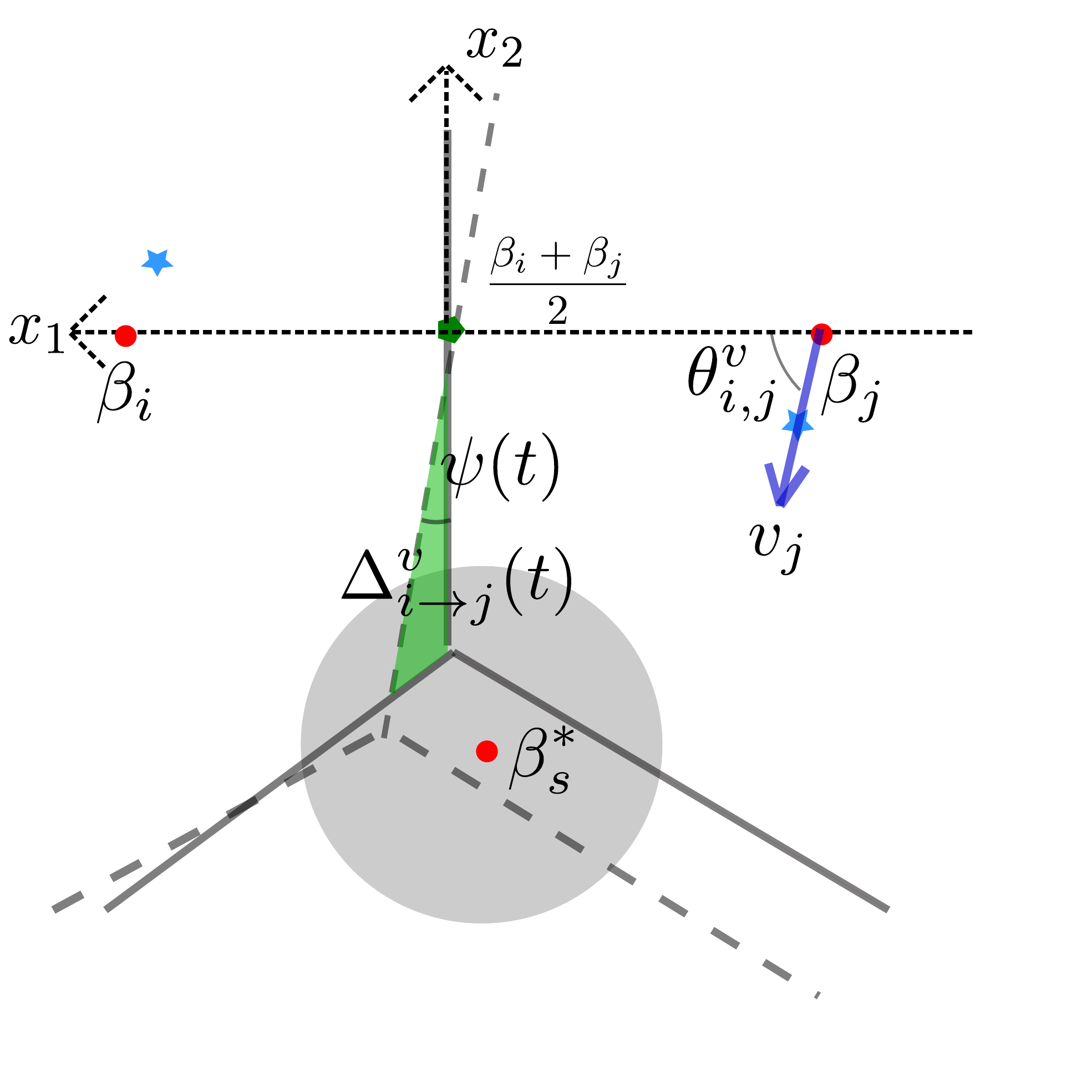}~~~~\includegraphics[scale=0.25, clip, trim=0 0 0 0 ]{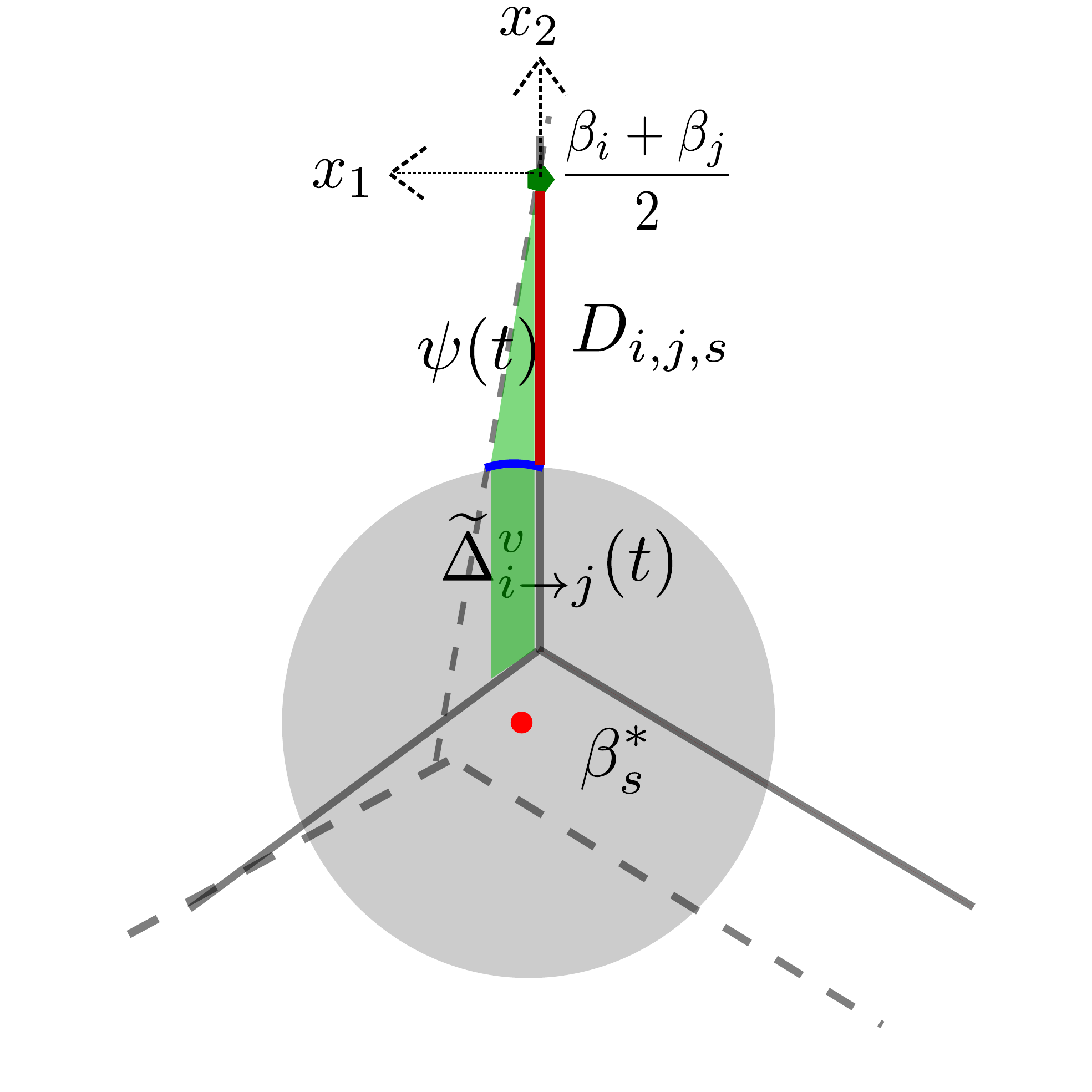}
\caption{\label{fig:upper-bound-opposite-direction}Illustration of the local
coordinate system and the upper bound function. The local coordinate system has the origin at $\frac{\bbeta_i+\bbeta_j}{2}$, represented by the dark green dot. Its principal axis is in the direction of $\bbeta_i-\bbeta_j$, plotted as the $x_1$ axis, and its secondary axis is plotted as the $x_2$ axis. In the left panel, the red dots represent $\bbeta_i$ and $\bbeta_j$ respectively; the blue stars represent $\bbeta_i+t\bv_i$ and $\bbeta_j+t\bv_j$ respectively, with $\bv_i=-\bv_j$. The dark blue arrow indicates the direction of $\bv_j$ and it has an angle $\theta_{i,j}^{v}$ with the vector $\bbeta_i-\bbeta_j$, the $x_1$ axis. Correspondingly, the Voronoi boundary $\partial_{i,j}(\bbeta+t\bv)$ rotates around the origin with an angle $\psi(t)$. The boundaries $\partial(\bbeta+t\bv)$ are plotted using dotted lines. The shaded green region in the left panel corresponds to the set $\Delta_{i\to j}^{\bv}(t)$, in which the point becomes closer to $\bbeta_j+t\bv_j$ than $\bbeta_i+t\bv_i$ after $\bbeta$ is moved to $\bbeta+t\bv$. 
In the right panel, we demonstrate the set $\widetilde{\Delta}_{i\to j}^{\bv}(t)$ using the shaded green region. It is a subset of $\Delta_{i\to j}^{\bv}(t)$, enclosed by the hyperplane $\{\x:x_1=0\}$ and the translated hyperplane $\{\x:x_1=D_{i,j,s}\tan(\psi(t))\}$}.
\end{figure}

Under the perturbation direction $\bv_i=-\bv_j$, the Voronoi boundary $\partial_{i,j}(\bbeta)$ rotates around the mid point $\frac{\bbeta_i+\bbeta_j}{2}$; see right panel of Figure~\ref{fig:Illustration-change-assignment}. When both $\Delta_{i\to j}^{\bv}(t)$ and $\Delta_{j\to i}^{\bv}(t)$ has measure $0$ with respect to $\mP_s$, setting $\widehat{W}_{i,j,s}\equiv 0$ satisfies the conclusion of the proposition, as $\rho_s(\partial_{i,j})=0$ in this case. When $\frac{1}{2}(\bbeta_{i}+\bbeta_{j})\in\ball_{s}$, we can also set $\widehat{W}_{i\to j,s}^{\bv}\equiv 0$, as $D_{i,j,s}=0$ in this case. 

In the rest of the proof, We assume WLOG that $\frac{1}{2}(\bbeta_{i}+\bbeta_{j})\not\in\ball_{s}$ and $\mP_s(\Delta_{i\to j,s}^{\bv}(t))>0$. Note that in this case we have $\partial_{i,j}\cap \ball_s \neq \emptyset$,  $\rho_s(\partial_{i,j})>0$ and $D_{i,j,s} >0.$ Since $W_{i\to j,s}^{\bv}+W_{j\to i,s}^{\bv} \le W_{i\to j, s}^{\bv}$, it suffices to find an function $\widehat{W}_{i,j,s}^{\bv}$ that upper bounds $W_{i\to j,s}^{\bv}$ in a neighborhood of $0$ and satisfies the desired analytical properties.

Similarly to Section~\ref{sec:same-direction}, We may use any convenient coordinate system. In particular, we choose the origin to be $\frac{1}{2}(\bbeta_{i}+\bbeta_{j})$, the principal axis to be the direction of $\bbeta_{i}-\bbeta_{j}$, and the secondary axis to be the direction that is orthogonal to $\bbeta_{i}-\bbeta_{j}$ and in the plane $\linspace_{i,j,s}$; see the left panel of Figure~\ref{fig:upper-bound-opposite-direction}. Under this coordinate system, we have the representation
\begin{align}
\label{eq:opp_dir_coordinate}
\bbeta_{i}=(d_{i,j},0,\ldots,0),\quad \bbeta_{j}=(-d_{i,j},0,\ldots,0),\quad \bbeta_{s}^{*}=(b_{1}^{*},b_{2}^{*},0,\ldots,0)
\end{align}
for some $b_{1}^{*},b_{2}^{*}\in\real$. The orientation of the secondary axis can be chosen to satisfy $b_2^*<0$. We note that the boundary $\partial_{i,j}(\bbeta)$ rotates around $\frac{1}{2}(\bbeta_{i}+\bbeta_{j})$ by an angle $\psi(t)$ that satisfies
\begin{align}
\tan(\psi(t))=\frac{t\sin(\theta)}{d_{i,j}-t\cos(\theta)} \label{eq:perturbed-angle},\end{align}

where we recall that $\theta\equiv\theta_{i, j}^{\bv} \in [0,\pi/2]$ is the (unsigned) angle between $\bv_j$ and $\bbeta_i-\bbeta_j$. Moreover, since we assume $\mP_s(\Delta_{i\to j}^{\bv}(t))>0$, the directions $\bv_i$ and $\bv_j$ have the following coordinate representation:
\begin{align*}
    \bv_j= \big(\cos(\theta),-\sin(\theta),0,\ldots,0 \big) = -\bv_i.
\end{align*}

We now proceed to upper bound the function $W_{i\to j,s}^{\bv}$. Define the polyhedron set
\begin{align}
 \widetilde{\Delta}_{i\to j}^{\bv}(t):=\big\{\x\in \Delta_{i\to j}^{\bv}(t):x_1\leq D_{i,j,s}\tan(\psi(t)) \big\}.\label{eq:sub-delta-set}
 \end{align}
The set $\widetilde{\Delta}_{i\to j}^{\bv}(t)$ is sandwiched between the two hyperplanes $x_1=0$ and $x_1=D_{i,j,s}\tan(\psi(t))$; see the right panel of Figure~\ref{fig:upper-bound-opposite-direction} for an illustration. With the above notations, we can upper bound $W_{i\to j,s}^{\bv}$ as follows:
\begin{align}
W_{i\to j,s}^{\bv} 
&\overset{\text{(i)}}{\le} \int_{\widetilde{\Delta}_{i\to j}^{\bv}(t)} \Big[2\langle\x,\bbeta_{i}+t\bv_{i}-\bbeta_{j}-t\bv_{j}\rangle+\big(\|\bbeta_{j}+t\bv_{j}\|^{2}-\|\bbeta_{i}+t\bv_{i}\|^{2}\big)\Big] f_{s}(\x)d\x \nonumber\\
& \overset{\text{(ii)}}{=} \int_{\tilde{\Delta}_{i\to j}^{\bv}(t)} 2\langle\x,\bbeta_{i}+t\bv_{i}-\bbeta_{j}-t\bv_{j}\rangle f_{s}(\x)\ddup\x \nonumber\\
& =\int_{\tilde{\Delta}_{i\to j}^{\bv}(t)} \Big[2\|\bbeta_{i}-\bbeta_{j}\|x_{1}-4t\cos(\theta)x_{1}+4t\sin(\theta)x_{2}\Big]f_{s}(\x)\ddup\x \label{eq:last-display-upperbound_W_ijs},
\end{align}
where step (i) holds because the integrand in the definition of $W_{i\to j,s}^{\bv}$ is non-positive and thus integrating over a smaller set $\widetilde{\Delta}_{i\to j}^{\bv}(t) \subseteq \Delta_{i\to j}^{\bv}(t)$ does not decrease the value of the integral, and step (ii) holds since under the current coordinate system, $\bbeta_i = -\bbeta_j$ and $\bv_i = - \bv_j$.

To proceed, we let 
\begin{align}
    D(t):=\max\big\{x_2:\x\in \widetilde{\Delta}_{i\to j}^{\bv}(t)\cap \ball_s\big\} \label{eq:max-x2}.
\end{align}
denote maximum of the second coordinate of the set $\widetilde{\Delta}_{i\to j}^{\bv}(t)\cap \ball_s$ under the current coordinate system. The following lemma, proved at the end of this section, characterizes the limit property of $D(t)$.
\begin{lemma}[Negative second coordinate at the boundary]
 \label{lem:second-coordinate} 
 Suppose that $\bv$  satisfies $\|\bv_i\|=\|\bv_j\|=1$  and $\bv_i=-\bv_j\in \linspace_{i,j,s}$, $\frac{\bbeta_{i}+\bbeta_{j}}{2}\not\in\mB_{s}$ and $\Delta_{i\to j}^{\bv}(t)\cap \ball_s\neq \emptyset$. We have  $\lim_{t\to 0} D(t) = -D_{i,j,s}$.
\end{lemma}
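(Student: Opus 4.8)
I would carry everything out in the local coordinate system of~\eqref{eq:opp_dir_coordinate}, where $\bbeta_i=(d_{i,j},0,\dots,0)$, $\bbeta_j=(-d_{i,j},0,\dots,0)$, $\bbeta_s^*=(b_1^*,b_2^*,0,\dots,0)$ with $b_2^*<0$, the affine hull of the unperturbed boundary $\partial_{i,j}$ is $\{x_1=0\}$, and $\vor_i(\bbeta)\subseteq\{x_1\ge0\}$. Three facts set things up: (i) by~\eqref{eq:sub-delta-set} together with $\vor_i(\bbeta)\subseteq\{x_1\ge0\}$, every $\x\in\widetilde\Delta_{i\to j}^\bv(t)$ obeys $0\le x_1\le D_{i,j,s}\tan(\psi(t))$; (ii) $\x\in\vor_j(\bbeta+t\bv)$ forces the half-space inequality $(d_{i,j}-t\cos\theta)x_1+t\sin(\theta)x_2\le0$, i.e.\ $x_2\le -x_1/\tan(\psi(t))$, which is the rotated boundary $\partial_{i,j}(\bbeta+t\bv)$; (iii) $\psi(t)\to0$ by~\eqref{eq:perturbed-angle}, and $\sin\theta>0$ (the case $\theta=0$ would make $\Delta_{i\to j}^\bv(t)$ lower-dimensional). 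Set $\z^\star:=(0,-D_{i,j,s},0,\dots,0)$: since $\ball_s\cap\{x_1=0\}$ is a round ball centered at $(0,b_2^*,0,\dots,0)$ with $b_2^*<0$ and with the midpoint $\tfrac{\bbeta_i+\bbeta_j}{2}$ outside it (because $\tfrac{\bbeta_i+\bbeta_j}{2}\notin\ball_s$), one verifies that $\z^\star$ is exactly the point of $\ball_s\cap\{x_1=0\}$ nearest the midpoint, so $\z^\star\in\partial\ball_s$, $\z^\star\in\linspace_{i,j,s}$, and $x_2(\z^\star)=-D_{i,j,s}$; moreover $\z^\star$ lies on the Voronoi facet $\partial_{i,j}=\vor_i(\bbeta)\cap\vor_j(\bbeta)$, consistently with the defining formula for $D_{i,j,s}$. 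This $\z^\star$ is the common target of the two matching bounds below.

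\emph{Lower bound} $\liminf_{t\to0}D(t)\ge -D_{i,j,s}$. I would show $\z^\star\in\widetilde\Delta_{i\to j}^\bv(t)\cap\ball_s$ for all small $t>0$. It lies in $\ball_s$ and in $\vor_i(\bbeta)$ because $\z^\star\in\partial_{i,j}$; it satisfies $x_1(\z^\star)=0\le D_{i,j,s}\tan(\psi(t))$; and it lies in $\vor_j(\bbeta+t\bv)$ for small $t$, since the only constraint at risk --- being closer to $\bbeta_j+t\bv_j$ than to $\bbeta_i+t\bv_i$ --- actually improves strictly: using $\bbeta_i+t\bv_i=-(\bbeta_j+t\bv_j)$ and $\langle\z^\star,\bbeta_j\rangle=0$ one gets $\|\z^\star-\bbeta_j-t\bv_j\|^2-\|\z^\star-\bbeta_i-t\bv_i\|^2=-4t\langle\bv_j,\z^\star\rangle=-4t\sin(\theta)D_{i,j,s}<0$, while the inequalities against the remaining centers $\bbeta_\ell+t\bv_\ell$ persist by continuity from those satisfied by $\z^\star\in\vor_j(\bbeta)$. (If $\z^\star$ happened to lie on a second Voronoi facet as well, I would instead use a point of $\textup{relint}(\partial_{i,j})\cap\ball_s$ arbitrarily close to $\z^\star$, where those inequalities are strict, and let the resulting error tend to $0$.) Hence $D(t)\ge x_2(\z^\star)=-D_{i,j,s}$.

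\emph{Upper bound} $\limsup_{t\to0}D(t)\le -D_{i,j,s}$. Take $\x^{(t)}\in\widetilde\Delta_{i\to j}^\bv(t)\cap\ball_s$ attaining $x_2(\x^{(t)})=D(t)$. By (i) and (iii), $x_1(\x^{(t)})\to0$ uniformly, so every subsequential limit $\x^\infty$ lies in the closed set $\vor_i(\bbeta)\cap\{x_1=0\}\cap\ball_s=\partial_{i,j}\cap\ball_s$, which already gives $\limsup_{t\to0}D(t)\le\max\{x_2:\x\in\partial_{i,j}\cap\ball_s\}$. To sharpen this to $-D_{i,j,s}$ I would combine the two confining inequalities (i) and (ii): at height $x_2$ they restrict $x_1$ to $[0,\min\{D_{i,j,s},-x_2\}\tan(\psi(t))]$, so if $x_2(\x^\infty)>-D_{i,j,s}$ this admissible slab is strictly thinner than $D_{i,j,s}\tan(\psi(t))$; rescaling the first coordinate by $\tan(\psi(t))$, passing to the limit, and using the $O(\tan(\psi(t)))$-stability of the projection of the midpoint onto the convex set $\ball_s\cap\partial_{i,j}$, one is forced to identify $\x^\infty$ with that projection, i.e.\ $\x^\infty=\z^\star$ and $x_2(\x^\infty)=-D_{i,j,s}$ --- a contradiction. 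Combining with the lower bound yields $\lim_{t\to0}D(t)=-D_{i,j,s}$.

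The step I expect to be the real obstacle is the sharpening in the upper bound. That $\widetilde\Delta_{i\to j}^\bv(t)$ collapses onto $\{x_1=0\}$ is immediate; what requires care is that its intersection with $\ball_s$ collapses onto the \emph{single} point $\z^\star$ of $\partial_{i,j}\cap\ball_s$ nearest the midpoint, rather than onto some nearby point of that set with a larger $x_2$-coordinate. This is exactly the purpose of the otherwise puzzling truncation $x_1\le D_{i,j,s}\tan(\psi(t))$ built into the definition~\eqref{eq:sub-delta-set} of $\widetilde\Delta_{i\to j}^\bv(t)$: paired with the rotating-boundary bound $x_2\le -x_1/\tan(\psi(t))$ and the stability of the nearest-point map on the convex set $\ball_s\cap\partial_{i,j}$, it pins the limit to $x_2=-D_{i,j,s}$ exactly. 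The only remaining loose ends --- the consistency of the two descriptions of $D_{i,j,s}$ (so that $\z^\star$ genuinely lies on the facet) and the degenerate configuration where $\z^\star$ lies on a second Voronoi facet --- are dispatched routinely by the relative-interior approximation noted above.
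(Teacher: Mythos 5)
Your overall route is the same as the paper's: pass to the normalized coordinates of~\eqref{eq:opp_dir_coordinate}, identify the target point $\z^\star=(0,-D_{i,j,s},0,\dots,0)$, and show that $\widetilde{\Delta}_{i\to j}^{\bv}(t)\cap\ball_s$ collapses onto it. Your lower bound is correct and in fact more explicit than the paper's: the computation $\|\z^\star-\bbeta_j-t\bv_j\|^2-\|\z^\star-\bbeta_i-t\bv_i\|^2=-4tD_{i,j,s}\sin\theta<0$, together with the relative-interior perturbation for the degenerate case, cleanly places $\z^\star$ (or a nearby point) in $\widetilde{\Delta}_{i\to j}^{\bv}(t)\cap\ball_s$ for small $t$.

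The issue is the ``sharpening'' step of your upper bound, which you yourself identify as the crux. As written it is not an argument: observing that the admissible slab at height $x_2$ is ``strictly thinner'' than $D_{i,j,s}\tan(\psi(t))$ produces no contradiction, and the appeal to rescaling and ``stability of the nearest-point map'' never rules out a limit point of $\ball_s\cap\{x_1=0\}$ with $x_2>-D_{i,j,s}$. What saves you is that no such point exists, for an elementary reason you set up but do not use: the slice $\ball_s\cap\{x_1=0\}$ is a $(d-1)$-ball centered at $(0,b_2^*,0,\dots,0)$ with $b_2^*<0$ and radius $\rho=\sqrt{\rad^2-(b_1^*)^2}$, and since the midpoint (the origin) lies outside $\ball_s$, the point of this slice nearest the origin and the point of this slice with largest $x_2$ coincide at $(0,b_2^*+\rho,0,\dots,0)=\z^\star$; hence $\max\{x_2:\x\in\ball_s,\;x_1=0\}=-D_{i,j,s}$ outright. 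So the first half of your upper bound (every subsequential limit of $\x^{(t)}$ lies in $\ball_s\cap\{x_1=0\}$) already finishes the proof, and the rescaling argument should be deleted rather than repaired. Relatedly, your diagnosis that the truncation in~\eqref{eq:sub-delta-set} is what pins down $\lim_{t\to0}D(t)$ is off: even without it, any $\x\in\Delta_{i\to j}^{\bv}(t)\cap\ball_s$ has $x_1\le(|b_2^*|+\rad)\tan(\psi(t))\to0$, because $x_2\le-x_1/\tan(\psi(t))$ is otherwise incompatible with $\x$ lying in the bounded set $\ball_s$; the truncation exists so that $\widetilde{\Delta}_{i\to j}^{\bv}(t)$ is a slab over which the integrals in Proposition~\ref{prop:opposite-direction} can be estimated. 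In substance this is the paper's collapse-to-a-point argument (which it carries out by reducing to the circle $\mS_s\cap\linspace_{i,j,s}$); the one geometric observation that makes it work is the coincidence of the nearest-point and max-$x_2$ characterizations of $\z^\star$, and that is exactly the observation your write-up replaces with a step that does not hold together.
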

Lemma~\ref{lem:second-coordinate} ensures that
$
\lim_{t\to 0} D(t) = -D_{i,j,s} < 0.\label{eq:limit_of_x2}
$
Consequently, when $t$ is sufficiently small, we have $D(t)<0$ by the continuity.

Continuing from the last display equation~\eqref{eq:last-display-upperbound_W_ijs}, we obtain our final upper bound $\widehat{W}_{i,j,s}^{\bv}(t)$:
\begin{align*}
W_{i\to j,s}^{\bv}  
 & \leq\int_{\tilde{\Delta}_{i\to j}^{\bv}(t)}\Big[2\|\bbeta_{i}-\bbeta_{j}\|x_{1}-4t\cos(\theta)x_{1}+4t\sin(\theta)D(t)\Big] f_{s}(\x)\ddup\x
 =:\widehat{W}_{i,j,s}^{\bv}(t).
\end{align*}

To establish the analytical properties of $\widehat{W}_{i,j,s}^{\bv}(t)$, we follow a similar argument as in Section~\ref{sec:same-direction}. Define the integral
\[
\rho_{i\to j,s}^{\bv}(z,t):=\int_{x_{2},\ldots,x_{d}:\x\in\tilde{\Delta}_{i\to j}^{\bv}(t),x_{1}=z}f_{s}(z,x_2,\ldots,x_d)\ddup x_{2}\ldots\ddup x_{d},
\]
and rewrite $\widehat{W}_{i,j,s}^{\bv}(t)$ compactly as follows: 
\[
\widehat{W}_{i,j,s}^{\bv}(t)=\int_{0}^{D_{i,j,s}\tan(\psi(t))} \Big[2\|\bbeta_{i}-\bbeta_{j}\|x_{1}-4t\cos(\theta)x_{1}+4t\sin(\theta)D(t)\Big]\rho_{i\to j,s}^{\bv}(x_{1},t)\ddup x_{1}.
\]
We have the sandwich bound $m(t)\leq\rho_{i\to j,s}^{\bv}(x_{1},t)\leq M(t)$, valid for $x_{1}\in\big[0,D_{i,j,s}\tan(\psi(t))\big]$, where $m(t):=\min_{x_{1}\in\big[0,D_{i,j,s}\tanh(\psi(t))]}\rho_{i\to j,s}^{\bv}(x_{1},t)$ and $M(t):=\max_{x_{1}\in\big[0,D_{i,j,s}\tan(\psi(t))\big]}\rho_{i\to j,s}^{\bv}(x_{1},t)$.

Moreover, the functions $m(\cdot)$ and $M(\cdot)$ satisfy
\[
\lim_{t\to0}m(t)=\lim_{t\to0}M(t)=\frac{\revol(\partial_{i,j})}{\vol(\ball_{s})}=\rho_{s}(\partial_{i,j}).
\]
With some algebra as well as the non-positivity of $D(t)$, we obtain the following sandwich bound for $\widehat{W}_{i\to j}^{\bv}$ over a small neighborhood of $0$:
\begin{align*}
\widehat{W}_{i\to j,s}^{\bv}(t) & \geq 2m(t)d_{i,j}D^2_{i,j,s}\tan^2(\psi(t))+4tD_{i,j,s}\sin(\theta)D(t)M(t)\tan(\psi(t))\\
& -2tD^2_{i,j,s}\cos(\theta)M(t)\tan^2(\psi(t));
\end{align*}
and
\begin{align*}
\widehat{W}_{i\to j,s}^{\bv}(t) & \leq 2M(t)d_{i,j}D^2_{i,j,s}\tan^2(\psi(t))+4tD_{i,j,s}\sin(\theta)D(t)m(t)\tan(\psi(t))\\
&-2tD^2_{i,j,s}\cos(\theta)m(t)\tan^2(\psi(t)).
\end{align*}
 Proposition~\ref{prop:opposite-direction} follows immediately from the limit properties of $m(t)$, $M(t)$, $D(t)$ and $\tanh(\psi(t))$. %

\begin{proof}[Proof of Lemma~\ref{lem:second-coordinate}]
We use the same notations and coordinate system as before. Observe that in equation~\eqref{eq:max-x2}, the maximum that defines $D(t)$ must be attained at a point in $\widetilde{\Delta}_{i\to j}^{\bv}(t)\cap\mS_{s}$, where $\mS_{s}$ is the hypersphere of the ball $\ball_s$; see the right panel of Figure~\ref{fig:upper-bound-opposite-direction}. We claim that the maximum must also be attained by a point in the hyperplane $\linspace_{i,j,s}$. Indeed, recalling the representation in equation~\eqref{eq:opp_dir_coordinate}, we see that each point $\x\in \widetilde{\Delta}_{i\to j}^{\bv}(t) \cap \mS_{s}$ must satisfy 
\begin{align*}
(x_1-b_1^*)^2+(x_2-b_2^*)^2+\sum_{j\geq 3}x_j^2=\rad^2 
\quad\text{and}\quad
x_1\in \big[0, D_{i,j,s}\tan(\psi(t))\big] 
\end{align*}
From the above equation it is clear that for each fixed $z\in [0, D_{i,j,s}\tan(\psi(t))]$, over the set $\mS_{s}\cap \widetilde{\Delta}_{i\to j}^{\bv}(t)\cap \{\x:x_1=z\}$, the maximum of $x_2$ is attained exactly when $x_j=0$ for all $j\geq 3$, that is, $\x\in \linspace_{i,j,s}$. Combining these observations, we conclude that
\begin{align*}
D(t) = \max\big\{x_2:\x\in \widetilde{\Delta}_{i\to j}^{\bv}(t)\cap \mS_s\cap \linspace_{i,j,s}\big\}.    
\end{align*}
Note that the set $\widetilde{\Delta}_{i\to j}^{\bv}(t)\cap \mS_s\cap \linspace_{i,j,s}$ is compact, which is represented by the solid blue segment in $\ball_s$ in the right panel of Figure~\ref{fig:upper-bound-opposite-direction};  as $t\to 0$, this set shrinks continuously to a single point $(0,-D_{i,j,s}, \ldots,0)$. It thus follows that
$\lim_{t\to 0} D(t)=-D_{i,j,s}.$
This completes the proof of Lemma~\ref{lem:second-coordinate}.
\end{proof}

\subsection{Proofs of Lemmas~\ref{lem:bound-A} and~\ref{lem:bound-B} \label{sec:bound_A_B}}

For convenience we restate the bounds in equations~\eqref{eq:T2-mass} and ~\eqref{eq:contained-mass} as follows:
\begin{align}\label{eq:mis_bound}
    \begin{cases}
    m_{i,s} \le k \lambda \rad, & \text{if } s\in \Wext_i, \\
    m_{i,s} \ge 1-k^2\lambda\rad, & \text{if } s \in \Wint_i,
    \end{cases}
\end{align}
where we recall that $m_{i,s}=\mP(\vor_i)$ is the probability mass of the set $\vor_i$ with respect to the uniform distribution on the ball $\ball_s \equiv \ball_{\bbeta^*_s}(\rad)$, and $\bc_s$ is the corresponding center of mass. Applying the  simple geometric result in Lemma~\ref{lem:bound-dist-2-center-of-mass}, we further obtain the bound
\begin{align}\label{eq:cis_bound}
    \|\bbeta_{s}^{*}-\bc_{i,s}\| \le \frac{\rad \cdot (1-m_{i,s})}{m_{i,s}} \le \begin{cases}
    \frac{\rad}{m_{i,s}}, & \text{if }s\in\Wext_i,\\
    \frac{ k^{2}\lambda\rad^2}{m_{i,s}}, & \text{if } s\in\Wint_i,
    \end{cases}
\end{align}
where the last step follows from the fact that $m_{i,s}\le 1$ and equation~\eqref{eq:mis_bound}. We are ready to prove the two lemmas.

\begin{proof}[Proof of Lemma~\ref{lem:bound-A}]
We have the following decomposition of the vector $\bmu$:
\begin{align*}
\bmu= & \frac{\sum_{s\in \Wint_i}m_{i,s}\bc_{i,s}+\sum_{s\in \Wext_i}m_{i,s}\bc_{i,s}}{\sum_{s\in \Wint_i}m_{i,s}+\sum_{s\in \Wext_i}m_{i,s}}-\frac{\sum_{s\in \Wint_i}m_{i,s}\bc_{i,s}}{\sum_{s\in \Wint_i}m_{i,s}}\\
= & \frac{\sum_{s\in \Wext_i}m_{i,s}\bc_{i,s}}{\sum_{s\in \Wint_i}m_{i,s}+\sum_{s\in \Wext_i}m_{i,s}}-\frac{\sum_{s\in \Wext_i}m_{i,s}\sum_{s\in \Wint_i}m_{i,s}\bc_{i,s}}{(\sum_{s\in \Wint_i}m_{i,s}+\sum_{s\in \Wext_i}m_{i,s})\sum_{s\in \Wint_i}m_{i,s}}\\
= & \frac{\sum_{s\in \Wext_i}m_{i,s}(\bc_{i,s}-\bbeta_{s}^{*})}{\sum_{s\in \Wint_i}m_{i,s}+\sum_{s\in \Wext_i}m_{i,s}}-\frac{\sum_{s\in \Wext_i}m_{i,s}\sum_{s\in \Wint_i}m_{i,s}(\bc_{i,s}-\bbeta_{s}^{*})}{(\sum_{s\in \Wint_i}m_{i,s}+\sum_{s\in \Wext_i}m_{i,s})\sum_{s\in \Wint_i}m_{i,s}}\\
 & +\frac{\sum_{s\in \Wext_i}m_{i,s}\bbeta_{s}^{*}}{\sum_{s\in \Wint_i}m_{i,s}+\sum_{s\in \Wext_i}m_{i,s}}-\frac{\sum_{s\in \Wext_i}m_{i,s}\sum_{s\in \Wint_i}m_{i,s}\bbeta_{s}^{*}}{(\sum_{s\in \Wint_i}m_{i,s}+\sum_{s\in \Wext_i}m_{i,s})\sum_{s\in \Wint_i}m_{i,s}}.
\end{align*}
It follows that
\begin{align*}
\|\bmu\|\leq & \frac{\sum_{s\in \Wext_i}m_{i,s}\|\bc_{i,s}-\bbeta_{s}^{*}\|}{\sum_{s\in \Wint_i}m_{i,s}+\sum_{s\in \Wext_i}m_{i,s}}+\frac{\sum_{s\in \Wext_i}m_{i,s}\sum_{s\in \Wint_i}m_{i,s}\|\bc_{i,s}-\bbeta_{s}^{*}\|}{(\sum_{s\in \Wint_i}m_{i,s}+\sum_{s\in \Wext_i}m_{i,s})\sum_{s\in \Wint_i}m_{i,s}}\\
 & +\frac{2\sum_{s\in \Wext_i}m_{i,s}}{\sum_{s\in \Wint_i}m_{i,s}+\sum_{s\in \Wext_i}m_{i,s}}\deltamax. \\
\overset{\text{(i)}}{\leq}& \frac{k\rad}{1-k^{2}\lambda\rad} + \frac{(k^{2}\lambda\rad)(k(k^{2}\lambda\rad^2))}{(1-k^{2}\lambda\rad)^{2}} + \frac{k^{2}\lambda\rad}{1-k^{2}\lambda\rad}\deltamax, 
\end{align*}
where the last step holds due to the inequalities~\eqref{eq:mis_bound} and~\eqref{eq:cis_bound} as well as the fact that $|\Wint_i|,|\Wext_i| \in [1, k]$. This completes the proof of Lemma~\ref{lem:bound-A}.
\end{proof}
\begin{proof}[Proof of Lemma~\ref{lem:bound-B}]
We have the following decomposition of the vector $\bnu$:
\begin{align*}
\bnu= & \frac{\sum_{s\in \Wint_i}m_{i,s}\bc_{i,s}}{\sum_{s\in \Wint_i}m_{i,s}}-\frac{1}{|\Wint_i|}\sum_{s\in \Wint_i}\bbeta_{s}^{*}\\
= & \frac{\sum_{s\in \Wint_i}m_{i,s}(\bc_{i,s}-\bbeta_{s}^{*})}{\sum_{s\in \Wint_i}m_{i,s}}+\frac{\sum_{s\in \Wint_i}\big(m_{i,s}-\frac{1}{|\Wint_i|}\sum_{s'\in \Wint_i}m_{i,s'}\big)\bbeta_{s}^{*}}{\sum_{s\in \Wint_i}m_{i,s}}.
\end{align*}
Using the inequalities~\eqref{eq:mis_bound} and~\eqref{eq:cis_bound} as well as the triangle inequality, we obtain that
\[
\|\bnu\|\leq\frac{k(k^{2}\lambda\rad^2)}{1-k^{2}\lambda\rad}+\frac{k^{2}\lambda\rad}{1-k^{2}\lambda\rad}\deltamax,
\]
thereby proving Lemma~\ref{lem:bound-B}.
\end{proof}

\section{Conclusion}\label{sec:conclusion}
In this paper, we characterize the structure of all the local minima in the $k$-means problem. We show that under an appropriate separation condition of  the ground truth clusters, the local minima are always composed of one-fit-many, many-fit-one or almost-empty type associations between the fitted and ground truth centers. 

Several future directions are of interests for both theory and applications. An immediate direction is to generalize our results from the population case to the finite sample case, and from balanced spherical GMMs to more general mixture models with imbalanced clusters, general covariance matrices and heavy-tailed distributions.

Also, while we have focused on the $k$-means formulation, we expect that similar structural results hold for a much broader class of clustering formulations, particularly the maximum likelihood formulation of mixture problems. On the computational side, we have discussed the implications of our results for improving clustering algorithms. Rigorously justifying these algorithms (which are largely heuristic so far) in a broad range of models would be interesting.

Finally, it would be of great interest to establish similar structural results for other non-convex optimization problems that arise in machine learning and statistics applications.

\section*{Acknowledgement}
W. Qian and Y. Chen are partially supported by NSF CRII award 1657420 and grant 1704828.

\bibliographystyle{plain}
\bibliography{kmean-structural-highsnr}

\appendixpage
\appendix

\section{\label{sec:Equivalence-to-standard-k-means}Equivalence to the partition-based formulation}

A common way of formulating the $k$-means clustering problem is as follows: given a set of observations $\x_{1},\ldots,\x_{n}\in\mathbb{R}^{d}$, we find a partition $\partition=\{S_{1},\ldots,S_{k}\}$ of these observations such that the within-cluster sum of squared distances is minimized:
\begin{equation}\label{eq:k-means-formulation-partition}
\min_{\partition}\sum_{j=1}^{k}\sum_{\x\in S_{j}}\|\x-\bm{\mu}_{j}\|^{2},
\end{equation}
where $\bm{\mu}_{j} = \frac{1}{|S_j|} \sum_{\x \in S_{j}} \x$ is the mean of points in cluster $i$. Meanwhile, the formulation~\eqref{eq:k-means-formulation} used in this paper  is based on optimizing over the centers $\bbeta=(\bbeta_{1},\ldots,\bbeta_{k})$, restated as follows
\begin{align*}
\min_{\bbeta}\sum_{i=1}^{n}\min_{j\in[k]}\|\x_{i}-\bbeta_{j}\|^{2}.  
\end{align*}
Note that each solution $\bbeta$ induces a partition of $\real^{d}$ via the Voronoi diagram, hence a partition $\{S_j(\bbeta)\}$ of the observations, and that the sum of squared distances over a set of points is minimized by their mean. Combining this observations, we obtain that for any $\bbeta$:
\begin{equation}\label{eq:equivalence_1}
\min_{\partition}\sum_{j=1}^{k}\sum_{\x\in S_{j}}\|\x-\bm{\mu}_{j}\|^{2}
\leq \sum_{j=1}^{k}\sum_{\x \in S_j(\bbeta)} \|\x-\bbeta_{j}\|^{2} 
= \sum_{i=1}^{n}\min_{j\in[k]}\|\x_{i}-\bbeta_{j}\|^{2}.
\end{equation}
On the other hand, for any partition $\partition=\{S_{1},\ldots,S_{k}\}$ of the data points and its corresponding means $(\bm{\mu}_{1},\ldots,\bm{\mu}_{k})$, we have  
\begin{equation}\label{eq:equivalence_2}
\sum_{j=1}^{k}\sum_{\x\in S_{j}}\|\x-\bm{\mu}_{j}\|^{2}\ge\sum_{i=1}^{n}\min_{j\in[k]}\|\x_{i}-\bm{\mu}_{j}\|^{2}\geq\min_{\bbeta}\sum_{i=1}^{n}\min_{j\in[k]}\|\x_{i}-\bbeta_{j}\|^{2}.
\end{equation}
Taking the minimum over $\bbeta$ of both sides of equation~\eqref{eq:equivalence_1}, and the minimum over $\partition$ for equation~\eqref{eq:equivalence_2}, we conclude that the two formulations~\eqref{eq:k-means-formulation-partition} and~\eqref{eq:k-means-formulation} have the same optimal values. Moreover, an optimal solution for one formulation induces an optimal solution for the other. Hence these two formulations are equivalent.

\section{\label{sec:proof_truth_global_opt}Proof of Proposition~\ref{prop:truth_is_global_opt}}

In this section we prove Proposition~\ref{prop:truth_is_global_opt}, which states that under the Stochastic Ball Model, the ground truth centers $\bbeta^{*}$ is the global minimum of the $k$-means objective
function $G$.
\begin{proof}
We begin by upper bounding the objective value of the ground truth: 
\begin{align*}
G(\bbeta^{*}) & =\frac{1}{k}\sum_{s\in[k]}\int\min_{i\in[k]}\| \x-\bbeta_{i}^{*}\| ^{2}f_{s}(\x)\ddup\x\\
 & \le\frac{1}{k}\sum_{s\in[k]}\int\| \x-\bbeta_{s}^{*}\| ^{2}f_{s}(\x)\ddup\x\\
 & \le\rad^{2},
\end{align*}
where the second inequality follows from the fact that each true cluster $\ball_{s}$ has radius $\rad$.

Now let $\bbeta$ be a global minimum of $G$. By optimality of $\bbeta$, we have for each $s\in[k]$:
\begin{align*}
\rad^2 \ge G(\bbeta) 
& \overset{\text{(i)}}{\ge} \frac{1}{k}\int\min_{i\in[k]} \| \x-\bbeta_{i}\| ^{2}f_{s}(\x)\ddup\x \\
 & \overset{\text{(ii)}}{\ge} \frac{1}{k}\int\min_{i\in[k]}\left(\frac{1}{2}\| \bbeta_{s}^{*}-\bbeta_{i}\| ^{2}-\| \x-\bbeta_{s}^{*}\| ^{2}\right)f_{s}(\x)\ddup\x \\
 &  \overset{\text{(iii)}}{\ge} \frac{1}{2k} \min_{i\in[k]} \|\bbeta_s^* - \bbeta_i \|^2 - \frac{\rad^2}{k}.
\end{align*}
where step (i) holds by ignoring $k-1$ clusters, step (ii) holds by the inequality $(a-b)^{2}\ge\frac{1}{2}a^{2}-b^{2}$, and step (iii) holds because $f_s$ is a probability density. From the above equation we obtain that 
\[
\min_{i\in[k]}\| \bbeta_{s}^{*}-\bbeta_{i}\| <2\sqrt{k}\rad,\qquad\forall s\in[k];
\]
that is, each true center $\bbeta_{s}^{*}$ is $2\sqrt{k}\rad$-close to at least one $\bbeta_{i}$. We further observe that each $\bbeta_{i}$ is $2\sqrt{k}\rad$-close to at most one true center $\bbeta_{s}^{*}$; otherwise, by the triangle inequality we woud have $\deltamin\le\| \bbeta_{s}^{*}-\bbeta_{s'}^{*}\| \le\| \bbeta_{s}^{*}-\bbeta_{i}\| +\| \bbeta_{s'}^{*}-\bbeta_{i}\| <4\sqrt{k}\rad,$
contradicting the SNR assumption $\snrmin:=\frac{\deltamin}{\rad}\ge 6\sqrt{k}$. Since the number of $\bbeta_{i}$'s is equal to that of $\bbeta_{s}^{*}$'s, we deduce that each $\bbeta_{i}$ is $2\sqrt{k}\rad$-close to\emph{ exactly one} $\bbeta_{s}^{*}$. Without loss of generality, we may assume that 
\[
\| \bbeta_{s}^{*}-\bbeta_{s}\| <2\sqrt{k}\rad,\qquad\forall s\in[k].
\]

When the above inequality and the SNR assumption $\snrmin:=\frac{\deltamin}{\rad}\ge 6\sqrt{k}$ hold, we have for each pairs $(s,s')\in[k]\times[k]$ with $s\neq s'$ and each $\x\in\ball_{s}$:
\begin{align*}
\| \x-\bbeta_{s}\|  & \le\| \x-\bbeta_{s}^{*}\| +\| \bbeta_{s}^{*}-\bbeta_{s}\| \\
 & <\rad+2\sqrt{k}\rad\\
 & \le 6\sqrt{k}\rad-\rad-2\sqrt{k}\rad\\
 & <\| \bbeta_{s}^{*}-\bbeta_{s'}^{*}\| -\| \x-\bbeta_{s}^{*}\| -\| \bbeta_{s'}-\bbeta_{s'}^{*}\| \\
 & \le\| \x-\bbeta_{s'}\| ,
\end{align*}
which implies that $\ball_{s}\subseteq\vor_{s}(\bbeta)$. Applying Lemma~\ref{lem:necessary} to the global minimum $\bbeta$, we obtain that 
\[
\bbeta_{s}=\frac{\int_{\vor_{s}(\bbeta)}\x f(\x)\ddup\x}{\int_{\vor_{s}(\bbeta)}f(\x)\ddup\x}=\frac{\int_{\ball_{s}}\x f(\x)\ddup\x}{\int_{\ball_{s}}f(\x)\ddup\x}=\bbeta_{s}^{*},\qquad\forall s\in[k].
\]
thereby proving that $\bbeta^{*}$ is the only global minimum.
\end{proof}

\section{\label{sec:proof_existence}Proof of Proposition~\ref{prop:existence}}

In this section we prove Proposition~\ref{prop:existence}, which stats that under the one-dimensional Stochastic Ball Model in Figure~\ref{fig:local-min-3-intervals}, the solution $\bbeta=(\beta_1,\beta_2,\beta_3)=(-2-\frac{\rad}{2}, -2+\frac{\rad}{2}, 1)$ is a local minimum of the $k$-means objective function $G$.

\begin{proof}
Observe that 
$\vor_{1}(\bbeta)=(-\infty,-2]$, $\vor_{2}(\bbeta)=[2,\frac{-1+\rad/2}{2}]$, $\vor_{3}(\bbeta)=[\frac{-1+\rad/2}{2},\infty]$, $\partial_{1,2}(\bbeta)=-2$
and $\partial_{2,3}(\bbeta)=\frac{-1+\rad/2}{2}$. It is easy to see that for any $\bb=(b_{1},b_{2},b_{3})\in\real^{3}$  in a small neighborhood of $\bbeta$, $\partial_{2,3}(\bb)$ remains strictly between $-2+\rad$ and $-\rad$, and $\partial_{1,2}(\bb)$ remains strictly between $-2-\rad$ and $-2+\rad$. Therefore, for any such $\bb$ we can explicitly write down its objective value:
\[
G(\bb)=\int_{-2-\rad}^{\frac{b_{1}+b_{2}}{2}}(x-b_{1})^{2}\ddup x+\int_{\frac{b_{1}+b_{2}}{2}}^{-2+\rad}(x-b_{2})^{2}\ddup x+\int_{-\rad}^{\rad}(x-b_{3})^{2}\ddup x+\int_{2-\rad}^{2+\rad}(x-b_{3})^{2}\ddup x.
\]
We compute the derivative and Hessian for $G$ at $\bb$:
\begin{align*}
\nabla_{\bb}G & =\left[\begin{array}{c}
-2\int_{-2-\rad}^{\frac{b_{1}+b_{2}}{2}}(x-b_{1}) \ddup x\\
-2\int_{\frac{b_{1}+b_{2}}{2}}^{-2+\rad}(x-b_{2})\ddup x\\
-2\int_{-\rad}^{\rad}(x-b_{3})\ddup x-2\int_{2-\rad}^{2+\rad}(x-b_{3})\ddup x
\end{array}\right],\\
\nabla_{\bb}^{2}G & =\left[\begin{array}{ccc}
\frac{b_{1}-b_{2}}{2}+2(\frac{b_{1}+b_{2}+2+\rad}{2}) & \frac{b_{1}-b_{2}}{2} & 0\\
\frac{b_{1}-b_{2}}{2} & \frac{b_{1}-b_{2}}{2}+2(-2+\rad-\frac{b_{1}+b_{2}}{2}) & 0\\
0 & 0 & 8\rad
\end{array}\right].
\end{align*}
Evaluating the above expressions at $b_{1}=\beta_{1} = 2 =\frac{\rad}{2}$, $b_{2}=\beta_{2}=-2+\frac{\rad}{2}$ and $b_{3}=\beta_{3}=1$, we find that the derivative vanishes and the Hessian is positive definite:
\begin{align*}
\nabla_{\bb}G\big|_{\bb=\bbeta}&=0,\\
\nabla_{\bb}^{2}G\big|_{\bb=\bbeta}
&=\left[\begin{array}{ccc}
1.5\rad & -0.5\rad & 0\\
-0.5\rad & 1.5\rad & 0\\
0 & 0 & 8\rad
\end{array}\right] \succ 0.
\end{align*}
Therefore, $\bbeta$ is indeed a local minimum of $G$.
\end{proof}

\section{Proofs for Section \ref{sec:prelim}}

In this section, we prove the technical lemmas stated in Section \ref{sec:prelim}.

\subsection{\label{sec:proof-differentiability}Proof of Lemma~\ref{lem:differentiability}}
\begin{proof}
Our goal is to derive the existence and expression of the derivative of the function 
\[
H^{\bv}(t):=G(\bbeta+t\bv)=\int_{\x}\min_{i\in[k]}\|\x-\bbeta_{i}-t\bv_{i}\|^{2}f(\x)\ddup\x,
\]
at $t=0$. We make use of the following measure-theoretic version of the Leibniz integral
rule.
\begin{proposition}[Leibniz's integral rule]
\label{prop:Leibniz-integral-rule}Let $T$ be an open subset of $\real$, and $X$ be a measure space. Suppose $g:T\times X\to\real$ satisfies the following conditions: (i) $g(t,\x)$ is a Lebesgue-integrable function of $\x$ for each $t\in T$; (ii) for almost all $\x\in X$, the partial derivative $\frac{\partial}{\partial t}g(t,\x)$ exists for all $t\in T$; (iii) There is an integrable function $\theta:X\to\real$ such that $|\frac{\partial}{\partial t}g(t,\x)|\leq\theta(\x)$ for all $t\in T$ and almost every $\x\in X$, then we have
\[
\frac{\ddup}{\ddup t}\int_{X}g(t,\x)\ddup\x=\int_{X}\frac{\partial}{\partial t}g(t,\x)\ddup\x.
\]
\end{proposition}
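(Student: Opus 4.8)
The plan is to reduce the statement to the Dominated Convergence Theorem via a standard difference-quotient argument. Fix an arbitrary $t_{0}\in T$; since $T$ is open there is $\delta>0$ with $(t_{0}-\delta,t_{0}+\delta)\subseteq T$. It suffices, by the sequential characterization of the derivative, to show that for every sequence $(h_{n})$ of nonzero reals with $h_{n}\to0$ and $|h_{n}|<\delta$, the difference quotients
\[
Q_{n}:=\frac{1}{h_{n}}\left(\int_{X}g(t_{0}+h_{n},\x)\,\ddup\x-\int_{X}g(t_{0},\x)\,\ddup\x\right)
\]
converge to $\int_{X}\frac{\partial}{\partial t}g(t_{0},\x)\,\ddup\x$; since the limit will not depend on the choice of $(h_{n})$, this yields differentiability of $t\mapsto\int_{X}g(t,\x)\,\ddup\x$ at $t_{0}$ together with the claimed formula. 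Note that $Q_{n}=\int_{X}\phi_{n}(\x)\,\ddup\x$, where $\phi_{n}(\x):=\frac{g(t_{0}+h_{n},\x)-g(t_{0},\x)}{h_{n}}$ is well-defined and integrable in $\x$ by condition (i).

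Next I would verify the two hypotheses needed to pass the limit inside the integral. By condition (ii) there is a single null set $N\subseteq X$ such that for every $\x\notin N$ the map $t\mapsto g(t,\x)$ is differentiable at every point of $T$; in particular it is differentiable, hence continuous, on the closed interval with endpoints $t_{0}$ and $t_{0}+h_{n}$, which lies in $T$. The Mean Value Theorem then produces, for each such $\x$, a point $\xi_{n,\x}$ strictly between $t_{0}$ and $t_{0}+h_{n}$ with $\phi_{n}(\x)=\frac{\partial}{\partial t}g(\xi_{n,\x},\x)$, so condition (iii) gives the uniform bound $|\phi_{n}(\x)|\le\theta(\x)$ for almost every $\x$ and every $n$. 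Separately, differentiability of $t\mapsto g(t,\x)$ at $t_{0}$ (again for $\x\notin N$) gives the pointwise convergence $\phi_{n}(\x)\to\frac{\partial}{\partial t}g(t_{0},\x)$ for almost every $\x$.

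Finally I would invoke the Dominated Convergence Theorem with dominating function $\theta\in L^{1}(X)$: since $\phi_{n}\to\frac{\partial}{\partial t}g(t_{0},\cdot)$ almost everywhere and $|\phi_{n}|\le\theta$ almost everywhere, the limit function is integrable and $Q_{n}=\int_{X}\phi_{n}\,\ddup\x\to\int_{X}\frac{\partial}{\partial t}g(t_{0},\x)\,\ddup\x$. As $t_{0}\in T$ and the sequence $(h_{n})$ were arbitrary, this establishes the result on all of $T$. The only point that requires care is the correct reading of condition (ii): it must be interpreted as asserting one exceptional null set off of which $t\mapsto g(t,\x)$ is differentiable simultaneously at every $t\in T$ — this is precisely what legitimizes the Mean Value Theorem step on an interval depending on $n$ — and with that observation in hand the remainder of the argument is routine. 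One may alternatively cite this as a classical fact (see, e.g., any standard real-analysis text), but the above is a self-contained derivation.
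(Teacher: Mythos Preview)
Your argument is correct and is the standard textbook proof of differentiation under the integral sign via the Mean Value Theorem and Dominated Convergence. Note, however, that the paper does not actually prove this proposition: it is stated as a classical fact within the proof of Lemma~\ref{lem:differentiability} and immediately applied, with no accompanying argument. So there is no ``paper's own proof'' to compare against; your self-contained derivation simply supplies what the paper takes for granted, and your closing remark that one may instead cite a standard reference is exactly the route the paper implicitly takes.
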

We verify the above three conditions in the proposition for $H^{\bv}$. Without loss of generality, assume that $\|\bv_{i}\|\le1,\forall i\in[k]$. Let $\Delta:=\min_{i\neq j}\|\bbeta_{i}-\bbeta_{j}\|$, which satisfies $\Delta>0$ by the assumption that $\{\bbeta_{j}\}_{j=1}^{k}$ are pairwise distinct. For condition (i), we see that the function $g(t,\x):=\min_{i\in[k]}\|\x-\bbeta_{i}-t\bv_{i}\|^{2}f(\x)$ is integrable in $\x$ for each bounded $t$, since the density $f$ has bounded second moment. For condition (ii), note that when $t\in T:= [-\frac{\Delta}{4},\frac{\Delta}{4}]$, the perturbed solution $\bbeta+t\bv$ remains pairwise disjoint, hence the Voronoi boundary $\partial(\bbeta+t\bv)$ has measure 0. For all $t\in T$ and  all $\x\notin\partial(\bbeta+t\bv)$, the minimizer in the definition of $g(t+\epsilon,\x)$ remains fixed when $|\epsilon|$ is sufficiently small, hence the partial derivative $\frac{\partial}{\partial t}g(t,\x)$ exists at all $t\in T$ and satisfies 
\begin{align}
\x\in\vor_{i}(\bbeta+t\bv)\implies\frac{\partial}{\partial t}g(t,\x)=-2\langle\bv_{i},\x-\bbeta_{i}-t\bv_{i}\rangle f(\x) \label{eq:vor-derivative}.\end{align}

Finally for  condition (iii), for each $\x\in\text{support}(f)=\cup_{s\in[k]}\ball_{s}(\rad)$, we have the bound $\left|\langle\bv_{i},\x-\bbeta_{i}-t\bv_{i}\rangle\right|\le\max_{s\in[k]}\|\bbeta_{s}^{*}\|+\rad+\|\bbeta_{i}\|+\frac{\Delta}{4}$ when $t\in T$, hence $|\frac{\partial}{\partial t}g(t,\x)|$ is bounded by an integrable function. Applying the Leibniz's integral rule and equation~\eqref{eq:vor-derivative}, we obtain that 
\[
\frac{\ddup}{\ddup t}H^{\bv}(0)=\int_{\x}\frac{\partial}{\partial t}g(0,\x)\ddup\x=-\sum_{i=1}^{k}\int_{\vor_{i}(\bbeta)}2\langle\bv_{i},\x-\bbeta_{i}\rangle f(\x)\ddup\x
\]
as claimed.
\end{proof}

\subsection{\label{sec:proof-necessary-condition}Proof of Lemma \ref{lem:necessary}}
\begin{proof}
In view of the decomposition~(\ref{eq:H_decompose}) and Remark~\ref{rem:non-positive}, we have the following upper bound for $H^{\bv}$:
\begin{equation}
H^{\bv}(t) \le h^{\bv}(t):=\sum_{i=1}^{k}\int_{\vor_{i}(\bbeta)}\|\x-\bbeta_{i}-t\bv_{i}\|^{2}f(\x)\ddup\x,\label{eq:sum-over-Voronoi-sets}
\end{equation}
which satisfies $h^{\bv}(0)=H^{\bv}(0)$. Since $\bbeta$ is a local minimum of $G$, we know that $t=0$ is local minimum of $H^{\bv}$ for all $\bv$ , hence Lemma~\ref{lem:core-upper-bound} ensures that $t=0$ is also a local minimum of $h^{\bv}$.

Suppose that we have $\bbeta_{1}=\bbeta_{2}$ and $\vor_{1}(\bbeta)=\vor_{2}(\bbeta)$ has a positive measure with respect to $f$, and that all other $\bbeta_{j},j\ge3$ are pairwise distinct and different from $\bbeta_{1}$ and $\bbeta_{2}$. We may partition $\vor_{1}(\bbeta)=\vor_{2}(\bbeta)$ into two disjoint sets $S_{1}$ and $S_{2}$, each with positive measure. For $i\in\{1,2\}$ denote by $\bs_{i}:=\frac{\int_{S_i}\x f(\x)\ddup\x}{\int_{S_i}f(\x)\ddup\x}$ the center of mass of $S_{i}$ with respect to $f$. We can choose the partition in such a way that $\bs_{1}\neq\bbeta_{1}$ and $\bs_{2}\neq\bbeta_{2}$. Fix a direction $\bv=(\bv_{1},\bv_{2},\bzero,\ldots,\bzero)$ with $\bv_{1}=\bs_{1}-\bbeta_{1}$ and $\bv_{2}=\bs_{2}-\bbeta_{2}$. In this case the upper bound $h^{\bv}$ can be written as 
\begin{align*}
h^{\bv}(t) & =\int_{S_{1}}\|\x-\bbeta_{1}-t\bv_{1}\|^{2}f(\x)\ddup\x+\int_{S_{2}}\|\x-\bbeta_{2}-t\bv_{2}\|^{2}f(\x)\ddup\x+\text{constant}\\
 & =\int_{S_{1}}\|\x-\bs_{1}\|^{2}\ddup\x+\int_{S_{1}}\|\bs_{1}-\bbeta_{1}-t\bv_{1}\|^{2}f(\x)\ddup\x\\
 & \qquad+\int_{S_{2}}\|\x-\bs_{2}\|^{2}\ddup\x+\int_{S_{2}}\|\bs_{2}-\bbeta_{2}-t\bv_{2}\|^{2}f(\x)\ddup\x+\text{constant}.
\end{align*}
(In the calculation above we have avoided double counting the contribution from $\vor_{1}(\bbeta)=\vor_{2}(\bbeta)$.) With the above choices of $\bv_{1}$ and $\bv_{2}$, we see that $h^{\bv}(0) > h^{\bv}(t)$ for all $t\in (0,1)$ and hence $t=0$ is not a local minimum of $h^{\bv}$, which is a contradiction. Therefore, we must have $\bbeta_{1}\neq\bbeta_{2}$ whenever $\vor_i(\bbeta) \cup \vor_j(\bbeta)$ has a positive measure. The more general statement in Lemma \ref{lem:necessary} can be established in a similar manner.

Now suppose that $\bbeta_{i}$ has a Voronoi set $\vor_{i}(\bbeta)$
with a positive measure. In this case the center of mass $\bc_{i}:=\frac{\int_{\vor_{i}(\bbeta)}\x f(\x)\ddup\x}{\int_{\vor_{i}(\bbeta)}f(\x)\ddup\x}$ is well-defined. Choose the direction $\bv=(\bzero,\ldots,\bzero,\bc_{i}-\bbeta_{i},\bzero,\ldots,\bzero)$. Since $t=0$ is a local minimum of $H^{\bv},$ its derivative must vanish at $t=0$. Using the derivative expression from Lemma~\ref{lem:differentiability},\footnote{Lemma~\ref{lem:differentiability} is applicable for the following reason: we can ignore those $\vor_{i}(\bbeta)$'s with zero measure in the integrals defining $G$ and $H^{\bv}$, in which case we have just established that $\bbeta$ must have pairwise distinct components and thus satisfy the premise of Lemma~\ref{lem:differentiability}.} we obtain that 
\begin{align*}
0=\frac{\ddup}{\ddup t}H^{\bv}(0) & =-\int_{\vor_{i}(\bbeta)}2\langle\bv_{i},\x-\bbeta_{i}\rangle f(\x)\ddup\x\\
 & =-2\langle\bv_{i},\bc_{i}-\bbeta_{i}\rangle\int_{\vor_{i}(\bbeta)}f(\x)\ddup\x\\
 & =-2\|\bc_{i}-\bbeta_{i}\|^{2}\int_{\vor_{i}(\bbeta)}f(\x)\ddup\x,
\end{align*}
where the last step follows from our choice of $\bv$. Since $\int_{\vor_{i}(\bbeta)}f(\x)\ddup\x$ is the measure of $\vor_i(\bbeta)$ and positive, we must have $\bbeta_{i}=\bc_{i}$ as claimed. %
\end{proof}

\section{Proofs for Section~\ref{sec:proof_main_ball}}

We state and prove several technical lemmas that are used in Section~\ref{sec:proof_main_ball}.

\subsection{Proof of Lemma~\ref{lem:prox-center}}
\label{sec:proof-prox-center}

Recall that $m_{i,s}$ and $\bc_{i,s}$ denote the mass and the center of mass of the set $\vor_{i}$ with respect to the density $f_s$. We similarly define 
\begin{align*}
    \widetilde{m_{i,s}} =  \sum_{s^{\prime}\in [k]:s^{\prime}\neq s} m_{i,s^{\prime}} 
    \qquad\text{and}\qquad
    \widetilde{\bc_{i,s}} =  \frac{\sum_{s^{\prime}\in [k]:s^{\prime}\neq s } m_{i,s^{\prime}}\bc_{i,s^{\prime}}}{\sum_{s^{\prime}\in [k]:s^{\prime}\neq s} m_{i,s^{\prime}}},
\end{align*}
which are the mass and the center of mass of the set $\vor_{i}$ with respect to the density $\sum_{s'\neq s} f_{s'}$. With this notation, the local minimum $\bbeta$ must satisfy the necessary condition
\begin{align}
    \bbeta_{i}=\frac{m_{i,s}\bc_{i,s}+\widetilde{m_{i,s}}\widetilde{\bc_{i,s}} }{m_{i,s}+\widetilde{m_{i,s}}}, \label{eq:center-of-mass-condition}
\end{align}
which follows from Lemma~\ref{lem:necessary} and the text thereafter. Rearranging the expression~\eqref{eq:center-of-mass-condition} gives
\[
\widetilde{\bc_{i,s}} = \frac{(\widetilde{m_{i,s}}+m_{i,s})\bbeta_{i,s}-m_{i,s}\bc_{i,s}}{\widetilde{m_{i,s}}}.
\]
It then follows from the triangle inequality that
\begin{align}
    \|\widetilde{\bc_{i,s}}-\bbeta_{i}\| = & \frac{m_{i,s}}{\widetilde{m_{i,s}}}\|\bc_{i,s}-\bbeta_{i}\| 
    \leq  \frac{m_{i,s}}{\widetilde{m_{i,s}}}(\|\bc_{i,s}-\bbeta_{s}^*\|+\|\bbeta_{s}^*-\bbeta_{i}\|).
    \label{eq:bound-shifted-center}
\end{align}

We are now ready to prove Lemma~\ref{lem:prox-center}, whose assumption states that $\rho_s(\partial_{j,\ell})>\lambda=\frac{c}{\sqrt{\rad\deltamax}}$ for some $(s,j,\ell) \in T_i \times [k]\times [k]$. Observation~\ref{pro1} ensures that such an $s$ is unique, hence for all other $s'\in T_i \setminus\{s\}$, we must have  $\rho_{s'}(\partial_{j,\ell}) \le \lambda, \forall (j,\ell) $. Observation~\ref{pro2} ensures that for all these $s'$, if $\bbeta_{s'}^* \in \vor_i$ then $s' \in \Wint_i $. In view of these properties and equation~\eqref{eq:center-of-mass-condition}, we can see that $\widetilde{\bc_{i,s}}$ is similar to $\bbeta_i$ except that the density of the $s$-th true cluster is ignored. Therefore, we can follow the same arguments for proving Part~2(b) of Theorem~\ref{prop:ball-family-bounds} to obtain that 
\begin{align}
    \|\widetilde{\bc_{i,s}}-\bb_{i}^{-}\|\leq & \frac{k\rad}{1-k^{2}\lambda\rad}+\frac{k\rad(k^{2}\lambda\rad)}{(1-k^{2}\lambda\rad)^{2}}+\frac{2k^{2}\lambda\rad}{1-k^{2}\lambda\rad}\deltamax
    \leq  \deltamax\frac{8ck^2}{\sqrt{\snrmax}}. \label{eq:step-bound2}
\end{align}
On the other hand, under the assumption of the lemma, Part~1 of Theorem~\ref{prop:ball-family-bounds} ensures that 
\begin{align}
\|\bbeta_i-\bbeta_s^*\| \leq \frac{k}{\lambda}+3\rad\leq \deltamax\frac{4k}{c\sqrt{\snrmax}}. \label{eq:step-bound}  
\end{align} 

We proceed by bounding $\|\bb_{i}^{-}-\bbeta_s^*\| $ as follows:
\begin{align}
\|\bb_{i}^{-}-\bbeta_s^*\| 
 \leq & \|\bb_{i}^{-}-\widetilde{\bc_{i,s}}\|+\|\widetilde{\bc_{i,s}}-\bbeta_{i}\|+\|\bbeta_{i}-\bbeta_s^*\| \nonumber\\
 \overset{\text{(i)}}{\leq} & \|\bb_{i}^{-}-\widetilde{\bc_{i,s}}\|+ \frac{m_{i,s}}{\widetilde{m_{i,s}}}(\|\bc_{i,s}-\bbeta_{s}^*\|+\|\bbeta_{s}^*-\bbeta_{i}\|) +\|\bbeta_{i}-\bbeta_s^*\| \nonumber \\
 \overset{\text{(ii)}}{\leq } & \deltamax\frac{8ck^{2}}{\sqrt{\snrmax}} + \frac{m_{i,s}}{\widetilde{m_{i,s}}} \cdot \frac{\deltamax}{\snrmax} + \left(\frac{m_{i,s}}{\widetilde{m_{i,s}}} +1\right) \deltamax\frac{4k}{c\sqrt{\snrmax}}, \nonumber 
\end{align}
where in step (i) follows from the bound~\eqref{eq:bound-shifted-center}, and step (ii) follows from the bounds~\eqref{eq:step-bound2} and \eqref{eq:step-bound} as well as the fact that $\bc_{i,s}\in \ball_s$ so $\|\bc_{i,s}-\bbeta_s^*\|\leq \rad = \frac{\deltamax}{\snrmax}$. Now, note that since $|\Wint_i\setminus\{s\}|\geq 2$ by assumption, there exists some $s'\in \Wint_i \subseteq T_i$ such that $s'\neq s$. We have established above that this $s'$ must satisfy $\rho_{s'}(\partial_{j,\ell}) \le \lambda, \forall (j,\ell) $, hence applying Part~2 of Theorem~\ref{prop:ball-family-bounds} we obtain that $m_{i,s'} = \mP_{s'}(\vor_i) \ge 0.5$, which further implies $\frac{m_{i,s}}{\widetilde{m_{i,s}}}\leq 2$. Continuing from the above display equation, we obtain
\begin{align}
\|\bb_{i}^{-}-\bbeta_s^*\|
\leq & \deltamax\frac{8ck^{2}}{\sqrt{\snrmax}} + 3\deltamax\frac{4k}{c\sqrt{\snrmax}}+\deltamax\frac{2}{\snrmin}
 \leq \deltamax\frac{10ck^{2}}{\sqrt{\snrmax}} \label{eq:semi-final},
\end{align}
where the last step follows from the assumption that $c>3$ and $\snrmax\geq 4ck^2$. Combining the inequalities~\eqref{eq:step-bound} and~\eqref{eq:semi-final}, we obtain
\begin{align}
\| \bb_{i}^{-} -  \bbeta_{i}\|\leq \| \bb_{i}^{-} -  \bbeta_s^*\|+\| \bbeta_s^* -  \bbeta_{i}\|\leq \deltamax\frac{11ck^{2}}{\sqrt{\snrmax}}, \label{eq:bound-to-center-exclude-s}
\end{align}
thereby proving the first bound in Lemma~\ref{lem:prox-center}.

To prove the second bound in Lemma~\ref{lem:prox-center}, we observe that by  definition of $\bb_{i}^{+}$ and $\bb_{i}^{-}$, there holds
\begin{align*}
\bb_{i}^{+} := & \frac{1}{1+|\Wint_i \setminus \left\{s\right\}|} \left(\sum_{s^{\prime}\in \Wint_i \setminus \left\{s\right\}} \bbeta_{s^{\prime}}^* +\bbeta_s^*\right) 
 = \frac{|\Wint_i \setminus \left\{s\right\}|}{1+|\Wint_i \setminus \left\{s\right\}|} \bb_{i}^{-} + \frac{1}{1+|\Wint_i \setminus \left\{s\right\}|} \bbeta_s^*. 
\end{align*}
whence $  \| \bb_{i}^{+} -  \bbeta_s^*\|
  =  \frac{|\Wint_i \setminus \left\{s\right\}|}{1+|\Wint_i \setminus \left\{s\right\}|}\|\bb_{i}^{-}-\bbeta_s^*\|. $ It follows that 
\begin{align*}
\| \bb_{i}^{+} -  \bbeta_{i}\|
\leq \| \bb_{i}^{+} -  \bbeta_s^*\|+\| \bbeta_s^* -  \bbeta_{i}\|
\leq \| \bb_{i}^{-} -  \bbeta_s^*\|+\| \bbeta_s^* -  \bbeta_{i}\|
\leq \deltamax\frac{11ck^{2}}{\sqrt{\snrmax}},
\end{align*}
where the last step follows from equation~\eqref{eq:bound-to-center-exclude-s}.

It remains to show that $|\Wint_i \setminus \left\{s\right\}|\geq 2$. Note that $\Wint_i \neq \emptyset$ under the assumption $|W_i \setminus \left\{s\right\}|\geq 1$ of the lemma. For the sake of deriving a contradiction, assume that $ W_i \setminus \left\{s\right\} = \left\{s'\right\}$, in which case $\bb_{i}^{-} =\bbeta_{s'}^*$. It then follows from inequality~\eqref{eq:semi-final} that $ \|\bbeta_{s'}^* -\bbeta_{s}^*\|\leq \deltamax\frac{10ck^2}{\sqrt{\snrmax}}$, contradicting the separation assumption on $\snrmin$ in Theorem~\ref{thm:main_ball}. This completes the proof of Lemma~\ref{lem:prox-center}.

\subsection{\label{sec:control-vol}Controlling the Volume}

In this section, we show that the intersection of a Voronoi
set and a ground truth cluster must be small if (i) the true center is not in the Voronoi set and (ii) the intersection of the true cluster and the boundary of the Voronoi set  is small. This is formalized in the following lemma.
\begin{lemma}[Controlling the volume of intersection]
 \label{lem:prob-intersection}
Let $\mu$ be the uniform distribution on $\ball_{\bzero}(\rad)$. Let $P$ be a closed polyhedron with at most $k$ facets satisfying $\bzero\not\in\textup{int}(P)$. If each facet $F$ of $P$ satisfies $\frac{1}{\rad^d V_d}\revol(F\cap \ball_{\bzero}(\rad)) \le \lambda$, then we have $\mu(P)\leq k\lambda\rad$.
\end{lemma}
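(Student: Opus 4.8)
The plan is to bound $\vol\big(P\cap\ball_{\bzero}(\rad)\big)$ directly via the divergence theorem applied to the convex body $K:=P\cap\ball_{\bzero}(\rad)$ with a carefully chosen affine vector field, using $\bzero\notin\textup{int}(P)$ to neutralize the (otherwise uncontrollable) contribution of the spherical part of $\partial K$. First I would dispose of degenerate cases: if $\vol(K)=0$ then $\mu(P)=0$ and we are done, so assume $\vol(K)>0$, in which case $P$ is full-dimensional and $K$ is a convex body with Lipschitz boundary. Since $P$ is convex and $\bzero\notin\textup{int}(P)$, there is a hyperplane through $\bzero$ having $P$ in one of its closed sides; after a rotation of coordinates (which leaves $\ball_{\bzero}(\rad)$ and all volumes unchanged) I may assume $P\subseteq\{x\in\real^{d}:x_{1}\ge 0\}$. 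Combined with $K\subseteq\ball_{\bzero}(\rad)$, every $x\in K$ then satisfies $0\le x_{1}\le \rad$.

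Next I would record that $\partial K=\big(\bigcup_{\ell}(F_{\ell}\cap\ball_{\bzero}(\rad))\big)\cup\big(P\cap\mS\big)$, where $F_{\ell}$ ranges over the at most $k$ facets of $P$ and $\mS=\partial\ball_{\bzero}(\rad)$, and that up to a set of $(d-1)$-dimensional measure zero the outward unit normal of $K$ equals the facet normal $n_{\ell}$ on $F_{\ell}\cap\ball_{\bzero}(\rad)$ and equals $x/\rad$ on $P\cap\mS$. Then I apply the divergence theorem with the field $V(x)=(x_{1}-\rad)\,e_{1}$, which has $\textup{div}\,V\equiv 1$, to get
\[
\vol(K)=\int_{\partial K}\langle V,n_{K}\rangle\,dS
=\sum_{\ell}\int_{F_{\ell}\cap\ball_{\bzero}(\rad)}(x_{1}-\rad)\langle e_{1},n_{\ell}\rangle\,dS
+\int_{P\cap\mS}\frac{(x_{1}-\rad)\,x_{1}}{\rad}\,dS.
\]
On $P\cap\mS$ one has $0\le x_{1}\le\rad$, so $(x_{1}-\rad)x_{1}/\rad\le 0$ and the spherical term is non-positive. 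On $F_{\ell}\cap\ball_{\bzero}(\rad)\subseteq K$ one again has $0\le x_{1}\le\rad$, so $|(x_{1}-\rad)\langle e_{1},n_{\ell}\rangle|\le|x_{1}-\rad|\le\rad$, whence the $\ell$-th facet term is at most $\rad\cdot\revol\big(F_{\ell}\cap\ball_{\bzero}(\rad)\big)\le\rad\cdot\lambda\rad^{d}V_{d}$ by hypothesis. Summing over the at most $k$ facets yields $\vol(K)\le k\lambda\rad^{\,d+1}V_{d}$, i.e.\ $\mu(P)=\vol(K)/(\rad^{d}V_{d})\le k\lambda\rad$, as claimed.

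The only real obstacle is the boundary bookkeeping in the second step: checking that $\partial K$ decomposes as stated, that facet--facet and facet--sphere overlaps are $(d-1)$-negligible, and that the outward normal has the claimed form off a null set --- routine convex geometry. The rest is the one-line sign and size estimates above; the point of choosing $V(x)=(x_{1}-\rad)e_{1}$ is precisely that the spherical contribution becomes harmless regardless of how close the facet hyperplanes pass to $\bzero$. A calculus-free variant works just as well: project $K$ orthogonally onto $\{x_{1}=0\}$; each vertical fiber of $K$ has length $\le\rad$, so $\vol(K)\le\rad\cdot\vol_{d-1}(\pi_{0}K)$, and the ``lower'' boundary of $K$ --- whose projection is all of $\pi_{0}K$ --- lies a.e.\ in $\bigcup_{\ell}F_{\ell}$ since on $P\cap\mS$ the outward normal $x/\rad$ has non-negative first coordinate; as orthogonal projection does not increase $(d-1)$-volume of a hyperplane subset, $\vol_{d-1}(\pi_{0}K)\le\sum_{\ell}\revol\big(F_{\ell}\cap\ball_{\bzero}(\rad)\big)\le k\lambda\rad^{d}V_{d}$.
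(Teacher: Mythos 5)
Your argument is correct, and your primary route is genuinely different from the paper's. The paper also starts from the separating hyperplane ($P\subseteq\{\langle\x,\bv\rangle\ge 0\}$), but then argues purely geometrically: it shows that translating $P$ by $(\rad+\epsilon)\bv$ moves it off the ball, deduces that $P\cap\ball_{\bzero}(\rad)$ is contained in the ``prism'' $(\partial P\cap\ball_{\bzero}(\rad))+\{t\bv:t\in[0,\rad]\}$, and bounds the volume of that prism by $\rad\sum_{F}\revol(F\cap\ball_{\bzero}(\rad))$. Your divergence-theorem computation with $V(\x)=(x_{1}-\rad)e_{1}$ reaches the same bound more analytically: the choice of $V$ makes the spherical contribution non-positive (precisely because $0\le x_{1}\le\rad$ on $K$), and the facet contributions are each at most $\rad$ times the facet's relative volume. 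The sign check on $P\cap\mS$, the bound $|(x_{1}-\rad)\langle e_{1},n_{\ell}\rangle|\le\rad$, and the reduction to $\mu(P)=\vol(K)/(\rad^{d}V_{d})$ are all sound, and the boundary decomposition you defer to ``routine convex geometry'' is indeed standard for an intersection of a polyhedron with a ball. What the divergence route buys is that all the geometry is absorbed into one sign observation; what the paper's route buys is that it is elementary and makes visually explicit \emph{why} the facets control the volume (every point of $K$ sits within distance $\rad$ of a facet along $\bv$). Your ``calculus-free variant'' at the end --- fibering $K$ over $\{x_{1}=0\}$, noting fibers have length $\le\rad$ and that the lower boundary lies in the facets --- is essentially the paper's argument restated via projection rather than sweeping, so you have in effect produced both proofs.
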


\begin{proof}
Introduce the shorthand $\ball:=\ball_{\bzero}(\rad)$.  We may assume that $P\cap\ball\neq\emptyset$, because otherwise the lemma is trivially true. We claim that one may shift the polyhedron  $P$ by a distance $\rad$ so that its intersection with the ball $\ball$ has zero measure. That is, there exists a unit vector $\bv$ such that $\big(P+ (\rad+\epsilon) \bv \big) \cap \ball = \emptyset $ for all $\epsilon >0$. We further claim that $\bv$ can be chosen in such a way that the intersection $P\cap\ball$  is enclosed by the original boundary and the shifted boundary; that is, $P\cap\ball \subseteq (\partial P \cap \ball) + L_\rad $, where we  $L_\rad := \{t\bv: t\in[0,\rad] \}$ is a line segment. Figure~\ref{fig:Shifting-the-boundary-bound-volume} provides an illustration of these two claims, whose proof is deferred to the end of this section. Therefore, we have the bound 
\begin{align*}
    \mu(P) = \frac{\vol(P\cap\ball)}{\vol(\ball)}
    \le \frac{\vol\big((\partial P \cap \ball) + L_\rad)}{\vol(\ball)}
    \le  \frac{ \rad \sum_{F\in \mathcal{F}} \revol(F\cap \ball) }{\rad^d V_d} \le \rad k\lambda,
\end{align*}
where $\mathcal{F}:= \{F: F \text{ is a facet of }P\}$ satisfies $|\mathcal{F}|\le k$ by assumption. This completes the proof of the lemma.

\begin{figure}
\centering
\includegraphics[scale=0.4, trim=0 30 0 40]{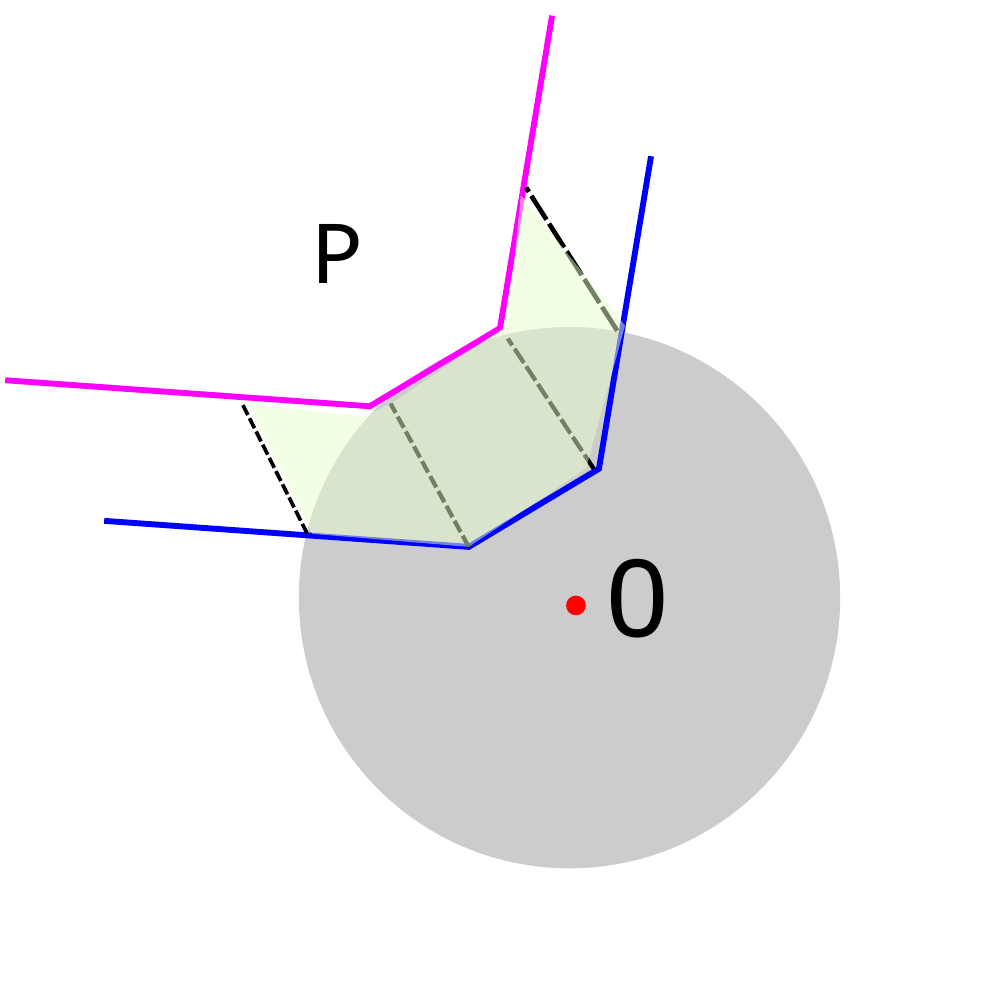}
\caption{\label{fig:Shifting-the-boundary-bound-volume}Shifting the boundary
of the polyhedron $P$ to bound the volume of $P\cap\ball_{\bzero}(\rad)$.}
\end{figure}

Let us prove the two claims  above. Because $P$ is convex and $\bzero \notin \textup{int}(P)$, the separating hyperplane theorem ensures that there exists some unit vector $\bv $ such that $\langle\x,\bv \rangle  \ge 0, \forall \x \in P$. Therefore, for all $\x \in P$ and $\epsilon >0$, we have 
\begin{align*}
    \| \x + (\rad + \epsilon) \bv \| ^2 
 &= \|\x\|^2 + 2 (\rad + \epsilon) \langle\x,\bv\rangle + (\rad + \epsilon)^2 \|\bv\|^2 \\
 &\ge 0 + 0 + (\rad + \epsilon)^2 
 > \rad^2,
\end{align*}
whence $\x + (\rad+\epsilon)\bv \notin \ball $, proving the first claim. To prove the second claim, fix an arbitrary $\x \in P\cap \ball$ and consider the half line $\ell := \{\x - t\bv: t \ge 0\} $. Note that $\ell$ must intersect the boundary $\partial P$; otherwise we would have $\ell \subseteq P$ and hence the separating hyperplane property implies that $ \langle \x -t\bv, \bv \rangle \ge 0$ for all $t\ge 0$, which cannot hold as $\bv$ has unit norm. Since $P$ is convex, $\ell$ intersects $\partial P$ at a unique point, say $\x_0 =\x - t_0 \bv$. We must have $ t_0 \le \rad $; otherwise we would have $\x = \x_0 + t_0 \bv \in P + (\rad + \epsilon) \bv $ for some $\epsilon>0$ and hence $\x \notin \ball$ by the first claim, which is a contradiction. Using the separating hyperplane property $0 \le \langle \x-t_0\bv,\bv \rangle \le \langle \x,\bv \rangle $ again, we have 
$$\|\x_0\|^2 = \|\x - t_0 \bv \|^2 
= \|\x\|^2 + \langle -t_0\bv, \x \rangle + \langle \x - t_0 \bv, -t_0 \bv \rangle 
\le \rad^2 + 0 + 0
$$ 
and thus $\x_0 \in \ball \cap \partial P$. Combining pieces, we conclude that  $\x = \x_0 + t_0 \bv \in (\partial P \cap \ball) + L_\rad $. As $\x \in P \cap \ball$ is arbitrary, we have  $P\cap\ball \subseteq (\partial P \cap \ball) + L_\rad $ as claimed. 
\end{proof}

\subsection{\label{sec:control_dist_center}Controlling the distance to the center}

In this section, we prove the following result:
\begin{lemma}[Bound on the center of mass]
\label{lem:bound-dist-2-center-of-mass} Let $\mu$ be the uniform
distribution over the ball $\ball_{\bzero}(\rad)\subset\real^{d}$.
Suppose that a subset $S\subset\real^{d}$ has probability measure $\mu(S)>0$.
Let $\bc_{S}$ be the center of mass of the set $S$ with respect to
$\mu$. We have the bound
\begin{equation*}
\|\bc_{S}\|\leq\frac{\rad\cdot\mu(\mathbb{\real}^{d}\setminus S)}{\mu(S)}.\label{eq:bound-center-mass}
\end{equation*}
\end{lemma}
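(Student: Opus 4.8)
The plan is to exploit the symmetry of the ball: its center of mass with respect to $\mu$ is the origin, so $\int_{\ball_{\bzero}(\rad)} \x \, \ddup\mu(\x) = \bzero$. Since $\mu$ is supported on $\ball_{\bzero}(\rad)$, we may restrict all integrals to the ball, and in particular $\mu(\real^d \setminus S) = \mu(\ball_{\bzero}(\rad) \setminus S)$.

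The key step is the decomposition
\[
\int_{S} \x \, \ddup\mu(\x)
= \int_{S \cap \ball_{\bzero}(\rad)} \x \, \ddup\mu(\x)
= \int_{\ball_{\bzero}(\rad)} \x \, \ddup\mu(\x) - \int_{\ball_{\bzero}(\rad) \setminus S} \x \, \ddup\mu(\x)
= - \int_{\ball_{\bzero}(\rad) \setminus S} \x \, \ddup\mu(\x),
\]
where the last equality uses that the origin is the center of mass of the full ball. Dividing by $\mu(S) > 0$ and applying the triangle inequality for integrals, we get
\[
\|\bc_S\| = \frac{1}{\mu(S)} \left\| \int_{\ball_{\bzero}(\rad) \setminus S} \x \, \ddup\mu(\x) \right\|
\le \frac{1}{\mu(S)} \int_{\ball_{\bzero}(\rad) \setminus S} \|\x\| \, \ddup\mu(\x).
\]
Finally I would bound $\|\x\| \le \rad$ pointwise on $\ball_{\bzero}(\rad)$, which yields $\|\bc_S\| \le \frac{\rad \cdot \mu(\ball_{\bzero}(\rad) \setminus S)}{\mu(S)} = \frac{\rad \cdot \mu(\real^d \setminus S)}{\mu(S)}$, as claimed.

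There is no real obstacle here; the only point requiring a line of care is the bookkeeping that $\mu$ places no mass outside $\ball_{\bzero}(\rad)$, so that $\real^d \setminus S$ and $\ball_{\bzero}(\rad) \setminus S$ have the same $\mu$-measure and the complement integral is over a bounded region on which $\|\x\| \le \rad$.
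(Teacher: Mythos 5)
Your proposal is correct and follows essentially the same argument as the paper: both use that $\mu$ has mean $\bzero$ to rewrite $\int_S \x\,\ddup\mu$ as $-\int_{\real^d\setminus S}\x\,\ddup\mu$, then apply the triangle inequality for integrals and the pointwise bound $\|\x\|\le\rad$ on the support of $\mu$. No gaps.
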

\begin{proof}
Recall the expression of the center of mass $\bc_{S}=\frac{\int\x\indic_{S}(\x)\ddup\mu}{\mu(S)}$.
We have 
\begin{align*}
\mu(S)\cdot\|\bc_{S}\| & \overset{}{=}\|\int\x\indic_{S}(\x)\ddup\mu\|\\
 & \overset{\text{(i)}}{=}\|-\int\x\indic_{\real^d\setminus S}(\x)\ddup\mu\|\\
 & \overset{\text{(ii)}}{\le}\int\|\x\|\indic_{\real^d\setminus S}(\x)\ddup\mu\\
 & \overset{\text{(iii)}}{\le}\rad\mu(\real^{d}\setminus S),
\end{align*}
where step (i) holds since $\mu$ has mean $\bm{0}$, and step (ii)
holds by the Jensen's inequality, and step (iii) holds because $\|\x\|\leq\rad$
for all $\x\in \text{support}(\mu) =\ball_{\bzero}(\rad)$. Rearranging the inequality proves
the desired bound. 
\end{proof}

\begin{lemma}[Bound on the center of mass, Gaussian case]
\label{lem:bound-dist-2-center-of-mass-gaussian} Let $\mu$ be the
Gaussian distribution $\mathcal{N}(\bm{0},\sigma^{2}\bm{I}_d)$.
Suppose that a subset $S\subset\real^{d}$ has probability measure $\mu(S)>0$.
Let $\bc_{S}$ be the center of mass of the set $S$ with respect to
$\mu$. We have the bound
\begin{equation*}
\|\bc_{S}\|\leq\frac{2\sigma\cdot\sqrt{\mu(\mathbb{\real}^{d}\setminus S)}}{\mu(S)}.\label{eq:bound-center-mass-1}
\end{equation*}
\end{lemma}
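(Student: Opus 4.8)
The plan is to mimic the proof of Lemma~\ref{lem:bound-dist-2-center-of-mass}, replacing the crude estimate $\|\x\|\le\rad$ (which is unavailable here, as the Gaussian has unbounded support) by an application of the Cauchy--Schwarz inequality. The crucial point is to apply Cauchy--Schwarz \emph{after projecting onto the direction of $\bc_S$}, so that the resulting second-moment term evaluates to $\sigma^2\|\bc_S\|^2$ rather than $\E\|\x\|^2=d\sigma^2$; this is what keeps the bound dimension-free and produces the $\sigma$ (as opposed to $\sigma\sqrt{d}$) scaling.

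Concretely, one may assume $\bc_S\neq\bzero$, since otherwise the inequality is trivial. Recall $\int\x\,\indic_{S}(\x)\,\ddup\mu=\mu(S)\,\bc_S$. Because $\mu$ has mean $\bzero$, we have $\int\x\,\indic_{\real^d\setminus S}(\x)\,\ddup\mu=-\int\x\,\indic_{S}(\x)\,\ddup\mu=-\mu(S)\,\bc_S$. Taking the inner product of this identity with $\bc_S$ gives
\[
\mu(S)\,\|\bc_S\|^2=-\int\langle\bc_S,\x\rangle\,\indic_{\real^d\setminus S}(\x)\,\ddup\mu .
\]
Applying the Cauchy--Schwarz inequality to the right-hand side yields
\[
\mu(S)\,\|\bc_S\|^2\le\Big(\int\langle\bc_S,\x\rangle^2\,\ddup\mu\Big)^{1/2}\Big(\int\indic_{\real^d\setminus S}(\x)\,\ddup\mu\Big)^{1/2}.
\]
Since $\x\sim\mathcal{N}(\bzero,\sigma^2\bm{I}_d)$, the linear functional $\langle\bc_S,\x\rangle$ is $\mathcal{N}(0,\sigma^2\|\bc_S\|^2)$-distributed, so $\int\langle\bc_S,\x\rangle^2\,\ddup\mu=\sigma^2\|\bc_S\|^2$; moreover $\int\indic_{\real^d\setminus S}\,\ddup\mu=\mu(\real^d\setminus S)$. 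Substituting these and dividing through by $\|\bc_S\|\,\mu(S)>0$ gives $\|\bc_S\|\le\sigma\sqrt{\mu(\real^d\setminus S)}/\mu(S)$, which is stronger than the stated bound (the factor $2$ is slack).

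There is no serious obstacle here; the only thing to get right is the decision to pair with the direction $\bc_S$ before invoking Cauchy--Schwarz, which is what avoids a spurious $\sqrt{d}$ factor. As an alternative route, one could instead use the rotational invariance of the isotropic Gaussian to rotate $\bc_S$ onto the first coordinate axis and reduce to the one-dimensional second-moment computation $\E[x_1^2]=\sigma^2$; this gives the same conclusion. I would also note in passing the trivial edge case $\mu(S)=1$, where $\bc_S=\bzero$ and both sides vanish.
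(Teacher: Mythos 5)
Your proof is correct, and it is in fact slightly sharper and more direct than the paper's. Both arguments culminate in the same key step — applying Cauchy--Schwarz to $\int\langle\x,\bc_{S}\rangle\,\indic_{\real^{d}\setminus S}(\x)\,\ddup\mu$ after pairing with the direction $\bc_{S}$, so that the second moment evaluates to $\sigma^{2}\|\bc_{S}\|^{2}$ rather than $d\sigma^{2}$ — but you reach the identity $\mu(S)\,\|\bc_{S}\|^{2}=-\int\langle\bc_{S},\x\rangle\,\indic_{\real^{d}\setminus S}(\x)\,\ddup\mu$ directly from the first-moment relation $\int\x\,\indic_{\real^{d}\setminus S}\,\ddup\mu=-\mu(S)\,\bc_{S}$ (valid because $\mu$ is centered). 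The paper instead derives the weaker inequality $\mu(S)\,\|\bc_{S}\|^{2}\le-2\int\langle\x,\bc_{S}\rangle\,\indic_{\real^{d}\setminus S}\,\ddup\mu$ via a second-moment computation using the variational characterization of $\bc_{S}$ as the minimizer of $\z\mapsto\int\|\x-\z\|^{2}\indic_{S}(\x)\,\ddup\mu$; that detour is the sole source of the factor $2$ in the stated bound. Your route is more elementary (no variational characterization needed), yields the constant $1$ in place of $2$, and of course implies the lemma as stated; the edge cases $\bc_{S}=\bzero$ and $\mu(S)=1$ are handled exactly as you note.
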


\begin{proof}
Recall the variational characterization of the center of mass $\bc_{S}$:

Since $\bzero$ is the mean of $\mu$, we have 
\begin{align}
0 \le \|\bc_{S}\|^{2}
= & \int\|\x-\bc_{S}\|^{2} \ddup\mu-\int\|\x\|^{2}\ddup \mu \nonumber \\
= & \int\|\x-\bc_{S}\|^{2} \indic_{S}(\x) \ddup\mu-\int\|\x\|^{2} \indic_{S}(\x) \ddup\mu+\int\|\x-\bc_{S}\|^{2} \indic_{\real^d\setminus S}(\x)\ddup\mu-\int\|\x\|^{2} \indic_{\real^d \setminus S}(\x) \ddup\mu\nonumber \\
\overset{\text{(i)}}{\leq} & \int\|\x-\bc_{S}\|^{2} \indic_{\real^d \setminus S}(\x) \ddup\mu-\int\|\x\|^{2} \indic_{\real^d \setminus S}(\x) \ddup\mu\nonumber \\
= & \mu(\real^{d}\backslash S)\|\bc_{S}\|^{2}-2\int\langle\x,\bc_{S}\rangle  \indic_{\real^d \setminus S}(\x) \ddup\mu.\label{eq:sec22upperbound}
\end{align}
where step (i) follows from the variational characterization of the center of mass 
\[
\bc_{S}=\text{argmin}_{\z\in\real^{d}}\int \|\x-\z\|^{2} \indic_{S}(\x) \ddup\mu.
\]
Rearranging equation~\eqref{eq:sec22upperbound} gives
\begin{align*}
\mu(S)\|\bc_{S}\|^{2}
&\leq -2\int\langle\x,\bc_{S}\rangle \indic_{\real^d \setminus S}(\x) \ddup\mu \\
& \overset{\text{(ii)}}{\leq} 2\|\bc_{S}\|\sqrt{\int \Big\langle\x,\frac{\bc_{S}}{\|\bc_{S}\|}\Big\rangle^{2} \ddup\mu} \cdot \sqrt{\int \indic_{\real^d \setminus S}(\x) \ddup\mu}\\
 & \overset{\text{(iii)}}{=} 2\|\bc_{S}\| \sigma \cdot \sqrt{\mu(\real^{d}\setminus S)},
\end{align*}
where step (ii) follows from Cauchy-Schwarz and step (iii) follows from the fact that any one-dimensional margin of $\mathcal{N}(\bzero,\sigma^2\bm{I}_d)$ is the univariate Gaussian distribution $\mathcal{N}(0,\sigma^2)$.  Rearranging the above equation proves the desired bound. 
\end{proof}

\section{\label{sec:proof_main_gaussian}Proof of Theorem~\ref{thm:main_gaussian}}

As before, we use $\mP$ to denote the probability measure with respect to $f$ and $\mP_{s}$ to denote the probability measure with respect to $f_{s}$, where $f$ is the density of the Gaussian mixture and $f_{s}$ is the density of the $s$-th Gaussian component.  Recall the population $k$-mean objective function defined in~\eqref{eq:pop-kmean}:
\begin{align*}
G(\bbeta) & =\int_{\x}\min_{j\in[k]}\|\x-\bbeta_{j}\|^{2}f(\x)\ddup\x
  =\frac{1}{k}\sum_{s=1}^{k}\int_{\x}\min_{j\in[k]}\|\x-\bbeta_{j}\|^{2}f_{s}(\x)\ddup\x.
\end{align*}

\paragraph{Reduction to lower dimensions:}

We first argue that it suffices to prove the theorem in dimension $d' \le 2k$. Once this is established, then the theorem for  $d > 2k$ dimensions can be deduced as follows. Suppose that $\bbeta^{*} \in \real^{d\times k}$ is the ground truth solution and $\bbeta \in \real^{d\times k}$ is a candidate solution. We may choose a coordinate system such that the first $d'=2k$ dimensions correspond to $\text{span}\{\bbeta^{*}_1,\ldots,\bbeta^{*}_k, \bbeta_1,\ldots,\bbeta_k\}$. In this case,  for each $i\in[k]$ we have $\bbeta^{*}_i = (\bbeta^{*\prime}_i,\bzero)$ and $\bbeta_i = (\bbeta'_i, \bzero)$ for some $\bbeta^{*\prime}_i, \bbeta'_i \in \real^{d'}$. Moreover, thanks to Gaussian's rotational invariance, the $d$-dimensional Gaussian mixture is a product distribution with respect to the first $d'$ dimensions and the last $d-d'$ dimensions, where the first $d'$-dimensional margin is itself a Gaussian mixture. Indeed, for any $\x = (\x', \z) \in \real^{d}$ with $\x'\in\real^{d'}$ and $\z \in \real^{d-d'}$, the density of the Gaussian mixture factorizes:
\begin{align*}
    f(\x) &\propto \frac{1}{k} \sum_{s=1}^k \exp\bigg(\frac{\| \x - \bbeta^{*}_s \|^2}{2\std^2}\bigg) \\
    &= \frac{1}{k} \sum_{s=1}^k \exp\bigg( \frac{\| (\x', \z) - (\bbeta^{*\prime}_s, \bzero) \|^2}{2\std^2}\bigg) \\
    &= \bigg[\frac{1}{k} \sum_{s=1}^k \exp\bigg( \frac{\| \x' - \bbeta^{*\prime}_s \|^2}{2\std^2}\bigg) \bigg]  \cdot \exp\bigg(\frac{\|\z\|^2}{2\std^2}\bigg).
\end{align*}
Now, if $\bbeta$ is a local minimum of $G$, then $\bbeta'$ is also a local minimum of $G$ restricted to the first $d'=2k$ dimensions. Applying the theorem with dimension $d'$, we obtain bounds on the quantities $\|\bbeta'_i - \bbeta^{*\prime}_s \| $, $\| \bbeta'_i - \sum_{s\in S} \bbeta^{*\prime}_s \|$ and $\mP (\vor_i(\bbeta'))$. We claim that these three quantities are equal to $\|\bbeta_i - \bbeta^{*}_s \| $, $\| \bbeta_i - \sum_{s\in S} \bbeta^{*}_s \|$ and $\mP (\vor_i(\bbeta))$, respectively. Indeed, the first two equalities are immediate under our coordinate system; the last equality holds because the Gaussian mixture factorzes (shown above) and so do the Voronoi sets:  $\vor_i(\bbeta) = \vor_i(\bbeta') \times \real^{d-d'}$. We conclude that the same collection of bounds hold in dimension $d$ as well. In the rest of the proof, we can safely assume that $d \le 2k$, in which case $\min\{2k, d\} = d$.\\

As in the proof for the Stochastic Ball Model, we establish a general result, analogous to analogue of Theorem~\ref{prop:ball-family-bounds}, that provides a family of bounds parametrized by two numbers $\lambda, t>0$. To state this result, we introduce some additional notation. Let
\[
\ball_{s}(\rad):=\{\x\in\real^{d}:\|\x-\bbeta_{s}^{*}\|\leq \rad\}.
\]
denote the ball centered at $\bbeta_s^*$ with radius $\rad=t\sigma\sqrt{d}$. Recall that for each $i,j,s\in[k]$, the  Voronoi boundary $\partial_{i,j}$ lies in a $(d-1)$-dimensional affine subspace $\mathcal{L}$. Since the distribution $f_s$ of the $s$-th component is rotationally invariant, we may assume WLOG that $\mathcal{L}=\{\x\in \real^{d}: x_1 = z \}$ for some number $z$. Accordingly, we define the quantity
\begin{align}
    \rho_s(\partial_{i,j}):=\int  \indic\big\{(z, x_2, \ldots, x_d)\in\partial_{i,j}\cap \ball_{s}(\rad)\big\} \cdot f_s(z, x_2, \ldots, x_d) \, \ddup x_2\ldots \ddup x_d \label{eq:general-rho},
\end{align}
which is a measure of the relative probability mass of the Voronoi boundary $\partial_{i,j}$ when restricted to the ball $\ball_{s}(\rad)$.\footnote{Note that when $f_s$ is the uniform distribution over $\ball_s(\rad)$, the definition here reduces to $\rho_s(\partial_{i,j}) = \frac{1}{V_d\rad^d}\revol(\partial_{i,j}\cap \ball_s(\rad))$ and hence is consistent with our previous definition in the ball model.}
Also recall the function $\tail(\cdot)$ defined in the statement of Theorem~\ref{thm:main_gaussian}, which satisfies $\tail(t) = 2\exp(-t^{2}d/8)$ when $d\le 2k$.

With the above notations, we have the following result, which is an analogue of Theorem~\ref{prop:ball-family-bounds}. 
\begin{theorem}[Family of bounds for Gaussian] 
\label{prop:gaussian-family-bounds}
Under the Gaussian mixture model, let $\bbeta=(\bbeta_1,\ldots, \bbeta_k)$ be a local minima for the $k$-means objective function~G defined in~\eqref{eq:pop-kmean}. Let $\lambda>0$ and $t>0$ be two arbitrary fixed numbers and set  $\rad:=t\sigma\sqrt{d}$. For each $i\in[k]$, define the sets:
\begin{align*}
    T_i: = \big\{s\in [k]:\vor_i\cap \ball_s(\rad)\neq\emptyset\big\}
    \quad\text{and}\quad
    \Wint_i:=\big\{s\in [k]:\bbeta_s^* \in \textup{int}(\vor_i)\big\} \subseteq T_i.
\end{align*}
Then the following is true for each $i\in [k]$:
\begin{enumerate}
    \item If  $\rho_s(\partial_{j,\ell})>\lambda$ for some pair $(j,\ell)$ and $s\in T_i$, then 
    \begin{align*}
        \|\bbeta_i-\bbeta_s^*\|\leq \frac{k}{\lambda}+3\rad.
    \end{align*}
    \item For each $s\in T_i$, if $\rho_s(\partial_{j,\ell}) \le \lambda$ for all pairs $(j,\ell)$, then the following bounds hold 
    \begin{align*}
        \mP_s(\vor_i)\geq & 1-k^2\lambda\rad-\tail(t), \quad \forall s\in \Wint_i,\\
        \mP_s(\vor_i)\leq & k\lambda + \tail(t), \;\;\qquad\quad \forall s\in T_i \setminus \Wint_i.
    \end{align*}
    Furthermore, if $\rho_s(\partial_{j,\ell})\leq \lambda$ for all pair $(j,\ell)$ and and $s\in T_i$, then:
    \begin{enumerate}
        \item When $|\Wint_i|=0$, we have
    \begin{align*}
        \mP(\vor_i)\leq k\lambda\rad + \tail(t).
    \end{align*}
    \item When $|\Wint_i|>0$, we have 
    \begin{align*}
      \|\bbeta_i-\bb_i\| \leq\frac{4k\sigma}{1-k^{2}\lambda\rad-\tail(t)}+\frac{2k^{2}\sigma\sqrt{k^{2}\lambda\rad+\tail(t)}(k\lambda\rad+\tail(t))}{(1-k^{2}\lambda\rad-\tail(t))^{2}} +\frac{3k^{2}\lambda\rad+3k\tail(t)}{1-k^{2}\lambda\rad-\tail(t)}\deltamax,
    \end{align*} 
    where $\bb_i:= \frac{1}{|\Wint_i|}\sum_{s\in \Wint_i}\bbeta_s^*$.
    \end{enumerate}
\end{enumerate}
\end{theorem}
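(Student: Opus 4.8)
The plan is to reduce Theorem~\ref{prop:gaussian-family-bounds} to the ball-model analysis already carried out for Theorem~\ref{prop:ball-family-bounds}, treating the bulk of each Gaussian component inside $\ball_s(\rad)$ with $\rad = t\sigma\sqrt{d}$ as a bounded distribution and the Gaussian tail outside $\ball_s(\rad)$ as a controlled error term. Since the reduction to $d\le 2k$ has already been handled in the excerpt, I would work in dimension $d\le 2k$ throughout, so that $\min(2k,d)=d$ and $\tail(t)=2\exp(-t^2 d/8)$. The key point is that $\mP_s\big(\real^d\setminus\ball_s(\rad)\big)\le \tail(t)$ by a standard Gaussian norm concentration bound (the squared norm $\|\x-\bbeta_s^*\|^2/\sigma^2$ is $\chi^2_d$, so $\mP_s(\|\x-\bbeta_s^*\|>t\sigma\sqrt d)$ is exponentially small in $t^2 d$); this is exactly the fact encoded in the definition of $\tail$.

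First I would re-run the directional-derivative machinery of Section~\ref{sec:prelim} verbatim: Lemmas~\ref{lem:differentiability}, \ref{lem:necessary}, \ref{lem:core-upper-bound} and the decomposition~\eqref{eq:H_decompose}--\eqref{eq:H_upper_bound} hold for any mixture density $f$ with finite second moment, hence apply to the Gaussian mixture without change. In particular the necessary condition~\eqref{eq:center-mass} still holds: each $\bbeta_i$ is the $f$-center of mass of its Voronoi set. Next I would re-prove the analogues of Propositions~\ref{prop:same-direction} and~\ref{prop:opposite-direction}: for the same/opposite perturbation directions of~\eqref{eq:direction_choice}, the slab-integral argument goes through with $\rho_s(\partial_{i,j})$ now defined by~\eqref{eq:general-rho} as the restriction of $f_s$ to $\partial_{i,j}\cap\ball_s(\rad)$ rather than the bulk of the full hyperplane slice — the only subtlety is that I must restrict the swept region $\Delta_{i\to j}^{\bv}(t)$ to $\ball_s(\rad)$, paying a $\tail(t)$ error for the part outside, which is dominated because the integrand $\|\x-\bbeta_j-t\bv_j\|^2-\|\x-\bbeta_i-t\bv_i\|^2$ is $O(t)$ and nonpositive. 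This yields the same second-order inequalities~\eqref{eq:relation-1} and~\eqref{eq:relation-2}, namely $d_{i,j}\rho_s(\partial_{i,j})\le k/2$ and $D_{i,j,s}^2\rho_s(\partial_{i,j})/d_{i,j}\le k/2$, with $D_{i,j,s}$ now the within-hyperplane distance from $\tfrac12(\bbeta_i+\bbeta_j)$ to $\ball_s(\rad)\cap\partial_{i,j}$. From here Part~1 (the many/one-fit-one bound $\|\bbeta_i-\bbeta_s^*\|\le k/\lambda+3\rad$) follows exactly as in Case~1 of Section~\ref{sec:proof-ball-family-bounds}, using that $s\in T_i$ gives a witness point in $\ball_s(\rad)\cap\vor_i$ within distance $\rad$ of $\bbeta_s^*$.

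For Part~2 I would mimic Case~2 of Section~\ref{sec:proof-ball-family-bounds} but insert $\tail(t)$ wherever the bounded support of the ball model was used. The volume-control Lemma~\ref{lem:prob-intersection} becomes: if $\bbeta_s^*\notin\textup{int}(\vor_i)$ and every facet $F$ of $\vor_i$ has $\rho_s(F)\le\lambda$, then $\mP_s(\vor_i\cap\ball_s(\rad))\le k\lambda\rad$, and adding back the tail gives $\mP_s(\vor_i)\le k\lambda\rad+\tail(t)$; the shifting-the-boundary argument of that lemma applies to the truncated-and-renormalized uniform measure on $\ball_s(\rad)$, or more directly one bounds the Gaussian mass of the shifted slab. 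Summing the complementary bound over $j\ne i$ yields $\mP_s(\vor_i)\ge 1-k^2\lambda\rad-\tail(t)$ for $s\in\Wint_i$, giving the first two displayed bounds and Part~2(a). For Part~2(b), I would redo the decomposition~\eqref{eq:A and B} of $\bbeta_i-\bb_i$ into $\bmu$ and $\bnu$, but now bound $\|\bbeta_s^*-\bc_{i,s}\|$ via the Gaussian center-of-mass Lemma~\ref{lem:bound-dist-2-center-of-mass-gaussian}, which gives $\|\bbeta_s^*-\bc_{i,s}\|\le 2\sigma\sqrt{\mP_s(\real^d\setminus\vor_i)}/\mP_s(\vor_i)$ — note the $\sqrt{\cdot}$ and the $\sigma$ rather than $\rad$, which is precisely why the bound in Part~2(b) has the form $\tfrac{4k\sigma}{1-\cdots}+\tfrac{2k^2\sigma\sqrt{k^2\lambda\rad+\tail(t)}(k\lambda\rad+\tail(t))}{(1-\cdots)^2}+\tfrac{3k^2\lambda\rad+3k\tail(t)}{1-\cdots}\deltamax$ rather than the ball-model shape; plugging $\mP_s(\real^d\setminus\vor_i)\le k^2\lambda\rad+\tail(t)$ for $s\in\Wint_i$ and $\mP_s(\vor_i)\ge 1-k^2\lambda\rad-\tail(t)$, and using $|\Wint_i|,|T_i\setminus\Wint_i|\in[1,k]$ and $\|\bbeta_s^*\|\le\deltamax$, delivers the stated bound after the same elementary triangle-inequality bookkeeping as in Lemmas~\ref{lem:bound-A} and~\ref{lem:bound-B}.

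I expect the main obstacle to be the bookkeeping of the $\tail(t)$ error in the two analogues of Propositions~\ref{prop:same-direction} and~\ref{prop:opposite-direction}: one must show that restricting the swept sets $\Delta_{i\to j}^{\bv}(t)$ to $\ball_s(\rad)$ changes the relevant $t^2$-coefficient only through the modified $\rho_s$ (which already has the tail truncation baked into~\eqref{eq:general-rho}) and contributes nothing extra to the first- and second-order limits — i.e., that the tail part of the integral is $o(t^2)$ uniformly, so that the smooth upper bounds $\widetilde W$, $\widehat W$ still have vanishing first derivative and the stated second derivative at $0$. Once that is pinned down, the derivation of Theorem~\ref{thm:main_gaussian} from Theorem~\ref{prop:gaussian-family-bounds} is essentially the same three-step partition construction (almost-empty / many-or-one-fit-one / one-fit-many) as in Section~\ref{sec:proof_main_ball}, with $\lambda$ chosen as $\tfrac{c\sqrt t}{\sqrt{\rad\deltamax}}$ and the separation assumptions $\snrmax\ge 16c^2k^4 t$, $\snrmin\ge 8c\sqrt t k^2\sqrt{\snrmax}+7k\tail(t)\snrmax$ ensuring $k^2\lambda\rad+\tail(t)<1/2$ and killing the cross-terms via Observations analogous to~\ref{pro1} and~\ref{pro2}, together with a Gaussian analogue of Lemma~\ref{lem:prox-center}.
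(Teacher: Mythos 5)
Your proposal matches the paper's proof essentially step for step: the same truncation of each Gaussian component to $\ball_s(\rad)$ with the $\chi^2$ tail bound giving $\tail(t)$, the same observation that Propositions~\ref{prop:same-direction} and~\ref{prop:opposite-direction} carry over with the redefined $\rho_s$ (the paper handles the truncation even more simply than you suggest, noting that restricting the non-positive integrand to the smaller set can only increase the upper bound, so no extra tail error enters there), the same two key second-order inequalities, and the same Case~1/Case~2 split with Lemma~\ref{lem:bound-dist-2-center-of-mass-gaussian} supplying the $\sigma\sqrt{\cdot}$-type center-of-mass bound in Part~2(b). The argument is correct and follows the paper's route.
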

We prove Theorem~\ref{prop:gaussian-family-bounds} in  Section~\ref{sec:sketch-gaussian-family-bounds}. Note that Theorem~\ref{prop:gaussian-family-bounds} is similar to Theorem~\ref{prop:ball-family-bounds} except that  the error bounds here have an  additional error term $\tail(t)$.

The procedure for deriving the main Theorem~\ref{thm:main_gaussian} from Theorem~\ref{prop:gaussian-family-bounds} is the same as that for the Stochastic Ball Model. In particular, with $\rad$ fixed to be $t\sigma\sqrt{d}$, we set $\lambda = \frac{c}{\sqrt{\rad\deltamax}}$. The assumptions on $t$, $\snrmax$ and $\snrmin$ ensures that $\lambda k^2\rad<\frac{1}{4}$ and $\tail(t) = 2\exp(-t^2d/8)<\frac{1}{4}$. In this case, Observations~\ref{pro1} and~\ref{pro2} also hold in the current setting. Following the same arguments as in the proof of Theorem~\ref{thm:main_ball}, we complete the proof of Theorem~\ref{thm:main_gaussian} for the Gaussian model. We omit the details.

\subsection{Proof of Theorem~\ref{prop:gaussian-family-bounds}}
\label{sec:sketch-gaussian-family-bounds}

To establish Theorem~\ref{prop:gaussian-family-bounds} for the Gaussian model, we follow the same strategy for proving Theorem~\ref{prop:ball-family-bounds} for the ball model. The only technical difficulty is that each Gaussian component distribution has unbounded support. Our main idea is to identify a bounded ball, namely $\ball_s(\rad)$, that contains most of the probability mass of the $s$-th Gaussian component.
Using a standard concentration inequality for $\chi^{2}$ random
variables (e.g., \cite[Example 2.28]{wainwright2019book}), we know
that when $t>2$, there holds the tail bound
\begin{align}
  \mP_{s}\Big(\ball_{s}(\rad)^{\complement}\Big)\leq \tail(t) = 2\exp(-t^{2}d/8), \label{eq:concentration-bound}
\end{align}
where $S^{\complement}$ denotes the  complement of a set  $S\subseteq\real^{d}$. By restricting each $s$-th ground truth component to the ball $\mB_{s}(r)$ and treating the tail mass in equation~\eqref{eq:concentration-bound} as additional error terms, we can repeat most of the arguments used in the proof of the ball model. In what follows, we sketch the analysis and point out the minor modifications needed to adapt the proof of Theorem~\ref{prop:ball-family-bounds} to the Gaussian case.\\

The main step in the proof for the Ball model involves constructing smooth upper bounds for the function $W_{i\to j,s}^{\bv}+W_{j\to i,s}^{\bv}$, as done in Proposition \ref{prop:same-direction} and
Proposition \ref{prop:opposite-direction}. These two propositions still hold in the Gaussian case under the definition~\eqref{eq:general-rho} of the ``relative volume'' $\rho_s(\partial_{i,j})$. In particular, the value of the integral defining $W_{i\to j,s}^{\bv}+W_{j\to i,s}^{\bv}$ does not increase if we restrict integration to the small set subset ($\ball_{s}(\rad)$), as the integrand is non-positive. 
Consequently, we can establish the two key inequalities~(\ref{eq:relation-1}) and~(\ref{eq:relation-2}), restated below:
\begin{align}
d_{i,j}\cdot\rho_{s}(\partial_{i,j})  \leq\frac{k}{2}
\qquad\text{and}\qquad
\frac{D_{i,j,s}^{2}}{d_{i,j}}\cdot\rho_{s}(\partial_{i,j}) & \leq\frac{k}{2}.
\label{eq:relation-gaussian}
\end{align}

We can then  derive the structural properties of a local minimum $\bbeta$ from the inequalities~(\ref{eq:relation-gaussian}).
As in the proof of Theorem~\ref{prop:ball-family-bounds}, for each $i\in[k]$ indexing the fitted center $\bbeta_i$ and its Voronoi set $\vor_i$, we consider two complementary cases.

\subsubsection*{Case 1: there exist some $(s,j,\ell)\in T_i \times [k]\times[k]$ such that $\rho_{s}(\partial_{j,\ell})>\lambda$.}

In this case, following exactly the same argument as in the Ball model proof, we can derive from the inequalities~(\ref{eq:relation-gaussian}) that $\|\bbeta_{i}-\bbeta_{s}^{*}\|\leq\frac{k}{\lambda}+3\rad$. This proves Part~1 of Theorem~\ref{prop:gaussian-family-bounds}.

\subsubsection*{Case 2: for all $(s,j,\ell)\in T_i \times [k]\times[k]$ there holds  $\rho_{s}(\partial_{j,\ell})\leq\lambda$.}

Recall that $m_{i,s}$ is the probability mass of $\vor_{i}$ with respect
to the Gaussian density $f_{s}$ and $\bc_{i,s}$ is the corresponding center of mass of $\vor_{i}$. If we restrict the density $f_s$ onto the ball $\ball_{s}(\rad)$, the values of $m_{i,s}$ and $\bc_{i,s}$ do not change much; in particular, we can control the amount of change using the tail bound~\eqref{eq:concentration-bound}.
With this in mind, we proceed by considering two sub cases.
\begin{itemize}
    \item Case 2(a): $\Wint_i = \emptyset$, in which case $T_i = \Wext_i$. Following the same argument for deriving equation~\eqref{eq:T2-mass} and accounting for the tail probability on $\ball_{s}(\rad)^{\complement}$, we obtain that
    \begin{equation}
    m_{i,s}\leq k\lambda\rad + \mP_{s}\big(\ball_{s}(\rad)^{\complement}\big) \leq k\lambda\rad + \tail(t), \qquad \forall s\in \Wext_i.
    \label{ineq: upper-bound-volume-Gaussian}
    \end{equation}
    It follows that 
    \[
    \mP(\vor_{i})=\frac{1}{k}\sum_{s\in T_i}m_{i,s}\leq k\lambda\rad + \tail(t).
    \]
    This proves Part~2(a) of Theorem~\ref{prop:gaussian-family-bounds}.
    
    \item Case 2(b): $\Wint_i \neq \emptyset$.  
    By Lemma$\ $\ref{lem:necessary}, $\bbeta$ must satisfy the expression
    \[
    \bbeta_{i}=\frac{\sum_{[s]}\bc_{i,s}m_{i,s}}{\sum_{s\in[k]}m_{i,s}}.
    \]
    Using this expression, we have the decomposition $\bbeta_i - \bb_i = \bmu + \bnu$ for some vectors $\bmu$ and $\bnu$ as in equation~\eqref{eq:A and B}. To bound $\bmu$ and $\bnu$, we follow our general strategy to  decompose the Gaussian density $f_s$ into two parts, one supported on the ball $\ball_{s}(\rad)$ and the other the tail, where the tail probability is bounded by $\tail(t)$ as in equation~\eqref{eq:concentration-bound}. 
    By doing so, we can establish analogous versions of the bounds~\eqref{eq:mis_bound} and~\eqref{eq:cis_bound} as given below:
    \begin{align*}\label{eq:mis_bound}
        \begin{cases}
        m_{i,s} \le k \lambda \rad + \tail(t), & \text{if } s\in \Wext_i, \\
        m_{i,s} \ge 1-k^2\lambda\rad - \tail(t), & \text{if } s \in \Wint_i,
        \end{cases}
    \end{align*}
    and
    \begin{align*}
        \|\bbeta_{s}^{*}-\bc_{i,s}\| \le \begin{cases}
        \frac{2\std }{m_{i,s}}, & \text{if }s\in\Wext_i,\\
        \frac{ 2\std \sqrt{k^{2}\lambda\rad + \tail(t)}}{m_{i,s}}, & \text{if } s\in\Wint_i,
        \end{cases}
    \end{align*}
    where the bound on $\|\bbeta_{s}^{*}-\bc_{i,s}\|$ follows  from Lemma \ref{lem:bound-dist-2-center-of-mass-gaussian}. Using the above two bounds, we can further establish analogous versions of Lemmas~\ref{lem:bound-A} and~\ref{lem:bound-B} as given below:
    \begin{align*}
    \| \bmu\| & \leq\frac{2k\sigma}{1-k^{2}\lambda\rad-\tail(t)}+\frac{2k^{2}(k\lambda\rad+\tail(t))\sigma\sqrt{k^{2}\lambda\rad+\tail(t)}}{(1-k^{2}\lambda\rad-\tail(t)))^{2}}
      +\frac{2k(k\lambda\rad+\tail(t))}{1-k^{2}\lambda\rad-\tail(t)}\deltamax,\\
    \| \bnu\| & \leq\frac{2k\sigma\sqrt{k^{2}\lambda\rad+\tail(t)}}{1-k^{2}\lambda\rad-\tail(t)}+\frac{k^{2}\lambda\rad+\tail(t)}{1-k^{2}\lambda\rad-\tail(t)}\deltamax.
    \end{align*}
    It follows that  an analogue of inequality~(\ref{eq:bound-dist-weighted-center}) holds:
    \begin{align*}
    \|\bbeta_{i}-\bb_{i}\| 
    \leq \| \bmu\|+ \| \bnu\| &\leq\frac{4k\sigma}{1-k^{2}\lambda\rad-\tail(t)}+\frac{2k^{2}\sigma\sqrt{k^{2}\lambda\rad+\tail(t)}(k\lambda\rad+\tail(t))}{(1-k^{2}\lambda\rad-\tail(t))^{2}} \\
      &\qquad+\frac{3k^{2}\lambda\rad+3k\tail(t)}{1-k^{2}\lambda\rad-\tail(t)}\deltamax. 
    \end{align*}
    This proves Part~2(b) of Theorem~\ref{prop:gaussian-family-bounds}.
\end{itemize}

\section{Additional Examples\label{sec:Additional Examples}}

In this section, we provide two concrete examples that give additional insights on the behaviors of the local minima of the $k$-means objective and corroborate the results in our main theorems.

Our main theorem assumes certain separation conditions in terms of the SNRs $\snrmin$ and $\snrmax$. The first example shows that if the SNR is too small, then a local minimum may fail to have the structures described in Theorem~\ref{thm:main_ball}. Therefore, a separation condition on the true clusters is in general necessary.

\begin{example}[Small Separation]
\label{ex:example1} Consider the Stochastic Ball Model with $k=3$ in dimension $d=1$, where the ground truth cluster centers are
$\beta_{1}^{*}=-1$, $\beta_{2}^{*}=0$ and $\beta_{3}^{*}=1$, and the radius $\rad$ of the balls  satisfies $(\frac{9\sqrt{2}}{2}-\frac{1}{4})\rad>1$. Let $\bbeta = (\beta_1, \beta_2, \beta_3)$ be a candidate solution with $\beta_{1}=-\frac{2}{3}-\frac{1}{6}\rad$, $\beta_{2}=\frac{2}{3}+\frac{1}{6}\rad$ and $\beta_3>0$ sufficiently large.
\end{example}

When $\beta_{3}$ large, the minimization $\min_{i\in[k]} \|x - \beta_i\|^2$ in the objective $G$ is never attained by $i=3$. In this case, the only effective variables for $G$ are the first two centers $\beta_1$ and $\beta_2$.  The Voronoi boundary $\partial_{1,2}(\bbeta)$ (which is $0$) intersects the second ground truth cluster. Note that these properties continue to hold under small perturbation of $\bbeta$. 
Consequently, for any solution $\bb$ in a small neighborhood 
of $\bbeta$, its objective value has the following expression:
\[
G(\bb)=\frac{1}{6\rad}\left[\int_{-1-\rad}^{-1+\rad}(x-b_{1})^{2}\ddup x+\int_{-\rad}^{\frac{b_{1}+b_{2}}{2}}(x-b_{1})^{2}\ddup x+\int_{\frac{b_{1}+b_{2}}{2}}^{\rad}(x-b_{2})^{2}\ddup x+\int_{1-\rad}^{1+\rad}(x-b_{2})^{2}\ddup x\right].
\]
We compute the gradient and Hessian of $G$ at $\bb$ (recall that only the first two coordinate of $\bb$ are effective):
\begin{align*}
\nabla_{\bb} G & =\frac{1}{6\rad}\left[\begin{array}{c}
-2\int_{-1-\rad}^{-1+\rad}(x-b_{1})\ddup x-2\int_{-\rad}^{\frac{b_{1}+b_{2}}{2}}(x-b_{1})\ddup x\\
-2\int_{\frac{b_{1}+b_{2}}{2}}^{\rad}(x-b_{2})\ddup x-2\int_{1-\rad}^{1+\rad}(x-b_{2})\ddup x
\end{array}\right],\\
\nabla_{\bb}^{2} G & =\frac{1}{6\rad}\left[\begin{array}{cc}
6\rad+(b_{1}+b_{2})-\frac{b_{2}-b_{1}}{2} & -\frac{b_{2}-b_{1}}{2}\\
-\frac{b_{2}-b_{1}}{2} & 6\rad-(b_{1}+b_{2})+\frac{b_{1}-b_{2}}{2}
\end{array}\right].
\end{align*}
Evaluating these expressions at $\beta_{1}=-\frac{2}{3}-\frac{1}{6}\rad$ and $\beta_{2}=\frac{2}{3}+\frac{1}{6}\rad$, we find that 
the gradient vanishes $\nabla_{\bb}G\mid_{\bb=\bbeta}=0$ and the Hessian  is
\[
\nabla_{\bb}^{2}G\mid_{\bb=\bbeta}=\frac{1}{6\rad}\left[\begin{array}{cc}
\frac{35}{6}r-\frac{2}{3} & -\frac{2}{3}-\frac{1}{6}\rad\\
-\frac{2}{3}-\frac{1}{6}\rad & \frac{37}{6}r+\frac{2}{3}
\end{array}\right].
\]
When $(\frac{9\sqrt{2}}{2}-\frac{1}{4}) \rad >1$ or equivalently $\snrmin<\frac{9\sqrt{2}}{2}-\frac{1}{4}$,  the Hessian is positive definite, so $\bbeta$ is a local minimum of $G$. Moreover, one can verify that $G(\bbeta) < G(\bbeta^*)$, so $\bbeta$ is not a global minimum. We  see that the spurious local minimum $\bbeta$ does not have the structures described in Theorem~\ref{thm:main_ball}, as $\bbeta$ involves a $2$-fit-$3$ association.\\

The second example shows that in higher dimensions, there exists a local minimum $\bbeta = (\bbeta_1, \bbeta_2, \bbeta_3)$ such that $\bbeta_1$ \emph{approximately} equals $\bbeta^*_1$, and  $\bbeta_2$ \emph{approximately} equals $(\bbeta^*_2+\bbeta^*_3)/2$ --- a structure guaranteed by Theorem~\ref{thm:main_ball} --- but neither approximation is exact. Therefore, the non-zero approximation errors that appear in Theorem~\ref{thm:main_ball}, is necessary in general.

\begin{example}[Approximation Errors]\label{ex:example2}
Consider the Stochastic Ball model with $k=3$ in dimension $d=2$, where the true cluster centers are $\bbeta_{1}^{*}=(-1,0)$, $\beta_{2}^{*}=(0,0)$
and $\beta_{3}^{*}=(1,0)$, where the radius $r$ of the balls satisfies $r\geq \frac{1}{4}$. Let $\bbeta $ be a candidate solution with  $\bbeta_{1}=(-1,0)$, $\bbeta_{2}=(\frac{1}{2},0)$ and
$\bbeta_{3}$ sufficiently far away from the origin. See 
the left panel of Figure~\ref{fig:3-Ball-clusters} for an illustration.%
\end{example}

\begin{figure}
\centering
\includegraphics[scale=0.3,clip, trim=0 250 0 100]{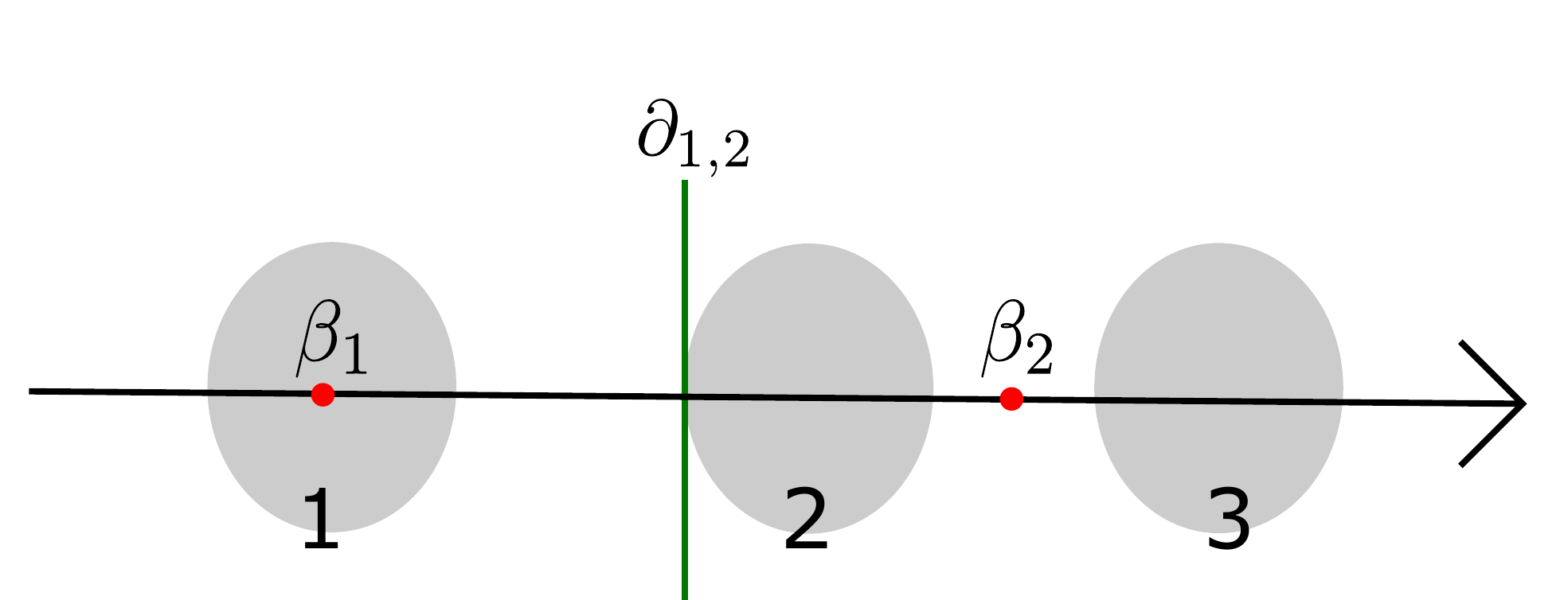}~~~~\includegraphics[scale=0.3, trim=0 250 0 100]{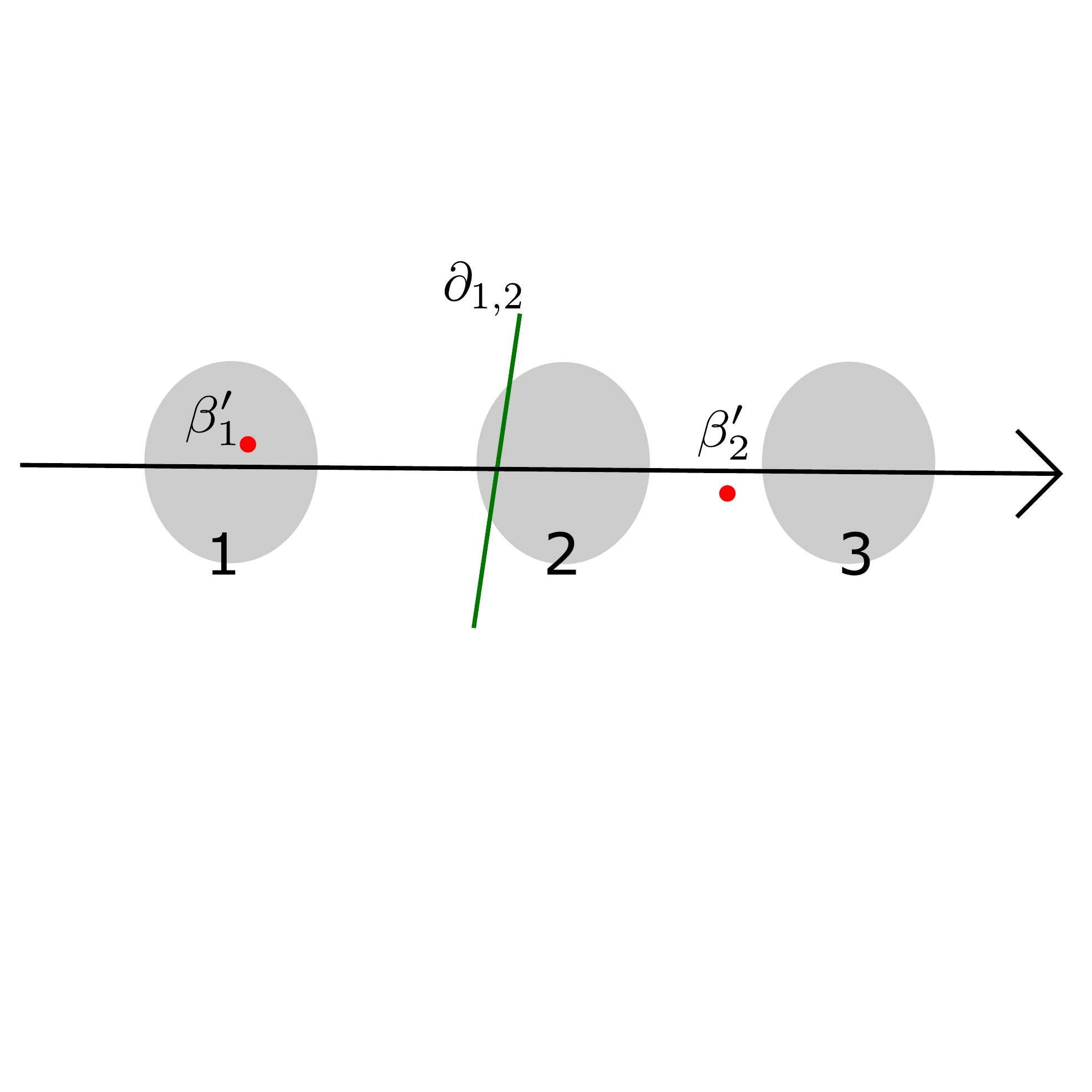}
\caption{\label{fig:3-Ball-clusters}
Two-dimensional Stochastic Ball Model, where the true centers are $ \bbeta_{1}^{*} = (-1,0)$, $ \bbeta_{2}^{*} = (0,0)$ and $ \bbeta_{3}^{*}=(1,0)$. The third fitted center $\bbeta_3$ is far away from the origin, so the only effective variables are $ \bbeta_{1}$ and $\bbeta_{2}$ . Left panel: $ \bbeta_{1}= \bbeta_{1}^{*}$ and $ \bbeta_{2}=\frac{1}{2}( \bbeta_{2}^{*}+ \bbeta_{3}^{*})$. The green line is the Voronoi boundary $\partial_{1,2}(\bbeta)$. Right panel: A perturbed solution $(\bbeta_1',\bbeta_2')$ and the corresponding Voronoi boundary. }
\end{figure}

As in Example~\ref{ex:example1}, here the only effective variables are $\bbeta_1$ and $\bbeta_2$. Assume first that $\rad=\frac{1}{4}$. In this case, the Voronoi boundary $\partial_{1,2}$ is at $x_1=-\frac{1}{4}=\bbeta_2^*-r$, the left boundary of the second true cluster. We claim that $\bbeta$ is a local minimum of $G$; the proof is deferred to the end of this section.
Now, let us increase the radius $\rad$ by a sufficient small amount, in which case the objective function becomes $\widetilde{G}$. By the continuity, there exists a local minimum $\widetilde{\bbeta}$ of $\widetilde{G}$ near the original local minima $\bbeta$. Recall that by Lemma~\ref{lem:necessary}, $\widetilde{\bbeta}_1$ and $\widetilde{\bbeta}_2$ must lie at the center of mass of their Voronoi sets $\vor_1(\widetilde{\bbeta})$ and $\vor_2(\widetilde{\bbeta})$, respectively. It is then not hard to see that the new Voronoi boundary  $\widetilde{\partial}_{1,2}$ corresponding to $\widetilde{\bbeta}$ necessarily intersects the interior of the second true cluster $\ball_2$. It follows that $\vor_1(\widetilde{\bbeta}) = \ball_1 \cup D$  and $\vor_2(\widetilde{\bbeta}) = (\ball_2 \cup \ball_3)\setminus D$  for some subset $D \subset \ball_2$ with a positive measure. Applying Lemma~\ref{lem:necessary} again, we conclude that $\widetilde{\bbeta}_1$ is close but not equal to $\bbeta_1^*$, and that $\widetilde{\bbeta}_2$ is close but not equal to $(\bbeta^*_1+\bbeta^*_2)/2$.

\begin{proof}[Proof of the claim]
Let $t \in (0,1/8)$ be a sufficiently small number, and $\bv_1,\bv_2\in \real^2$ be two arbitrary vectors satisfying $\|\bv_1\|^2+\|\bv_2\|^2=1$. Consider perturbing $\bbeta_1$ and $\bbeta_2$ to  $\bbeta_1'=\bbeta_1+t\bv_1$ and $\bbeta_2' = \bbeta_2+t\bv_2$, respectively. Since Voronoi sets only change by a small amount when the perturbation $t$ is small, we find that $\Delta_{2\to 1}^{\bv}(t)\subseteq \ball_2$ is the only set of points that change their association from one Voronoi set to another; see the right panel of Figure~\ref{fig:3-Ball-clusters}. 
Using the expression~\eqref{eq:H_decompose} for the directional $k$-means objective, we can write  $G(\bbeta')$ as
\begin{align*}
    G(\bbeta')= H^{\bv}(t) 
    = &\frac{1}{3}\left[\int_{\vor_1(\bbeta)\cap \ball_1} \|\x-\bbeta_1-t\bv_1\|^2 \ddup \x + \int_{\vor_2(\bbeta)\cap (\ball_2\cup \ball_3)} \|\x-\bbeta_2-t\bv_2\|^2 \ddup \x\right]\\
    &\qquad +\frac{1}{3}\int_{\Delta_{2\to 1}^{\bv}(t)}\big(\|\x -\bbeta_1-t\bv_1\|^2 -\|\x -\bbeta_2-t\bv_2\|^2\big)\ddup \x. 
\end{align*}
A quick calculation shows that 
\begin{align*}
    G(\bbeta')-G(\bbeta) = H^{\bv}(t) -H^{\bv}(0) =\frac{1}{3}\bigg[t^2 - \underbrace{\int_{\Delta_{2\to 1}^{\bv}(t)}\Big( \|\x -\bbeta_2-t\bv_1\|^2 -\|\x -\bbeta_1-t\bv_2\|^2 \Big)\ddup \x}_{K(t)} \bigg].
\end{align*}
We decompose the term $K(t)$ as follows:
\begin{align*}
    K(t)
    = & \int_{\Delta_{2\to 1}^{\bv}(t)}\underbrace{\Big(\|\x-\bbeta_2\|^2-\|\x-\bbeta_1\|^2\Big)}_{\kappa_1} \ddup \x + \int_{\Delta_{2\to 1}^{\bv}(t)} \underbrace{t^2 \Big(\|\bv_2\|^2-\|\bv_1\|^2\Big)}_{\kappa_2}\ddup \x \nonumber\\
    &\qquad +\int_{\Delta_{2\to 1}^{\bv}(t)} \underbrace{2t \Big( \langle \bv_1-\bv_2,\x\rangle + \langle \bv_2,\bbeta_2\rangle -\langle \bv_1,\bbeta_1\rangle \Big)}_{\kappa_3} \ddup \x 
\end{align*}
For all $\x\in \Delta_{2\to 1}^{\bv}(t) \subseteq \vor_2(\bbeta)$, we have $\|\x-\bbeta_2\|\leq \|\x-\bbeta_1\| $, hence $\kappa_1 \le 0$. We also have $\kappa_2 \le  t^2$ since $\|\bv_2\|^2-\|\bv_1\|^2\leq\|\bv_1\|^2\leq 1$. To bound $\kappa_3$ we observe that $\langle \bv_1-\bv_2,\x\rangle \leq \|\bv_1-\bv_2\|\|\x\|\leq 4\rad = 1$ for all $\x\in \Delta_{2\to 1}^{\bv}(t)\subseteq \ball_2$, $\langle \bbeta_2,\bv_2\rangle \leq \|\bbeta_2\|\|\bv_2\|\leq \frac{1}{2}$, and $\langle \bbeta_1,\bv_1\rangle \leq \|\bbeta_1\|\|\bv_1\|\leq 1$; it follows that $\kappa_3\le 5t.$ Combining pieces, we obtain that 
\begin{align}
    G(\bbeta')-G(\bbeta) \ge \big[t^2 - 6t \cdot \vol(\Delta_{2\to 1}^{\bv}(t)) \big] /3. \label{eq:example2_bound1}
\end{align} 

\begin{figure}[t]
    \centering
    \includegraphics[scale=0.3, clip, trim = 0 150 0 20]{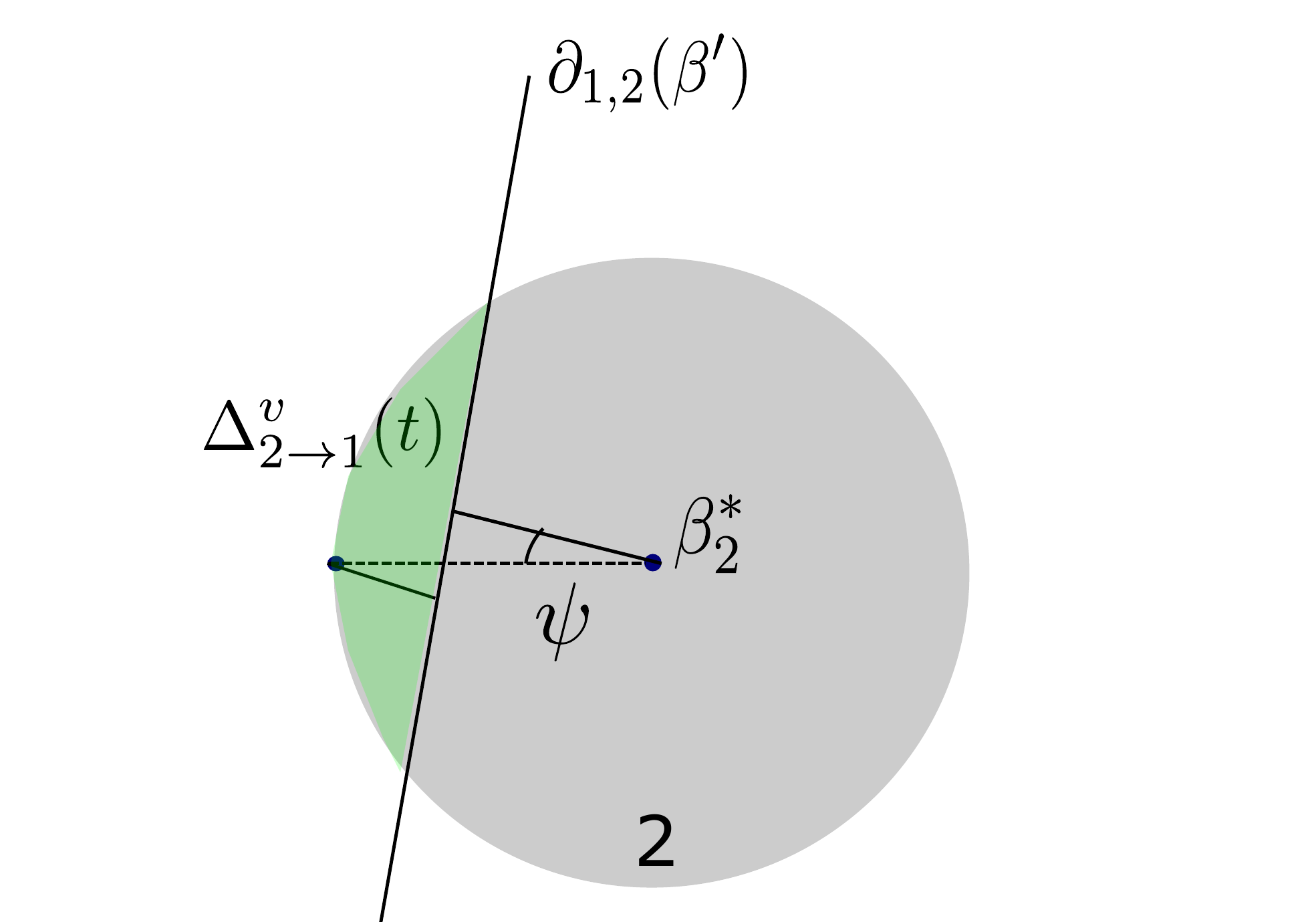}
    \caption{Illustration of $\Delta_{2\to 1}^{\bv}(t)$. The new boundary $\partial_{1,2}(\bbeta')$ has an angle $\psi$ with the original Voronoi boundary $\partial_{1,2}(\bbeta)$.
    }
    \label{fig:maximal-delta-area}
\end{figure}

It remains to control the volume of the set $\Delta_{2\to 1}^{\bv}(t)$, which is illustrated in Figure~\ref{fig:maximal-delta-area}.
Observe that the distance between the old mid point $(\bbeta_1+\bbeta_2)/2$ and the new Voronoi boundary $\partial_{1,2}(\bbeta')$ can be bounded as
\begin{align*}
    d_1 := \dist\left(\frac{\bbeta_1+\bbeta_2}{2}, \partial_{1,2}(\bbeta')\right) \le  \left\| \frac{\bbeta_1+\bbeta_2}{2} - \frac{\bbeta'_1+\bbeta'_2}{2} \right\| \le t.
\end{align*}
Moreover, the (unsigned) angle $\psi$ between the old and new Voronoi boundaries $\partial_{1,2}(\bbeta)$ and $\partial_{1,2}(\bbeta')$ satisfies
\begin{align*}
    \tan\psi = \left|\frac{t (v_{2,2}- v_{1,2})}{2+t(v_{2,1}-v_{1,1})} \right |\leq \frac{t}{1-t} \le 2t.
\end{align*}
From these two observations and the fact that $\rad=1/4$, elementary geometry shows that the distance $d_2$ between $\bbeta^*_2$ and $\partial_{1,2}(\bbeta')$ satisfies
\begin{align*}
    d_2 = \rad \cos \psi - d_1 = \frac{r}{\sqrt{1+\tan^2 \psi}} - d_1 \ge \frac{\rad }{1+2t}-4\rad \ge \rad(1-6t),
\end{align*}
whence
\begin{align*}
    \vol(\Delta_{2\to 1}^{\bv}(t)) \le 2 \cdot \sqrt{\rad^2 - d_2^2} \cdot (\rad-d_2) \le 12t\sqrt{12t}.
\end{align*}
Combining with equation~\eqref{eq:example2_bound1} shows that $G(\bbeta') > G(\bbeta)$ when $t$ is sufficiently small. As this inequality holds for arbitrary perturbation direction $(\bv_1,\bv_2)$, we conclude that $\bbeta$ is a local minimum of $G$. 
\end{proof}

\end{document}